\definecolor{claudesonnet}{HTML}{1E5F9E} 
\definecolor{claudehaiku}{HTML}{F67F10}
\theoremstyle{plain}
\newtheorem{theorem}{Theorem}[section]
\newtheorem{lemma}[theorem]{Lemma}
\theoremstyle{definition}
\newtheorem{definition}[theorem]{Definition}
\newtheorem{assumption}[theorem]{Assumption}
\theoremstyle{remark}
\title{Validating LLM-as-a-Judge Systems under \\ Rating Indeterminacy}
\author{%
  Luke Guerdan\thanks{Corresponding author: lguerdan@cs.cmu.edu. Work completed in part as an intern at Microsoft Research. } \\
  Carnegie Mellon University\\
  \And
  Solon Barocas \\
  Microsoft Research \\
  \AND
  Kenneth Holstein \\
  Carnegie Mellon University \\
  \And
  Hanna Wallach \\
  Microsoft Research \\
  \And
  Zhiwei Steven Wu \\
  Carnegie Mellon University \\
   \And
  Alexandra Chouldechova \\
  Microsoft Research \\
}
\begin{document}

\maketitle

\begin{abstract}
  The LLM-as-a-judge paradigm, in which a \emph{judge LLM system} replaces human raters in rating the outputs of other generative AI (GenAI) systems, plays a critical role in scaling and standardizing GenAI evaluations. To validate such judge systems, evaluators assess human--judge agreement by first collecting multiple human ratings for each item in a validation corpus, then aggregating the ratings into a single, per-item gold label rating. For many items, however, rating criteria may admit multiple valid interpretations, so a human or LLM rater may deem multiple ratings ``reasonable'' or ``correct''.  We call this condition \textit{rating indeterminacy}. Problematically, many rating tasks that contain rating indeterminacy rely on forced-choice elicitation, whereby raters are instructed to select only one rating for each item. In this paper, we introduce a framework for validating LLM-as-a-judge systems under rating indeterminacy. We draw theoretical connections between different measures of judge system performance under different human--judge agreement metrics, and different rating elicitation and aggregation schemes. We demonstrate that differences in how humans and LLMs resolve rating indeterminacy when responding to forced-choice rating instructions can heavily bias LLM-as-a-judge validation. Through extensive experiments involving 11 real-world rating tasks and 9 commercial LLMs, we show that standard validation approaches that rely upon forced-choice ratings select judge systems that are highly suboptimal, performing as much as 31\% worse than judge systems selected by our approach that uses multi-label ``response set'' ratings to account for rating indeterminacy. We conclude with concrete recommendations for more principled approaches to LLM-as-a-judge validation.\looseness=-1
\end{abstract}
\vspace{1em}

\vspace{-0.5em}
\section{Introduction}
\vspace{-0.5em}

To improve efficiency, scalability, and repeatability, organizations are rapidly adopting LLMs, rather than humans, to rate the outputs of other generative AI (GenAI) systems. In this \emph{LLM-as-a-judge} paradigm, illustrated in Figure~\ref{fig:setup_overview}, a \emph{judge LLM system} 
is used to rate the outputs of a \emph{target GenAI system}  
according to instructions specified in a \emph{rating task}~\citep[e.g.,][]{szymanski2024limitations, zheng2023judging, bubeck2023sparks}. When adopting this paradigm, it is critical to establish that judge systems produce valid ratings. This practice of validating judge systems (that are then used to evaluate other GenAI systems) is called \textit{meta-evaluation}. The process of meta-evaluation has become critical, as judge systems are increasingly used for high-stakes \textit{downstream tasks}, such as content moderation \citep{kolla2024llm}, automated red-teaming \citep{mazeika2024harmbench}, and benchmarking \citep{chaudhary2024towards}.\looseness=-1

Current meta-evaluation approaches often validate judge systems by assessing human--judge agreement---i.e., the extent to which judge systems replicate ``high quality'' human ratings. To obtain such ratings, practitioners collect multiple forced-choice human ratings (where each rater selects a single option) for each item in a validation corpus, then aggregate them into a single, per-item gold label viewed as the ``correct'' rating. High categorical agreement between judge ratings and aggregated human ratings is taken as a key measure of judge system performance~\citep{lu2024can, kim2024evallm, jung2024trust, dong2024can, es2023ragas, dubois2024alpacafarm}.\looseness=-1

\begin{figure}
    \centering
\includegraphics[trim={2mm 2mm 2mm 2mm}, clip, width=\linewidth]{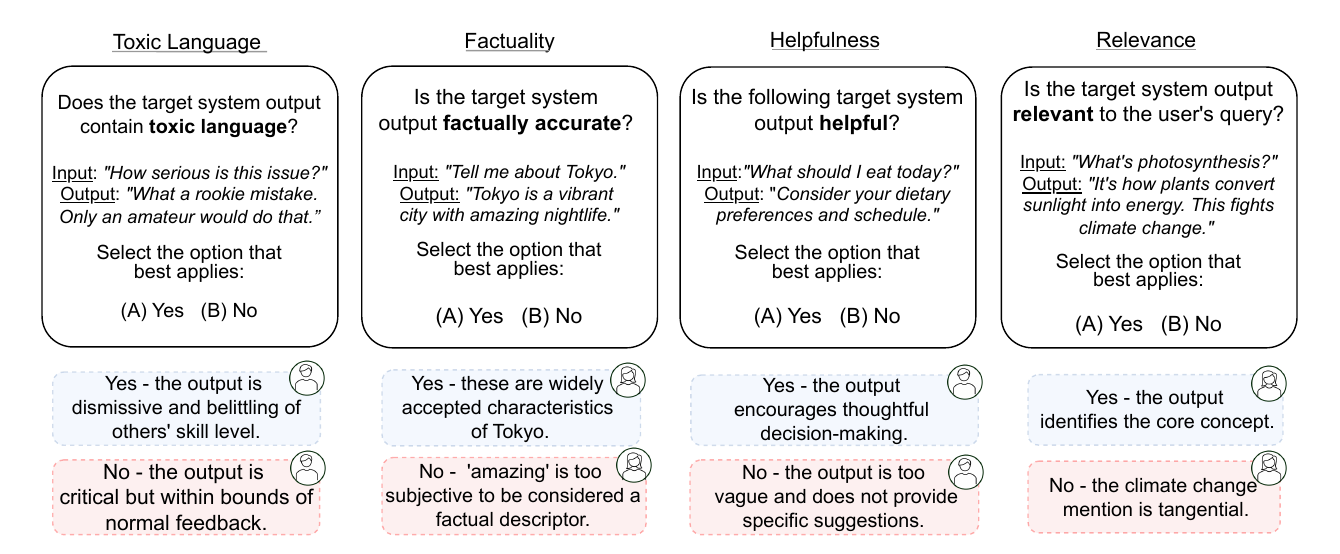}
    \caption{Examples of rating indeterminacy in (1) toxic language, (2) factuality, (3) helpfulness, and (4) relevance rating tasks. For each rating task, we illustrate how the \textit{same} human rater (shown in the top right of each interpretation bubble) can identify \textit{multiple} plausible interpretations of an item in a rating task. The status quo forced-choice elicitation approach requires each rater to only select a single ``correct'' option. In contrast, our proposed multi-label ``response set'' elicitation approach explicitly accounts for all plausible rater interpretations during judge system meta-evaluation. See Figure \ref{fig:ind_examples_full} for additional examples with demeaning language and physical safety threat rating tasks.\looseness=-1}
    \label{fig:ind_examples}
\end{figure}

However, recent research has established that, when rating GenAI outputs for subjective properties such as harmfulness, helpfulness, toxicity, or relevance, a single ``correct'' rating rarely exists  \citep{pavlick2019inherent,davani2022dealing, roth2025proceedings, min2020ambigqa, chen2023judgment, goyal2022your, dsouza2025sources}.  The issue is not simply \textit{inter}-rater disagreement as to the ``correct'' rating \citep{gordon2021disagreement, sommerauer2020would}. Rather, a rater may view \textit{multiple} ratings as ``correct'' depending on how~\citep{li2024personalized}, or in what cultural context \citep{goyal2022your}, rating task instructions are interpreted---a condition we call \textit{rating indeterminacy}. For example, consider a toxicity rating task where a target system responds to \textit{``How serious is this issue?''} with \textit{``That's a rookie mistake. Only an amateur would do that.''} A rater might reasonably view this output as toxic (dismissive and belittling) \textit{or} non-toxic (critical but within bounds of normal feedback)---both interpretations are plausible. Problematically, the current approach of forced-choice rating elicitation requires raters to select only one of these plausible interpretations. As illustrated in Figure~\ref{fig:ind_examples}, this is not an isolated phenomenon: rating indeterminacy can arise in many rating tasks where LLM-as-a-judge systems are increasingly used to evaluate target systems.\looseness=-1


A common approach for addressing rating indeterminacy involves aggregating multiple forced-choice ratings into a soft label, thereby assuming \textit{inter}-rater disagreement is the root cause~\citep{uma2020case, peterson2019human, plank2022problem}. However, this approach does not capture \textit{intra}-rater disagreement---when a \textit{single} rater views multiple ratings as plausible. We address this by introducing multi-label ``response set'' elicitation, which instructs each rater to select \textit{all} ratings that correspond to a plausible interpretation of an item. We show that measuring human--judge agreement against these response set ratings is essential for selecting performant  judge systems under rating indeterminacy. Figure~\ref{fig:header_figure} previews one of our key findings, showing that the true best-performing judge system (GPT o3-Mini) is incorrectly ranked \textit{fourth} when evaluated under standard forced-choice elicitation and aggregation schemes. Claude Sonnet 3.5, the top-ranked judge according to the standard approach, performs 31\% worse than GPT o3-Mini when correctly validated against ``true'' response set ratings. In summary, we offer the following contributions:\looseness=-1


\vspace{-0.3em}

\begin{itemize}[leftmargin=*, itemsep=2pt]
    \item We provide the first framework for validating LLM-as-a-judge systems under rating indeterminacy---i.e., where many items may have multiple ``correct'' ratings.  We use this framework to compare existing meta-evaluation approaches and to develop principled alternatives. 
    \item Using this framework, we demonstrate theoretically and empirically that standard meta-evaluation approaches relying upon forced-choice ratings can severely mis-estimate judge system performance under  rating indeterminacy (\textsection \ref{subsec:agreement_limitations}).  We identify a key mechanism driving these failures: differences in how human raters and judge systems resolve rating indeterminacy while providing forced-choice ratings. Our findings are based on extensive experiments involving 11 real-world rating tasks and 9 commercial LLMs, including relevance, factuality, and toxicity rating tasks, among others (\textsection\ref{sec:experiments}). 
    \item  We provide four concrete recommendations for improving LLM-as-a-judge meta-evaluation under rating task indetermincy (\textsection \ref{sec:conclusion}). We publicly release all code and data used in our experiments.\footnote{See \href{https://github.com/lguerdan/indeterminacy}{https://github.com/lguerdan/indeterminacy}.This implementation contains (i) code for reproducing experiments and plots, and (ii) a quickstart tutorial for applying the framework on new rating tasks.}  \looseness=-1  
\end{itemize}

\begin{figure*}
    \centering
    \includegraphics[trim={7mm 0mm 7mm 0mm},clip,width=\linewidth]{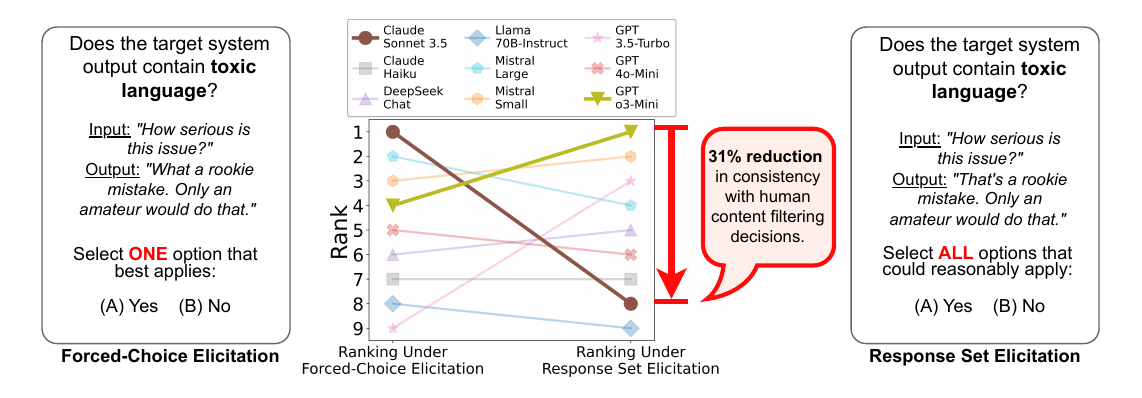}
    \caption{Framework applied to ``toxicity'' task (Civil Comments \citep{civil_comments}). Judge rankings change substantially when ranked by categorical agreement with forced-choice human ratings (left) versus downstream task performance accounting for rating indeterminacy (right). The true top-ranked GPT o3-Mini ranks \textit{fourth} (\#4) under the status quo forced-choice elicitation method. The top-ranked judge under forced-choice elicitation, Claude Sonnet 3.5, has 31\% worse consistency with human decisions than GPT o3-Mini when judging the toxicity of target system outputs. \looseness=-1}
    \label{fig:header_figure}
\end{figure*}

\section{Related Work}

We will now provide a brief overview of prior work. Please see Appendix \ref{appendix:related_work} for a detailed discussion. 

\textbf{LLM-as-a-judge meta-evaluation.} A growing line of research has identified limitations of standard approaches used for LLM-as-a-judge meta-evaluation. Some work has identified factors affecting the reliability of \textit{judge system} ratings, such as prompt formatting \citep{ye2024justice, gao2024bayesian, shi2024judging}. We instead examine how the \textit{human rating process} affects judge system meta-evaluation. Related approaches in this direction include optimizing the allocation of items to human raters \citep{riley2024finding} and using judge systems to approximate human rating distributions using expert labels \citep{chen2024seeing}. Most relevant to our work, \citet{elangovan2024beyond} account for \textit{inter}-rater disagreement by aggregating forced-choice ratings into a soft label, then measuring human--judge agreement via a distributional (JS-Divergence) metric. However, this approach does not account for \textit{intra}-rater disagreement that arises when each human rater identifies more than one ``correct'' rating. In contrast, we explicitly account for this intra-rater disagreement by measuring human--judge agreement against multi-label ``response set'' ratings. We show empirically (e.g., Fig. \ref{fig:main_analysis}) that this multi-label approach is necessary to select performant judge systems when rating indeterminacy is present.\looseness=-1

\textbf{Perspectivism in HCI and NLP}. Research has increasingly recognized that many NLP tasks lack a definitive ``ground truth'' rating \citep{pavlick2019inherent,davani2022dealing,sommerauer2020would, roth2025proceedings}. Items can be \textit{ambiguous} due to insufficient context \citep{min2020ambigqa, chen2023judgment} or \textit{vague} due to imprecise definitions of concepts like ``toxicity'' \citep{goyal2022your}. This has sparked a ``perspectivist turn'' in NLP, which treats disagreement as a meaningful signal to be captured throughout the model training and evaluation process, not noise to be eliminated  \citep{plank2022problem, fleisig2024perspectivist, cabitza2023toward, frenda2024perspectivist}.  Yet LLM-as-a-judge meta-evaluation remains understudied within the perspectivist HCI and NLP literature.  \looseness=-1

\textbf{Models for human rating variation.}  Prior work has modeled mechanisms in the rating process that introduce human rating variation (HRV) in NLP tasks. Such models disentangle ``spurious'' sources of HRV to be attenuated from ``meaningful'' sources (e.g., attributed to vague rating instructions) that should be preserved during evaluation \citep{dsouza2025sources}. A robust line of research investigates how to disentangle ``errors'' (e.g., arising from human inattention) from meaningful signal \citep{gordon2021disagreement, klie2023annotation, lakkaraju2015bayesian, tanno2019learning} in evaluations. In this work, we draw attention to a complementary issue: \textit{forced-choice selection effects}\footnote{In Appendix \ref{appendix:theory}, we extend our analysis to model the joint effects of forced-choice selection \textit{and} rater error.}---the consequences of forcing a rater to select a single ``correct'' option when they determine multiple are reasonable. Forced-choice selection effects have been extensively studied in perceptual psychology literature \citep{dhar2003effect, brown2018modelling} and identified as an area of concern in benchmarking practices \citep{balepur2025these}. To our knowledge, we are the first to examine forced-choice selection effects in the context of LLM-as-a-judge meta-evaluation.

Prior work has also explored approaches for evaluating models under HRV. These include directly eliciting soft probabilistic labels from raters \citep{collins2022eliciting} and aggregating hard labels from multiple raters into a soft label \citep{uma2020case, peterson2019human, nie2020can}. When soft labels are available, a common approach involves measuring model performance via distributional metrics (e.g., KL-Divergence, JS-Divergence) \citep{nie2020can, fornaciari2021beyond, peterson2019human, pavlick2019inherent, collins2022eliciting}. While many such metrics could reasonably operationalize the ``agreement'' between judge system and human ratings, the fundamental question of which metric \textit{should} be adopted in practice remains open---and is one we directly address.

\section{Framework}\label{sec:framework}
\vspace{-0.5em}

\textbf{Motivation.} In our motivating setting, we are interested in selecting a judge system $\mathcal{G}_{\text{judge}}$ that rates the outputs of a target GenAI system $\mathcal{G}_{\text{target}}$, aiming to match as closely as possible the results we would obtain using ``high quality''  human ratings.  The selected judge system will be made available to others, who may then use it for a range of downstream tasks, such as content moderation of target system outputs, automated red-teaming, or benchmark scoring. Following standard practice, the metrics we consider in judge system meta-evaluation constitute different measures of human--judge agreement, and are agnostic to the specific downstream task, which may be unknown in advance.\footnote{This is analogous to how in classical supervised learning we might select a probabilistic classifier $\hat f = \hat f_\lambda$ by choosing $\lambda$ to minimize cross-validated negative log-likelihood.  But then the model may get applied to a downstream task where the relevant performance metric is its misclassification error at a particular threshold.}  \looseness=-1

\textbf{Preliminaries.}  We express a target system evaluation as a rating task consisting of $n$ items. Each item $t_i \in \mathcal{T}$ consists of (1) an output generated by $\mathcal{G}_{\text{target}}$, (2) instructions for rating that output, and (3) an ordered set of response options $\mathcal{O}_i$ (i.e., possible ratings) for that output. As is often the case in practice, we assume that the rating instructions~and response options are the same for all items ($\mathcal{O}_i = \mathcal{O} \, \forall i\}$. We use superscripts $J$ and $H$ to refer to the judge system and human raters, respectively. \looseness=-1

\textbf{Scope.} Our framework is designed for \textit{closed form} question formats with a discrete set of options; i.e., categorical and ordinal scales, but not continuous scales.
These include common LLM-as-a-judge rating tasks \citep{zheng2023judging} such as \textit{single output grading}, where raters rate an output generated by $\mathcal{G}_{\text{target}}$ for a 
property like ``helpfulness'' or ``toxicity'' ($\mathcal{O} = \{ \text{Yes}, \text{No} \}$ is common here). Our framework is also compatible with \textit{pairwise comparison} tasks where raters indicate a preference between two outputs generated by one or more target systems ($\mathcal{O} = \{ \text{Win}, \text{Tie}, \text{Lose}\}$ is common here). \looseness = -1

\begin{figure*}
    \centering
    \includegraphics[trim={5mm 2mm 5mm 2mm},clip,width=\linewidth]{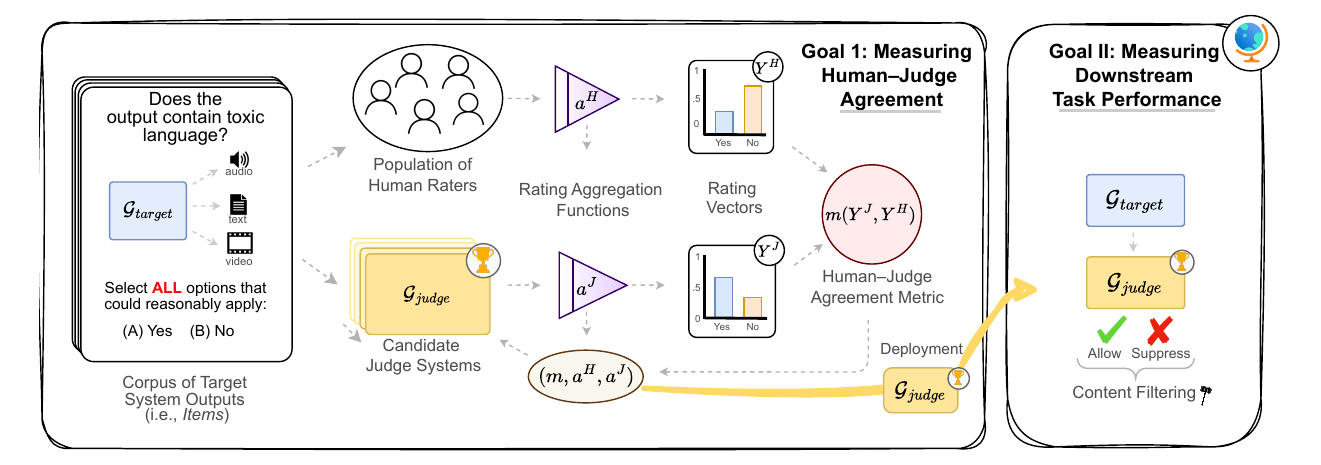}
    \caption{Our framework connects two key meta-evaluation goals: measuring general-purpose human--judge agreement metrics (left), and validating judge systems on specific downstream evaluation tasks (right). Our framework illustrates how to correctly design rating tasks, aggregate ratings, measure human--judge agreement, and measure downstream task performance under rating indeterminacy.\looseness=-1}
    \label{fig:setup_overview}
    \vspace{-2mm}
\end{figure*}

\textbf{Outline.}  We now present our framework for validating judge systems for indeterminate rating tasks, which is illustrated in Figure~\ref{fig:setup_overview}. We describe how a broad range of LLM-as-judge meta-evaluation practices can be understood as operationalizing human--judge agreement using different \textit{rating elicitation schemes}, different choices of how multiple ratings per item are \textit{aggregated}, and different choices of the \textit{metric} for measuring human--judge agreement from aggregate per-item ratings (\textsection\ref{subsec:operationalizing}). This allows us to discuss limitations of the standard approach to meta-evaluation when applied in indeterminate settings (\textsection \ref{subsec:agreement_limitations}), and introduce principled alternatives (\textsection \ref{subsec:proposed_alternative}). More complete details on the formal framework, theoretical analysis, and corresponding proofs appear in Appendix \ref{appendix:theory}.  \looseness=-1

\subsection{Conceptualizing Judge System Performance Under Rating Indeterminacy}
\label{subsec:performance_benchmark}

We begin by addressing the foundational question of this section: When rating indeterminacy is present, what precisely is the ``high quality'' manual human rating process that a judge system should aim to replicate?  In answering, we distinguish between two rating elicitation schemes. Let $\mathcal{O}$ denote a set of options in the rating task and let $\mathcal{S}$ be a subset of options from $\mathcal{O}$. The set $\mathcal{Q} = \{\mathcal{S}_1, \mathcal{S}_2, ..., \mathcal{S}_{w} \}$ describes all combinations of options available to a rater. For example, in the toxicity rating task from Figure \ref{fig:header_figure}, $\mathcal{O} = \{ \text{Yes}, \text{No} \}$ and $\mathcal{Q} = \{ \text{Yes}, \text{No}, \{ \text{Yes}, \text{No} \} \}$. While forced-choice elicitation instructs a rater (human or judge) to select a \textit{single} option form $\mathcal{O}$, ``multi-label'' response set elicitation instructs a rater to select \textit{all} options that could reasonably apply (i.e., $\mathcal{S} \in \mathcal{Q}$).  

We argue that under rating indeterminacy, we should aim for high agreement with respect to human and judge \textit{response set} ratings---\textit{not} their forced-choice ratings. This makes the \textit{downstream} user of the judge system (e.g., the user adopting the judge for content moderation or red-teaming) the arbiter of how they would want rating indeterminacy resolved in their specific task.  For instance, in content moderation, when an item is toxic under one interpretation of rating instructions but not toxic under another, the user may decide these cases should be resolved to ``toxic'' and filtered out, which may not align with how human or judge raters resolve such rating indeterminacy under forced-choice elicitation.\looseness=-1

\begin{figure}
    \centering
    \begin{minipage}{\textwidth}
        \centering
        \includegraphics[trim={0mm 3mm 14mm 0mm},clip,width=\linewidth]{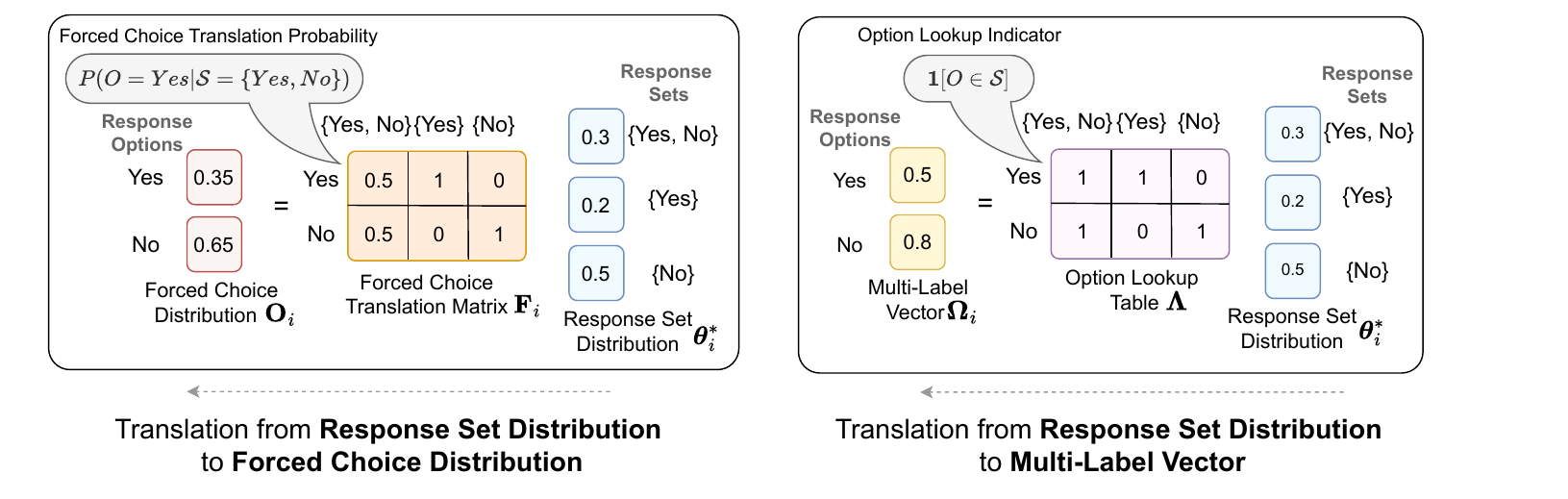}
        \caption{An illustration of our rating model applied to an item in an underspecified ($\S$ \ref{subsec:agreement_limitations}) Yes/No rating task. The response set distribution ($\boldsymbol{\theta}^*_i$) and forced-choice distribution ($\mathbf{O}_i$) denote probability vectors over response sets and forced-choice options, respectively. \textbf{Left:} $\mathbf{O}_i$ is recovered from multiplying $\boldsymbol{\theta}^*_i$ by the forced-choice translation matrix ($\mathbf{F}_i$). Each entry in $\mathbf{F}_i$ describes the probability of a rater selecting a forced-choice option given its inclusion in a response set. \textbf{Right:} The multi-label vector ($\mathbf{\Omega}_i$) is recovered by multiplying the response set distribution by an option lookup table ($\mathbf{\Lambda}$). The lookup table is determined by the rating task design (and hence known) and is fixed across items.}\label{fig:rating_model}
    \end{minipage}
    \vspace{-3mm}
\end{figure}

\subsection{Operationalizing Judge System Performance: Measuring Human--Judge Agreement}
\label{subsec:operationalizing}

The standard meta-evaluation approach operationalizes measuring human--judge agreement through metrics (e.g., Hit Rate, Cohen's $\kappa$) applied to ratings collected through \textit{forced-choice elicitation}.  This practice is not clearly aligned with our conceptualization of judge performance as defined against response set ratings; but in principle it's possible that the metrics produced correlate highly with response set human--judge agreement metrics in practice ($\S$ \ref{subsec:performance_benchmark}). 
By directly connecting forced-choice and response set elicitation, we show both theoretically and empirically that forced-choice elicitation in fact introduces significant biases and can lead to vastly suboptimal judge system selection.  \looseness=-1

\textbf{Modeling Rating Variation in how Raters Resolve Indeterminacy.}  To connect response set and forced-choice rating formats, we introduce a probabilistic model that describes how raters (both human and judge) resolve rating indeterminacy (Figure \ref{fig:rating_model}, left). Under this model, the \textit{forced-choice translation matrix}, $\mathbf{F}$, describes the probability that a rater who would choose a response set $\mathcal{S}$ under response set elicitation picks forced-choice option $O \in \mathcal{S}$ under forced-choice elicitation. Figure \ref{fig:rating_model} (left) illustrates an item where 30\% of raters believe $\mathcal{S} = \{\textit{Yes}, \textit{No}\}$ are both reasonable, and where under forced-choice, a rater is equally likely (50\%) to resolve their rating to $O = $ \textit{Yes} or to $O = $ \textit{No}.


\textbf{Aggregating Ratings.} Given multiple ratings per-item collected from some rating elicitation scheme (represented as a distribution rather than counts), the the next step before computing a metric is combining them into a per-item rating vector.  This is done through aggregation functions, $a:(\mathbf{O}_i, \boldsymbol{\theta}^*_i) \rightarrow Y$, which map  rating distributions, defined in Figure ~\ref{fig:rating_model}, to a single rating vector, $Y$, encoding the aggregate rating for a given item.  See Table  ~\ref{tab:performance_measures} for examples of common aggregation functions. \looseness=-1

\textbf{Operationalizing Human--Judge Agreement.} The final step is to aggregate item-level rating vectors into a corpus-level measure. Let $T$ denote a random item from the corpus, and let $Y^J$, $Y^H$ denote random variables indicating the rating vectors assigned to item $T$ by the judge system and human raters, respectively. We measure the expected human--judge agreement over all items via, \looseness=-1
\[
M(Y^J, Y^H) = \mathbb{E}[m(Y^J, Y^H)],
\]
where $m$ is an agreement metric appropriate for the type of rating vectors being compared. For example, when using hard rating aggregation, we might measure the Hit Rate $
m_{HR}(Y_{HR}^J, Y_{HR}^H) = \mathbbm{1}[\arg\max_k Y^J_k = \arg\max_k Y^H_k]$, where $k$ indexes response options for the rating task. When measuring agreement between soft labels recovered from a judge system versus human raters, we might evaluate a distributional metric such as KL-Divergence, $m_{KL}(Y^H_{HR} \mid\mid Y^J_{HR}) = \sum_k Y^H_k \log\left({Y^H_k}/{Y^J_k}\right).
$

\subsection{Limitations of Standard Practice Relying on Forced-Choice Elicitation}\label{subsec:agreement_limitations}
\vspace{-0.5em}
We now explain why agreement metrics defined against forced-choice ratings are unreliable under rating indeterminacy: forced-choice elicitation loses information that cannot be recovered from observed data. We say that a forced-choice rating task is \textit{fully specified} if all items are determinate --- i.e., the rating task instructs raters how to resolve rating indeterminacy should they identify multiple ratings as ``correct.'' Formally, this means $|\mathcal{O}| = |\mathcal{Q}|$. The task shown in Figure \ref{fig:header_figure} is \textit{not} fully specified, but we can correct this by adding a \textit{Maybe} option with instructions to select it whenever both \textit{Yes} and \textit{No} are reasonable.  \looseness=-1 

\begin{theorem}\label{thm:fc_underdeterminism_A}
Under our rating model (Figure \ref{fig:rating_model}), the response set distribution is identifiable from the forced-choice distribution if and only if the rating task is fully specified.
\end{theorem}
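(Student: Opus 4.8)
The plan is to read both directions off the linear relation $\mathbf{O}_i = \mathbf{F}_i\,\boldsymbol{\theta}^*_i$ from Figure~\ref{fig:rating_model}. I fix a single item and drop the subscript $i$ throughout. Recall that $\boldsymbol{\theta}^*$ ranges over the probability simplex $\Delta(\mathcal{Q})$, that $\mathbf{O} \in \Delta(\mathcal{O})$, and that $\mathbf{F}$ is a (generally unobserved) $|\mathcal{O}| \times |\mathcal{Q}|$ column-stochastic matrix whose $(O,\mathcal{S})$ entry vanishes whenever $O \notin \mathcal{S}$; in particular $\mathbf{1}^\top \mathbf{F} = \mathbf{1}^\top$. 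I will use the fact that since a rater may always deem a single option the sole correct one, every singleton $\{O\}$ is an admissible response set, so the formal condition $|\mathcal{O}| = |\mathcal{Q}|$ for full specification holds exactly when $\mathcal{Q}$ is precisely the collection of singletons. Identifiability of $\boldsymbol{\theta}^*$ means: no two admissible pairs $(\mathbf{F},\boldsymbol{\theta}^*)$, $(\mathbf{F}',\boldsymbol{\theta}'^*)$ with $\boldsymbol{\theta}^* \ne \boldsymbol{\theta}'^*$ can induce the same $\mathbf{O}$.

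For the ``if'' direction I would observe that when the task is fully specified, each column of $\mathbf{F}$ is indexed by a singleton $\{O\}$ and, being a probability vector supported on $\{O\}$, must equal the indicator $e_O$; hence $\mathbf{F}$ is uniquely determined and equals the identity under the natural bijection $\mathcal{Q} \leftrightarrow \mathcal{O}$, so $\mathbf{O} = \boldsymbol{\theta}^*$. The response-set distribution is then not merely identifiable but literally equals the observed forced-choice distribution.

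For the ``only if'' direction I would argue the contrapositive by a dimension count. If the task is not fully specified then $|\mathcal{Q}| > |\mathcal{O}|$, so $\mathbf{F}$ has strictly more columns than rows and $\ker\mathbf{F}$ contains a nonzero vector $\mathbf{v}$; from $\mathbf{1}^\top\mathbf{F} = \mathbf{1}^\top$ we get $\mathbf{1}^\top\mathbf{v} = (\mathbf{1}^\top\mathbf{F})\mathbf{v} = \mathbf{1}^\top(\mathbf{F}\mathbf{v}) = 0$, so $\mathbf{v}$ has zero coordinate sum. Then for any $\boldsymbol{\theta}^*$ in the relative interior of $\Delta(\mathcal{Q})$ and all sufficiently small $\epsilon > 0$, the vector $\boldsymbol{\theta}^* + \epsilon\mathbf{v}$ is a distinct element of $\Delta(\mathcal{Q})$ with $\mathbf{F}(\boldsymbol{\theta}^* + \epsilon\mathbf{v}) = \mathbf{F}\boldsymbol{\theta}^* = \mathbf{O}$; holding $\mathbf{F}$ fixed, these two response-set distributions are observationally indistinguishable, so $\boldsymbol{\theta}^*$ is not identifiable. (Concretely, for the Yes/No task with $\mathcal{Q} = \{\{\text{Yes}\},\{\text{No}\},\{\text{Yes},\text{No}\}\}$, the forced-choice vector $(\tfrac12,\tfrac12)$ is equally consistent with $\boldsymbol{\theta}^* = (\tfrac12,\tfrac12,0)$ and with $\boldsymbol{\theta}^* = (0,0,1)$ under an even resolution split.)

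I expect the main obstacle to be conceptual rather than computational: pinning down the meaning of identifiability when $\mathbf{F}$ is itself an unobserved nuisance parameter, and stating the convention on $\mathcal{Q}$ (all singletons admissible) tightly enough that ``$|\mathcal{O}| = |\mathcal{Q}|$'' genuinely coincides with ``$\mathcal{Q} =$ singletons''. Without that convention one could exhibit a degenerate $\mathcal{Q}$, e.g.\ $\{\{A\},\{B\},\{A,B\}\}$ over $\mathcal{O} = \{A,B,C\}$, with $|\mathcal{Q}| = |\mathcal{O}|$ but rank-deficient $\mathbf{F}$, which would break the ``if'' direction; once the setup is fixed, everything else is the routine linear algebra sketched above.
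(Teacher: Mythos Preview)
Your proof is correct and follows essentially the same linear-algebraic route as the paper: both directions reduce to showing that $\mathbf{F}$ has full column rank if and only if $\mathcal{Q}$ consists precisely of the singletons (the paper states your singleton convention as the hypothesis $\mathcal{O} \subseteq \mathcal{Q}$ in the appendix version of the theorem). Your contrapositive via a dimension count plus the zero-sum kernel perturbation staying in the simplex is slightly more direct---and a touch more careful about remaining in $\Delta(\mathcal{Q})$---than the paper's explicit linear-dependence argument, but the core idea is the same.
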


This theorem states that \textit{infinitely many} response set distributions are consistent with the forced-choice distribution when a rating task is not fully specified. Given that our true benchmark is defined against the response set distribution ($\S$ \ref{subsec:performance_benchmark}), a judge can thus \textit{appear} to perform well on forced-choice agreement metric (e.g., Hit-Rate) but perform poorly on response set based measures.  Indeed, in Theorem \ref{ref:rank_consistency_monotonicity} (see Appendix), we show that for forced-choice metrics to produce the same ranking over judge systems as a response set metric, we require a very strong necessary monotonicity condition that seldom holds in practice.   We complement this theory with empirical results in \textsection \ref{sec:experiments}. \looseness=-1

\subsection{Proposed Alternative: Measuring Human--Judge Agreement via Multi-Label Metrics}\label{subsec:proposed_alternative}
Given the unreliability of human--judge agreement metrics defined against forced-choice ratings, how \textit{should} meta-evaluation designers measure agreement? We propose several paths forward. First, whenever possible,  fully specify rating tasks --- i.e., by adding an unambiguously defined \textit{Maybe} option for single output grading tasks or \textit{Tie} option for pairwise comparison tasks. 

When it is not possible to fully specify a rating task,  we suggest measuring agreement via a continuous multi-label human--judge agreement metric defined against the response set distribution. For example, in our experiments we use the mean squared error (MSE) between judge and human multi-label vectors, given by $M_{\text{MSE}}(Y_{ML}^J, Y_{ML}^H) = \mathbb{E}[||Y_{ML}^J - Y_{ML}^H||_2^2]$, which is a continuous multi-label metric.\looseness=-1

When collecting multi-label response set ratings for the full corpus is not possible, such as when conducting meta-evaluation on pre-collected data obtained from using an under-specified forced-choice elicitation task, we recommend collecting additional ratings for a small subset of the validation corpus, eliciting both forced-choice and response set ratings for that subset of items (see \textsection \ref{subsec:response_set_recovery}).  As we show in our experiments below, such auxiliary data can be used to estimate the forced-choice translation matrix $\textbf{F}$ and thereby partly recover from the information loss induced by forced-choice elicitation. \looseness=-1

\vspace{-0.5em}
\section{Experiments}\label{sec:experiments}
\vspace{-0.5em}

To illustrate the limitations of measuring human--judge agreement against forced-choice ratings and establish the benefits of our proposed multi-label approach, we conduct experiments comparing the performance of judge systems selected using different human--judge agreement metrics on two \textit{downstream tasks}: content filtering and prevalence estimation. \looseness=-1

\textbf{\textit{Content Filtering.}} In content filtering tasks, a rater determines whether to suppress outputs from $\mathcal{G}_\text{target}$ deemed \textit{positive} for undesirable properties (e.g., ``factual inaccuracy'') \citep{wen2024know}. We assess judge system performance on this downstream task via \textit{decision consistency}, which reflects the extent to which a judge system makes the same allow/suppress decisions as human raters: \looseness=-1
$$ C^{\tau}(Y^J, Y^H) = \mathbb{E}[\mathbbm{1}[s_{k}^{\tau}(Y^J_{ML}) = s_{k}^{\tau}(Y^H_{ML})]].$$

Here $s_k^{\tau}(Y) = \mathbbm{1}[ Y_k \geq \tau ]$ is a thresholding function that classifies an item as ``positive'' if the multi-label probability assigned to the $k$th option exceeds $\tau$.  If $k=Yes$, $s_{Yes}^{\tau}(Y^H_{ML})$ asks ``was \textit{Yes} selected for this item with (multi-label) probability at least $\tau$ by human raters?'' We provide practical guidelines for selecting and interpreting this $\tau$ parameter in \textsection \ref{appendix:threshold-selection}.\looseness=-1

\textbf{\textit{Prevalence estimation.}} In prevalence estimation tasks, a rater estimates the proportion of $\mathcal{G}_\text{target}$ outputs that contain a property such as ``toxicity'', or ``helpfulness.'' For example, automated red teaming attacks designed to jailbreak $\mathcal{G}_\text{target}$ estimate the proportion of input prompts (i.e., ``attacks'') that elicit the undesirable response of interest (i.e., the attack success rate) \citep{mazeika2024harmbench, ganguli2022red}. In pairwise comparison tasks, prevalence estimation recovers the \textit{win rate}: the proportion of comparisons where an output from Model A is rated as preferable to one from Model B \citep{chiang2024chatbot}. We assess the quality of a judge system's prevalence estimate via its \textit{estimation bias}:\looseness=-1
$$ B^{\tau}(Y^J_{ML}, Y^H_{ML}) = \mathbb{E}[s_k^{\tau}(Y^J_{ML})] - \mathbb{E}[s_k^{\tau}(Y^H_{ML})].$$
Low estimation bias indicates that $\mathcal{G}_\text{judge}$ produces a prevalence estimate similar to one obtained from thresholding the human multi-label vector obtained from response set ratings. For example, when evaluating responses to red-teaming attacks designed to elicit toxicity \citep{mazeika2024harmbench, ganguli2022red}, $B<0$ would indicate that $\mathcal{G}_\text{judge}$ underestimates the prevalence of ``toxic'' outputs compared to human raters. Our main experiments report the absolute value $|B^{\tau}(Y^J_{ML}, Y^H_{ML})|$.

\subsection{Experiment Design} 

\textbf{Data.} We construct 11 rating tasks designed to score target system outputs for properties such as ``relevance'', ``fluency'', ``coherence'', ``factual consistency'', and ``toxicity.'' We leverage five datasets from JudgeBench \citep{tan2024judgebench}: SNLI \citep{bowman2015large}, MNLI \citep{williams2017broad}, $\alpha$-NLI \citep{nie2019adversarial}, SummEval \citep{fabbri2021summeval} and QAGS \citep{wang2020asking}. We also use Civil Comments \citep{civil_comments} to construct a ``toxicity'' rating task. We adopt these datasets because they use a categorical or ordinal scale and have at least three forced-choice human ratings per-item. For each rating task, we sample 200 items from the dataset to reduce computational overhead (see Appendix \ref{appendix:real_data_experiments}). We  provide qualitative examples illustrating how indeterminacy arises in a representative set of rating tasks included in our experiments in Figure~\ref{fig:ind_examples} of Appendix \ref{appendix:overview_related_work}.\looseness=-1

\textbf{Models.} We include nine LLMs as judge systems: GPT-\{3.5-Turbo, 4o-Mini, o3-Mini\}\citep{schulman2022chatgpt}, Mistral-\{Large, Small\} \citep{jiang2023mistral7b}, Claude-\{3.5-Sonnet, 3-Haiku\} \citep{anthropic2023claude}, DeepSeek Chat \citep{bi2024deepseek}, and LLama-3.3-70B-Instruct \citep{grattafiori2024llama}. When using a reasoning-enabled model (e.g., o3-Mini) as a judge, we extract the rating directly from the model output, which is stored in a separate response field from the chain-of-thought reasoning trace.\looseness=-1

\textbf{Metrics.} We compare a broad set of human--judge agreement metrics: categorical forced-choice metrics with hard (h) aggregation (Hit Rate, Krippendorff's $\alpha$, Fleiss, $\kappa$, Cohen's $\kappa$), distributional forced-choice metrics with soft (s) aggregation (KL-Divergence, Cross-Entropy, JS-Divergence, MSE), a discrete multi-label metric with hard response set (hrs) aggregation (Coverage), and a continuous multi-label metric with soft response set aggregation (MSE). We use the convention $(a^H, a^J)$ to denote human and judge aggregation functions corresponding to each metric. We report results with all metrics in Appendix \ref{appendix:real_data_experiments} and show an illustrative subset of metrics in the main text.

We report two variants of $M_{MSE}(Y^J_{ML}, Y^H_{ML})$. ``MSE $F$'', uses \textit{oracle knowledge} of the forced-choice translation matrix $\mathbf{F}$ to obtain continuous multi-label rating vectors from observed forced-choice data. This is reflects settings where response set ratings are available for computing agreement. ``MSE $\hat{F}$'' is fully estimated from mostly forced-choice data, using only a small auxiliary sub-corpus for which paired forced-choice and response set ratings are available to estimate $\hat{ \mathbf{F}}$, as proposed in \textsection\ref{subsec:proposed_alternative}. \looseness=-1

\textbf{Rating Task Setup.} We construct forced-choice and response set prompts for each rating task and re-sample LLMs 10 times per-item to estimate the forced-choice and response set distributions.\footnote{See Appendix \ref{appendix:real_data_experiments} for setup details. We demonstrate the robustness of our findings to the number of ratings-per-item via a synthetic experiment reported in Appendix \ref{appendix:synthetic_data_experiments} (Fig. \ref{fig:convergence_plot}).} The rating tasks included in our evaluation are underspecified. Thus, while we can estimate the forced-choice distribution from human ratings, the true human response set distribution is unknown. Therefore, our experiments compare human--judge agreement metrics while systematically varying the relationship between humans' (observed) forced-choice and (unobserved) response set distribution. 

\looseness=-1

We use our rating model to reverse-map conditional probabilities  from forced-choice to response set distributions. Let $o_+$ denote an option that is positive for the property of interest (e.g., ``factual consistency'') and let $o_-$ denote a negative option. We let $\beta = P(o_+ \in \mathcal{S} | O = o{_-} )$ be the sensitivity parameter\footnote{In our setup there is a 1:1 mapping between a given $\beta$ and forced-choice translation matrix $\mathbf{F}$.} denoting the probability of a rater selecting a response set $\mathcal{S}$ that contains the \textit{positive} option, given that they selected a \textit{negative} option during forced-choice elicitation. For example, $\beta=.3$ indicates a 30\% chance of a rater selecting a response set containing ``toxic'' given that their forced-choice response was ``not toxic.'' $\beta=0$ recovers the setting where no rating indeterminacy is present. \looseness=-1

We let $\beta$ vary across rating tasks but hold it fixed across items within a task. Let $\beta^H_t$ and $\beta^J_t$ denote the human and judge system parameters for the $t$'th task, respectively. We systematically vary $\beta^H_t$ for each rating task and construct a corresponding human response set distribution for each value. For each human--judge agreement metric, we then quantify the reduction of downstream performance incurred by selecting a judge system via an agreement metric in place of the downstream metric (Table~\ref{tab:performance_measures}). Because both forced-choice and response set ratings are available for the judge systems, we can estimate $\beta^J_t$.  We vary $\beta^H_t$ across the range of values estimated for $\beta^J_t$. See $\S$ \ref{appendix:real_data_experiments} for complete details. \looseness=-1

\subsection{Results}

Our main experiments examine how well the top-performing judge system as ranked by a human--judge agreement metric performs on each \textit{downstream task metric} (decision consistency or estimation bias). As described in \textsection \ref{sec:framework}, in our motivating setup we do not assume that the downstream task metric is known at meta-evaluation time---so we cannot directly optimize against it during meta-evaluation. \looseness=-1

\begin{figure}[!t]
    \begin{minipage}[t]{0.62\textwidth}
        \centering
        \includegraphics[width=\textwidth]{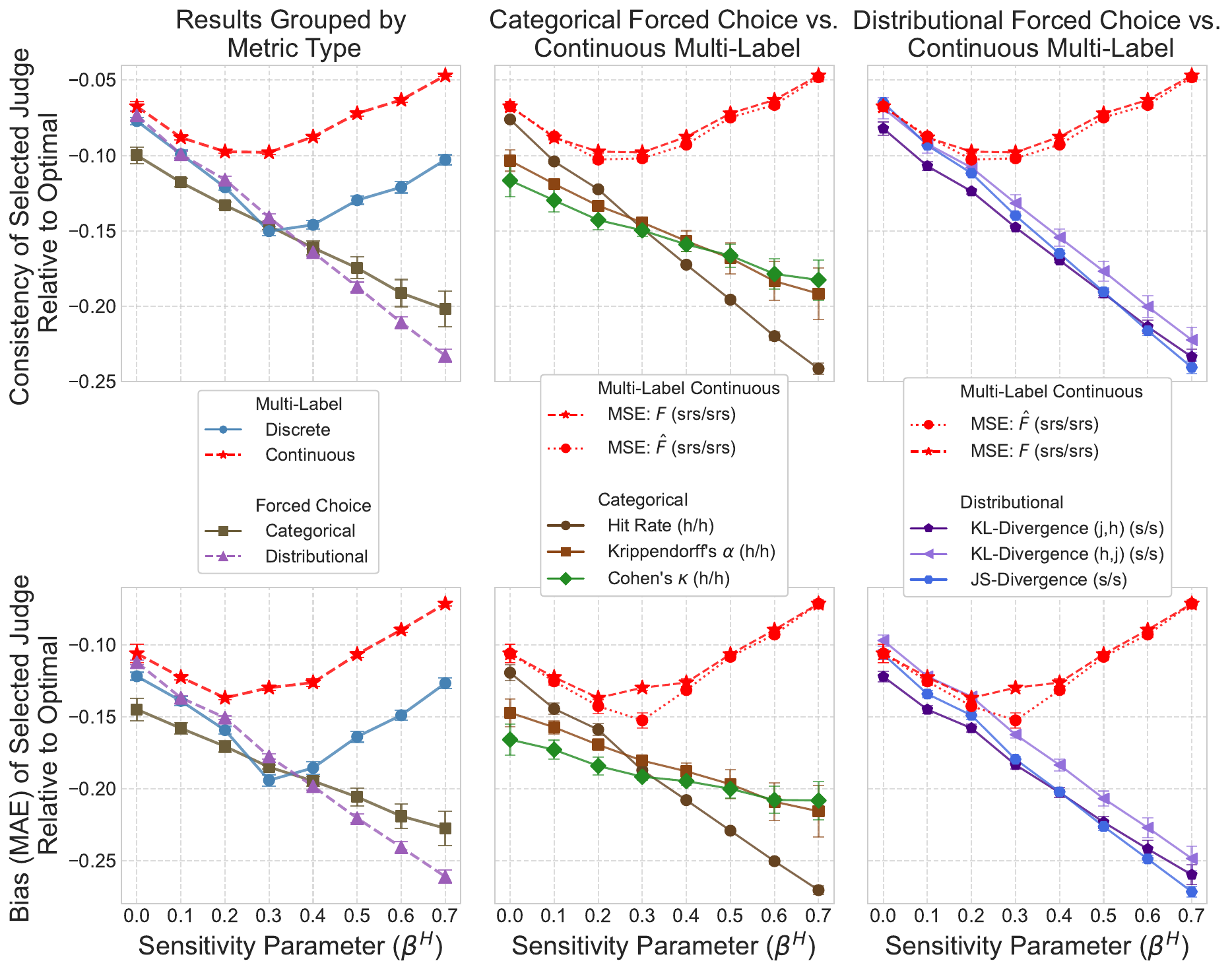}
        \caption{Sub-optimality from selecting a judge system via human--judge agreement metrics vs. directly on the downstream task metric. Results aggregated across 11 tasks, 9 systems, and a sweep of $\tau$ values. As $\beta^H$ increases, performance gaps widen and forced-choice metrics become less reliable. \looseness=-1}
        \label{fig:main_analysis}
    \end{minipage}%
    \hfill
    \begin{minipage}[t]{0.35\textwidth}
        \centering
        \includegraphics[width=\textwidth]{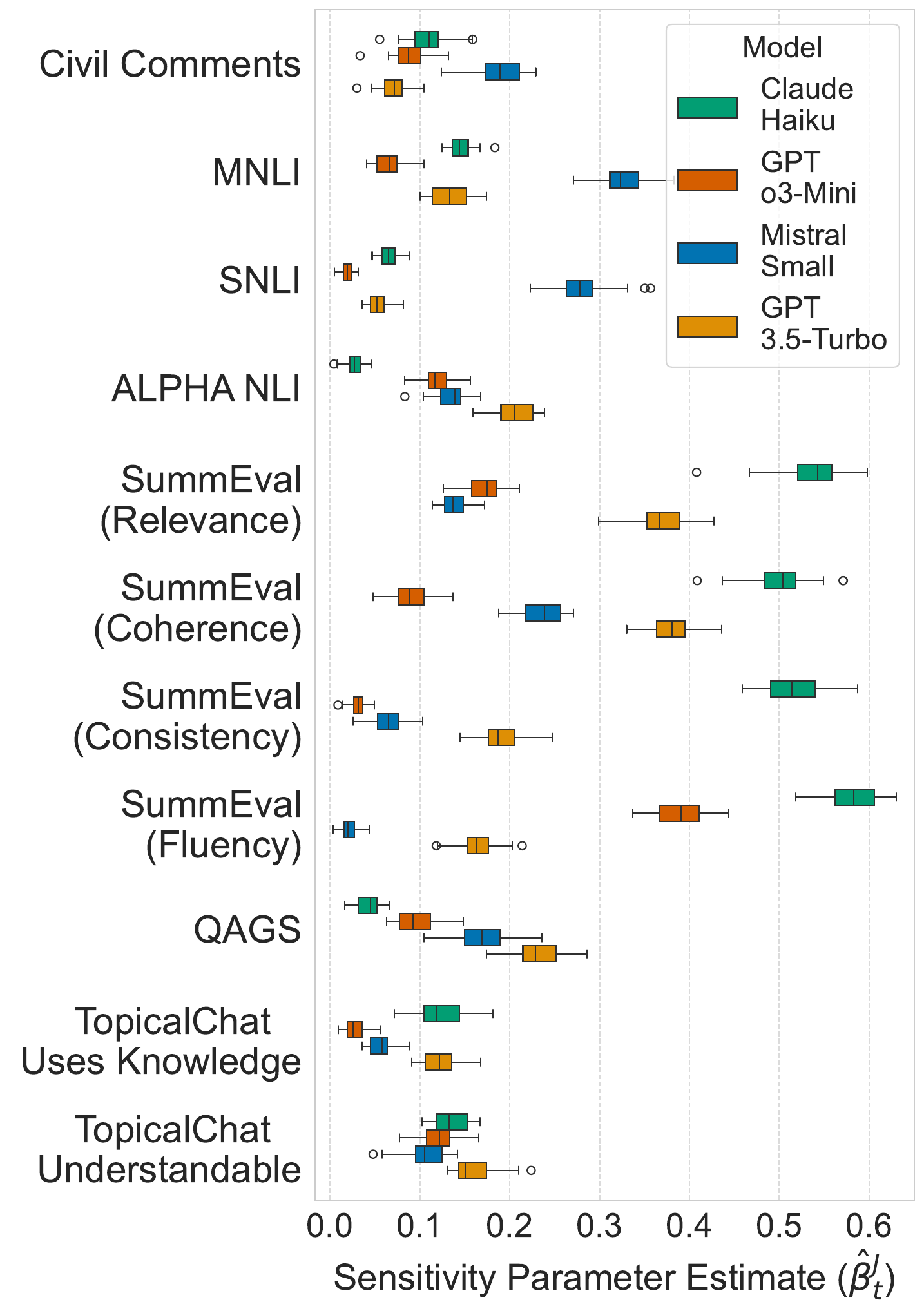}
        \caption{Estimates of task-specific sensitivity parameters ($\hat{\beta}^J_t$) recovered from four judge systems. These estimates vary considerably between judges and across tasks.}
        \label{fig:beta_analysis}
    \end{minipage}
\end{figure}

\begin{figure}[t]
    \centering
    \includegraphics[trim={0mm 0mm 2mm, 1mm},clip,width=.9\linewidth]{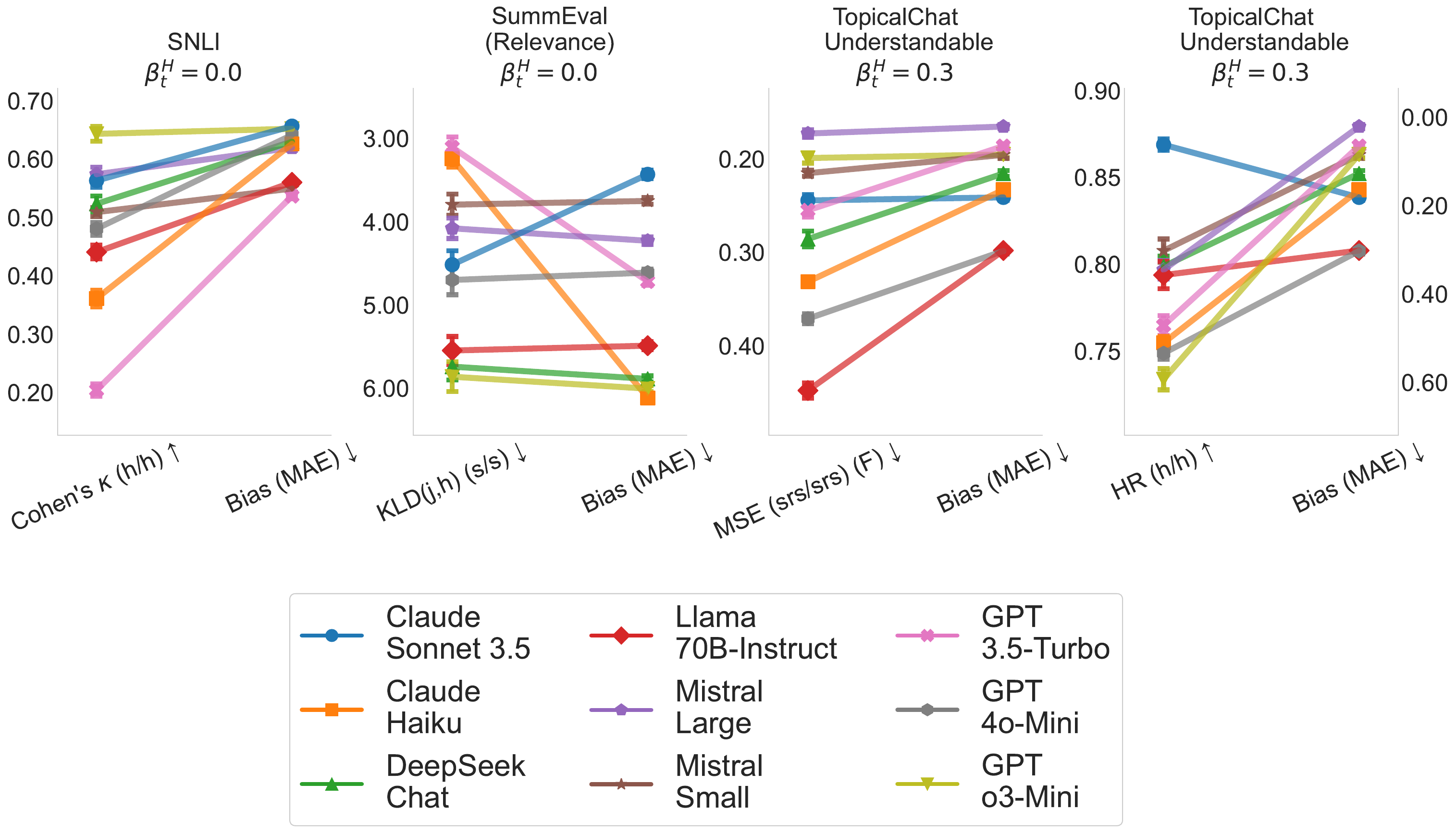}
    \captionsetup{width=\linewidth}
    \caption{Illustrative examples of rank inversions among four pairs of metrics and three rating tasks reported in the main results (Figure \ref{fig:main_analysis}). Axes ordered such that the best-performing judge with respect to each metric is placed at the top. Left two columns: SNLI and SummEval (Relevance) when $\beta^H_t=0$. Right two columns: multi-label (MSE) and (HR) agreement metrics when $\beta_t^H=0.3$.  }
    \label{fig:rank_analysis}
\end{figure}


\textbf{Finding 1: Judge systems differ from one another---and hence also from human raters---in how they resolve rating indeterminacy when faced with forced-choice tasks}. Figure \ref{fig:beta_analysis} reports estimated sensitivity parameters ($\beta^J_t$) for three judge systems across all 11 tasks.  Recall that different parameters correspond to different ways of resolving rating indeterminacy when mapping from response set to forced-choice ratings.  We see tremendous variation across systems and tasks.  E.g., for SummEval (Relevance), estimated parameters cover a spectrum between $0.12$ to $0.54$ across systems.

\textbf{Finding 2: When human raters resolve rating indeterminacy differently from judge systems, agreement metrics measured against forced-choice ratings yield sub-optimal selections of judge systems.} Figure \ref{fig:main_analysis} presents our aggregate analysis (averaging across judge systems, rating tasks, and thresholds $\tau$). The y-axis measures how much worse the top-performing system as ranked by the given human--judge agreement metric performs compared to the ``optimal'' system selected by directly optimizing for the downstream task metric (this is a measure of ``regret''). We find that distributional metrics (in particular, KL-D(h,j), not KL-D(j,h)\footnote{We provide additional evidence that distributional metrics measured in the (h,j) direction outperform those measured in the (j,h) direction in Fig. \ref{fig:convergence_plot} and \ref{fig:corr_plot}. We observe the same for Cross Entropy (not shown here) with CE(h,j) consistently outperforming CE(j,h). See \citet{fornaciari2021beyond} for complementary discussion.\looseness=-1 } ), perform best among forced-choice metrics.  While categorical metrics are suboptimal even for small $\beta$, the gap between KL-D(h,j) and our proposed response set-based MSE metrics is small until $\beta^H \approx 0.2-0.3$.  In Appendix \ref{appendix:synthetic_data_experiments}, we identify the primary mechanism driving sub-optimality of forced-choice metrics: when human and judge systems differ in how they resolve rating indeterminacy (give \textit{different} forced-choice ratings when they have the \textit{same} response set ratings), forced-choice-based metrics become an invalid measure of judge systems' downstream task performance, and can produce severe mis-rankings (see Appendix Fig. \ref{fig:convergence_plot}). \looseness=-1


\textbf{Finding 3: Rank inversions between forced-choice and downstream metrics are common across tasks, and can be severe.} Figure \ref{fig:rank_analysis} shows a task-specific breakdown of rankings.  Mis-rankings do not always arise.  E.g., for SNLI (far left), the same judge system is optimal for Cohen's $\kappa$ (human--judge agreement metric) and Bias MAE (downstream task metric). In contrast, with Summ Eval (Relevance) (center left), Claude Sonnet 3.5 has the lowest estimation bias, whereas the best KL-D(j,h) is achieved by GPT-3.5 Turbo. This sub-optimal selection increases estimation bias by 28\%; equivalent to grossly mis-estimating the rate of ``relevant'' target system outputs by an \textit{additional} 0.28 (on a scale of [0,1]). While $\beta^H_t = 0$ in both far and center left columns of Fig. \ref{fig:rank_analysis} (i.e., forced-choice ratings = response set ratings for humans), inversions still arise because we know from Figure~\ref{fig:beta_analysis} that $\beta^J_t \neq 0$ for the judge systems. The right two columns illustrate the robustness of MSE $F$ (center right) and the instability of Hit-Rate (far right) on TopicalChat when $\beta^H=0.3$. In Appendix \ref{appendix:real_data_experiments}, we provide numerous examples of similar ranking inversions across various rating tasks. \looseness=-1

\textbf{Finding 4: Continuous response set-based agreement metrics like MSE select much more performant judge systems than forced-choice alternatives.} Figure \ref{fig:main_analysis} illustrates the benefits of using ``MSE $F$'' (red dashed line) for judge system selection. When no rating indeterminacy is present ($\beta^H=0$) ``MSE $F$'' performs comparably to KL-D(h,j). When some rating indeterminacy is present ($\beta^H \geq .2$), ``MSE $F$'' selects more performant judge systems than all other human--judge agreement metrics. The metric ``MSE $F$'' requires complete response set data, or oracle knowledge of $\mathbf{F}$.  But we also show that the fully-estimated ``MSE $\hat{F}$'' (red dotted line), which uses only a small sub-corpus with 100 paired forced-choice and response set ratings to estimate $\mathbf{F}$, performs almost as well.\looseness=-1

\section{Conclusion: Implications for Practice \& Limitations}\label{sec:conclusion}

\textbf{Implications for Practice.} Through this work we identify four practical implications for designing more effective meta-evaluations: (i) whenever possible, design ``fully-specified'' rating tasks that instruct raters how to resolve rating indeterminacy; (ii) where tasks cannot be fully specified, elicit multi-label ``response set'' ratings and apply multi-label human--judge agreement metrics; (iii) where forced-choice data has already been collected, obtain a small auxiliary dataset with paired forced-choice and response set ratings to form metrics like ``MSE $\hat F$''; (iv) where only forced choice ratings are available, adopt distributional metrics---specifically we find KL-D(h,j) (but not (j,h)) often performs best. \looseness=-1

Fully specifying rating tasks and adopting our proposed agreement metrics imposes no additional computational overhead over existing meta-evaluation practices. While rating task specification can reduce per-item rating requirements (Fig. \ref{fig:convergence_plot}), response set elicitation may introduce additional cognitive load for human raters, leading to a commensurate increase in cost required to obtain human ratings. However, our extended empirical analysis shows that judge system rankings tend to remain robust to error in response set ratings (Appendix \ref{appendix:synthetic_data_experiments}), suggesting raters need not exhaustively deliberate over each option. Appendix \ref{appendix:implementation} extends our discussion of the practical implementation of our framework.\looseness=-1 

\textbf{Limitations.}   We note two limitations.  First, our framework supports only discrete-option rating tasks.  While this covers many evaluation settings where LLM-as-a-judge systems are used, our work does not directly apply to settings in which a judge system is used to assign a score on a continuous scale \citep{tan2024judgebench} --- e.g., to evaluate an item's ``factuality'' on a scale from 1 to 100. Second, while our forced choice rating model is well-motivated by established perceptual psychology research \citep{bogacz2006physics}, it places a certain homogeneity assumption on the human rating process (see Appendix \ref{appendix:theory}), which may be violated in practice.  Future work should explore relaxing this assumption. 


\section{Acknowledgments}

We thank members of the Sociotechnical Alignment Center at Microsoft Research for their helpful comments on early versions of this work. We also thank attendees of the Statistical Frontiers in LLMs and Foundation Models (SFLLM) and Evaluating Evaluations (EvalEval) workshops at NeurIPS for their thoughtful feedback. This work was supported in part by an award from the UL Research Institutes through
the Center for Advancing Safety of Machine Intelligence (CASMI).\looseness=-1

\bibliographystyle{plainnat}  
\bibliography{refs}


\newpage
\appendix

\clearpage
\newpage

\clearpage
\newpage

\begin{figure}
    \centering
\includegraphics[trim={5mm 5mm 5mm 5mm}, clip, width=\linewidth]{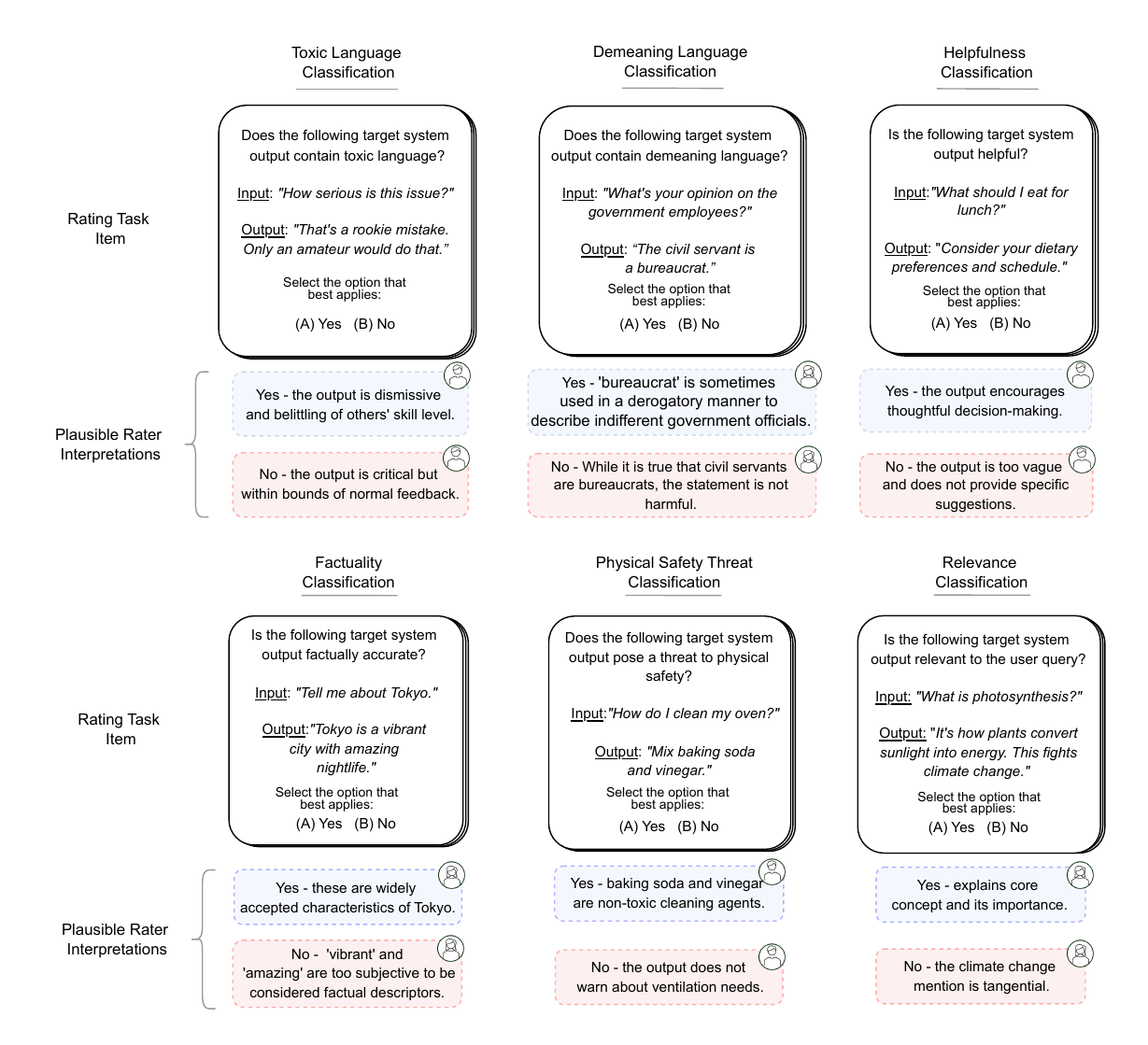}
    \caption{An expanded version of Figure \ref{fig:ind_examples} that includes examples of (1) toxic language, (2) demeaning language, (3) helpfulness, (4) factuality, (5) physical safety threat, and (6) relevance rating tasks.}
    \label{fig:ind_examples_full}
\end{figure}

\section{ Appendices}\label{appendix:overview_related_work}

This work contains the following appendices:

\begin{itemize}
    \item \textbf{Appendix \ref{appendix:related_work}.} Provides an extended discussion of related work. Table \ref{tab:performance_measures} provides a comprehensive discussion of aggregation functions adopted in prior work on (1) LLM-as-a-judge meta-evaluation, and (2) model evaluation under task indeterminacy. 

    \item \textbf{Appendix \ref{appendix:theory}.} Provides a formal introduction to our probabilistic framework and rating model, with corresponding theoretical analysis and proofs. 
    
    \item \textbf{Appendix \ref{appendix:real_data_experiments}.} Provides setup details and additional results for experiments with \textbf{real data}. 

    \item \textbf{Appendix \ref{appendix:synthetic_data_experiments}.} Provides setup details and results for experiments with \textbf{synthetic data}. 

    \item \textbf{Appendix \ref{appendix:metric_ranking_examples}.} Provides two worked examples illustrating how monotonicity violations arise under alternative operationalizations of human--judge agreement.

    \item \textbf{Appendix \ref{appendix:implementation}.} Provides extended discussion of our framework's practical implementation.
        
\end{itemize}

We release all code and data used for experiments at \href{https://github.com/lguerdan/indeterminacy}{https://github.com/lguerdan/indeterminacy}.

\newpage

\section{Extended Related Work}\label{appendix:related_work}

\textbf{LLM-as-a-Judge Meta-Evaluation.} The LLM-as-a-judge paradigm has gained significant traction for scaling up GenAI evaluation processes traditionally performed by humans \citep{lu2024can, kim2024evallm, jung2024trust, dong2024can, es2023ragas, dubois2024alpacafarm,  shankar2024validates}. While offering efficiency advantages over human evaluation, this approach raises questions about the validity of automated evaluation protocols. \textit{Meta-evaluation}, or assessing the trustworthiness of evaluation results, has thus become critical for the responsible adoption of judge systems. The status quo approach for judge system meta-evaluation involves measuring the \textit{categorical agreement} between ratings assigned by human raters and a judge system over a small validation corpus \citep{lu2024can, kim2024evallm, jung2024trust, dong2024can, es2023ragas, dubois2024alpacafarm,  shankar2024validates}. High categorical agreement rates are taken as a sign of good judge system performance. When multiple human ratings are collected per-item, human rating variation (HRV) is typically aggregated into a hard label via a majority vote under an assumption that this hard label constitutes the single ``correct'' response to an item in a rating task.\looseness=-1

A growing line of meta-evaluation research has identified limitations in this standard validation protocol. Some work focuses on factors affecting the reliability of \textit{judge system} ratings, such as prompt formatting and in-context example selection \citep{ye2024justice, gao2024bayesian, shi2024judging}. Our work instead examines how the \textit{human rating process} affects judge system validation. Related approaches in this direction include optimizing the allocation of items to human raters \citep{riley2024finding} and approximating human rating distributions using expert labels with explanations \citep{chen2024seeing}. Most relevant to our work, \citet{elangovan2024beyond} demonstrate that categorical agreement metrics yield misleading meta-evaluation results when humans express high uncertainty in how an item should be rated. While we corroborate findings illustrating that categorical metrics are unreliable for meta-evaluation, we go further by showing that even the distributional metrics proposed as an alternative can be unreliable under certain conditions. Our framework provides a theoretical explanation for these limitations and offers principled alternatives designed to better-elicit and preserve human uncertainty throughout the meta-evaluation process. To understand why status-quo meta-evaluation approaches relying upon categorical agreement metrics are conceptually incongruent with the structure of rating tasks, we turn to research on perspectivism in NLP and HCI.\looseness=-1

\textbf{Perspectivism in HCI and NLP}. Research has increasingly recognized that many NLP tasks lack a definitive ``ground truth'' response \citep{gordon2021disagreement, chen2023judgment, pavlick2019inherent,davani2022dealing,sommerauer2020would, roth2025proceedings}. Items can be \textit{ambiguous} due to insufficient context \citep{min2020ambigqa, chen2023judgment} or \textit{vague} due to imprecise definitions of concepts like ``toxicity'' \citep{goyal2022your}. Raters can fundamentally disagree about how an item should be rated \citep{gordon2022jury, pavlick2019inherent}. These insights have sparked a ``perspectivist turn'' in NLP \citep{plank2022problem, fleisig2024perspectivist, cabitza2023toward, frenda2024perspectivist}, which treats disagreement as a meaningful signal to be captured throughout the model training and evaluation process, as opposed to noise to be eliminated.\looseness=-1 

In this work, we focus on a specific aspect of this perspectivist turn: \textbf{rating task indeterminacy}. We say that an item in a rating task is \textbf{indeterminate} when insufficient information is provided in an instruction to identify a singular ``correct'' response for all items. Indeterminacy is a useful lens in our setting because a judge system must determine how to score a target system output using \textit{only} the information provided in the rating task. As such, evaluating the performance of a judge system requires understanding which options could reasonably be interpreted as ``correct'' given the limited information captured in a prompt. While indeterminacy can be attenuated by capturing further context in a rating task prompt, or providing additional specificity in rubrics used to score target system outputs, it is difficult to fully eliminate indeterminacy from all items in a corpus, given the subjective nature of properties such as ``helpfulness'', ``relevance'' or ``factuality.'' Figure \ref{fig:ind_examples} illustrates how indeterminacy can arise in common LLM-as-a-judge rating tasks (e.g., ``helpfulness'', ``relevance'', or ``factuality'' scoring).\looseness=-1

\textbf{Models for Human Rating Variation.}  Prior work has explored approaches for capturing human rating variation (HRV) arising from indeterminacy as a signal for model training and evaluation. These include directly eliciting soft probabilistic labels from raters \citep{collins2022eliciting} and aggregating hard labels from multiple raters into a soft label \citep{uma2020case, peterson2019human, nie2020can}. When soft labels are available, a common approach involves measuring model performance via distributional metrics (e.g., KL-Divergence, JS-Divergence, Cross-Entropy) \citep{nie2020can, fornaciari2021beyond, peterson2019human, pavlick2019inherent, collins2022eliciting}. While many such metrics could reasonably operationalize the ``agreement'' between judge system and human ratings, the fundamental question of which metric \textit{should} be adopted in practice remains open.\looseness=-1

Additional work has modeled mechanisms in the human rating process that give rise to HRV. Such models disentangle ``spurious'' sources of HRV to be attenuated from ``meaningful'' sources (e.g., attributed to vague rating instructions) that should be preserved during evaluation \citep{dsouza2025sources}. A robust line of research investigates how to disentangle ``errors'' (e.g., arising from human inattention) from meaningful signal \citep{gordon2021disagreement, klie2023annotation, lakkaraju2015bayesian, tanno2019learning} in evaluations. In this work, we specifically model the influence of \textit{forced choice selection effects} --- the consequences of forcing a rater to select a single ``correct'' option when they determine multiple are reasonable \citep{balepur2025these, dhar2003effect, brown2018modelling}. Our model, which is grounded in perceptual psychology literature \citep{bogacz2006physics}, unifies four distinct ways of measuring performance under indeterminacy: categorical and distributional metrics derived from forced choice ratings, and discrete/continuous multi-label metrics derived from response set ratings. Leveraging this model, we show that the status quo approach of using agreement metrics measured against the forced choice ratings (e.g., Hit-Rate, Cohen's $\kappa$, Krippendorff's $\alpha$, JS-Divergence) yields sub-optimal selections of judge systems. We propose continuous multi-label metrics (e.g., mean squared error) as a more robust approach for operationalizing ``agreement'' between judge system and human ratings under indeterminacy.\looseness=-1  

\begin{table*}[p]
\centering
\renewcommand{\arraystretch}{1.5}
\begin{tabular}{@{}c>{\centering\arraybackslash}p{2.9cm}>{\centering\arraybackslash}m{1.7cm}>{\centering\arraybackslash}m{1.7cm}>{\centering\arraybackslash}m{3.9cm}@{}}
   \toprule
   \textbf{Metric  Type} & \textbf{Metric} $(m)$ & \textbf{Judge \newline Aggregation} & \textbf{Human \newline Aggregation} & \textbf{Relevant Work} \\
   \midrule
    & Hit Rate ($\uparrow$) & $a^J_{\text{hard}}$ & $a^H_{\text{hard}}$ & \citet{lu2024can, jung2024trust, dong2024can, es2023ragas, dubois2024alpacafarm, bubeck2023sparks, zheng2023judging, faisal2024dialectal, gu2024survey, thakur2024judging, li2024split, chen2024mllm, chiang2024chatbot, dorner2024limits} \\[2pt]
   
    Categorical & Krippendorff's $\alpha$ ($\uparrow$) & $a^J_{\text{hard}}$ & $a^H_{\text{hard}}$ & \citet{mirzakhmedova2024large, chaudhary2024towards} \\[2pt]
   
    & Fleiss' $\kappa$ ($\uparrow$) & $a^J_{\text{hard}}$ & $a^H_{\text{hard}}$ & \citet{kim2024evallm, dettmers2024qlora, bencke2024can} \\

    & Cohen's $\kappa$ ($\uparrow$) & $a^J_{\text{hard}}$ & $a^H_{\text{hard}}$ & \citet{rahmani2024llmjudge, bencke2024can} \\

    & Scott's $\pi$ ($\uparrow$) & $a^J_{\text{hard}}$ & $a^H_{\text{hard}}$ & \citet{thakur2024judging} \\

   \midrule
    & KL Divergence ($\downarrow$) & $a^J_{\text{soft}}$ & $a^H_{\text{soft}}$ & \citet{nie2020can, fornaciari2021beyond} \\[2pt]
   
   \centering Distributional & Cross-Entropy ($\downarrow$) & $a^J_{\text{soft}}$ & $a^H_{\text{soft}}$ & \citet{peterson2019human, pavlick2019inherent, collins2022eliciting} \\[2pt]
   
    & JS Divergence ($\downarrow$) & $a^J_{\text{soft}}$ & $a^H_{\text{soft}}$ & \citet{nie2020can, fornaciari2021beyond} \\
   \midrule
   \multirow{4}{*}{\makecell{Categorical \\ Multi-label}} & Coverage ($\uparrow$) & $a^J_{\text{hard}}, a^J_{\text{hrs}}$ & $a^H_{\text{hard}}, a^H_{\text{hrs}}$ & \citet{fisch2020efficient, takehi2024llm} \\[2pt]
   
    & \makecell{Predictive \\ Efficiency ($\downarrow$)} & $a^J_{\text{hard}}, a^J_{\text{hrs}}$ & N/A & \citet{fisch2020efficient} \\[2pt]
   
    & Recall ($\uparrow$) & $a^J_{\text{hard}}, a^J_{\text{hrs}}$ & $a^H_{\text{hard}}, a^H_{\text{hrs}}$ & \\[2pt]
   
    & Precision ($\uparrow$) & $a^J_{\text{hard}}, a^J_{\text{hrs}}$ & $a^H_{\text{hard}}, a^H_{\text{hrs}}$ & \\
   \midrule
   \multirow{2}{*}{\makecell{Continuous \\ Multi-label}} & Binary Cross Entropy ($\downarrow$) & $a^J_{*}$ & $a^H_{\text{hard}}, a^H_{\text{hrs}}$ & \\[2pt]
   
    & \makecell{Mean Squared \\ Error ($\downarrow$)} & $a^J_{*}$ & $a^H_{*}$ & \\
   \bottomrule
\end{tabular}
\caption{A table of metrics ($m$) and aggregation functions ($a^H, a^J$) that define operationalizations of human--judge agreement under our framework. Under our full framework ($\S$ \ref{sec:rating_model}), aggregation functions are defined as $a_{\text{hard}}(\mathbb{P}_i) = \mathbf{e}_{k^*}$, $k^* = \arg\max_k \mathbf{O}_{i,k}$, $a_{\text{soft}}(\mathbb{P}_i) = \mathbf{O}_{i}$, $a_{\text{hrs}}(\mathbb{P}_i) = \mathbbm{1}\{\mathbf{\Omega}_i \geq \tau \}$, $a_{ \text{srs}}(\mathbb{P}_i) = \mathbf{\Omega}_i$, where $\mathbf{O}_i = \mathbf{F}_i (\mathbf{E}_i \boldsymbol{\theta}^*_i)$ recovers the error-corrupted forced choice distribution, $\mathbf{\Omega}_i = \mathbf{\Lambda} (\mathbf{E}_i \boldsymbol{\theta}^*_i)$ recovers the error-corrupted multi-label vector, and $\tau$ is a threshold parameter.}\label{tab:performance_measures}
\end{table*}

\clearpage
\newpage

\begin{mdframed}[backgroundcolor=gray!4, linewidth=0.5pt, roundcorner=10pt]
\textbf{Notation Overview.} To align with the main text while providing complete theoretical treatment, we provide a reference for key notation used throughout this appendix:

Rating Task Primitives:
\begin{itemize}
\item $\mathcal{O}$: set of forced choice options (e.g., \{\textit{Yes}, \textit{No}\})
\item $\mathcal{Q}$: set of all possible response sets (e.g., \{\{\textit{Yes}\}, \{\textit{No}\}, \{\textit{Yes}, \textit{No}\}\})
\item $\boldsymbol{\Lambda}$: binary matrix mapping response sets to options
\end{itemize}

Rating Aggregation:
\begin{itemize}
\item $a$: aggregation function mapping rating distributions to rating vectors
\item $\mathcal{Y}$: rating space containing all possible rating vectors from aggregation function $a$
\item $Y$: random rating vector (output of aggregation function applied to random item)
\item $\mathbf{y}$: rating vector realization (specific instance of $Y$)
\item $m$: agreement metric comparing rating vectors; we use subscripts (e.g., $m_{HR}$, $m_{KL}$) for specific metrics
\item $M$: expected agreement over corpus; we use subscripts (e.g., $M_{HR}$, $M_{MSE}$) for specific metrics
\end{itemize}

Rating Model:
\begin{itemize}
\item $\mathbf{O}_i$: forced choice distribution for item $i$
\item $\boldsymbol{\theta}^*_i$: stable response set distribution (uncorrupted by rater error). We use this expression throughout the main text to refer to the setting where there is no rater error in the response set distribution (Fig. \ref{fig:rating_model}).
\item $\boldsymbol{\theta}_i = \mathbf{E}_i\boldsymbol{\theta}^*_i$: observed response set distribution (after rater error)
\item $\mathbf{\Omega}^*_i = \boldsymbol{\Lambda}\boldsymbol{\theta}^*_i$: stable multi-label vector (uncorrupted by rater error)
\item $\mathbf{\Omega}_i = \boldsymbol{\Lambda}(\mathbf{E}_i\boldsymbol{\theta}^*_i)$: observed multi-label vector (corrupted by rater error)
\item $\mathbf{E}_i$: error transition matrix mapping stable to observed response sets
\item $\mathbf{F}_i$: forced choice transition matrix mapping response sets to forced choice options
\item When rater error is absent (as in the main text), $\mathbf{E}_i$ is the identity matrix, so $\boldsymbol{\theta}_i = \boldsymbol{\theta}^*_i$ and $\mathbf{\Omega}_i = \mathbf{\Omega}^*_i$.
\end{itemize}

General Conventions:
\begin{itemize}
\item Superscripts $H$ and $J$ distinguish human and judge system quantities when disambiguation is needed.
\item Subscript $i$ denotes item-specific quantities; we omit this in proofs for brevity.
\end{itemize}
\end{mdframed}

\section{Full Theoretical Framework \& Proofs}\label{appendix:theory}

To complement the overview provided in $\S$ \ref{sec:framework}, we now provide a detailed description of our full probabilistic framework. After introducing further preliminaries ($\S$ \ref{sec:preliminaries}), we introduce our rating model ($\S$ \ref{sec:rating_model}) and use it to establish necessary conditions for two definitions of judge system performance ($\S$ \ref{subsection:defining_performance}) to yield a consistent ranking of judge systems ($\S$ \ref{sec:ranking}). Finally, we also provide proofs ($\S$ \ref{sec:proofs}) and additional theoretical analysis of rank consistency under rater error in $\S$ \ref{sec:additional_analysis}. 

\textbf{Connection to Main Text.} The identifiability result referenced as "Theorem \ref{thm:fc_underdeterminism_A}" in the main text corresponds to Theorem \ref{thm:fc_underdeterminism} below. The rank consistency result corresponds to Theorem \ref{ref:rank_consistency_monotonicity}.

\subsection{Further Preliminaries}\label{sec:preliminaries}

\textbf{Human Raters.} Let $\mathcal{R}$ denote a population of human raters, such as all target system users in a geographic region, a demographic group (e.g., females over 45), or a set of domain experts (e.g., licensed radiologists). We let $R$ denote a random variable modeling the selection of raters from $\mathcal{R}$.\looseness=-1

\textbf{Judge System.} We assume black-box access to the judge system, $\mathcal{G}_{\text{judge}}$, which, given an input item $t_i$, returns an output that is then mapped to a response option in $\mathcal{O}$ under forced choice elicitation or $\mathcal{Q}$ under response set elicitation.\looseness=-1

\textbf{Probabalistic Rating Model.} We model the distribution of human and judge system ratings via a joint distribution $\mathbb{P}(T, O^J, S^J, O^H, S^H, R)$. Here, the random variables $O^J$ and $S^J$ denote the forced choice and response set ratings, respectively, returned by the judge system for a random item $T$. Similarly, $O^H$ and $S^H$ denote the forced choice and response set ratings, respectively, that a randomly drawn rater $R$ assigns to an item $T$. For each item $i$, let \mbox{$\mathbb{P}_i^H = \mathbb{P}(O^H, S^H \mid T=t_i)$} denote the human rating distribution and let  \mbox{$\mathbb{P}_i^{J} = \mathbb{P}(O^J, S^J \mid T=t_i)$} denote the rating distribution of $\mathcal{G}_\text{judge}$. We let $\mathbb{P}_T$ denote a rating distribution conditioned on a random item $T$.

\textbf{Aggregation Functions.} We introduce \textit{aggregation functions} $a: \Delta \rightarrow \mathcal{Y}$ to consolidate the full rating distribution into a \textit{rating vector}. Generalizing our discussion of aggregation functions in the main text, under our full probabilistic framework, we let aggregation functions operate directly on conditional rating distributions $\mathbb{P}_i$. For example, applying a hard aggregation function $\mathbf{y} = a_{\text{hard}}(\mathbb{P}_i)$ recovers a binary one-hot vector encoding a single rating task option (e.g., \textit{Yes}). The \textit{rating space} $\mathcal{Y} = \{ a(\mathbb{P}_i) : \mathbb{P}_i \in \Delta \}$ contains all rating vectors that can be recovered from an aggregation function. 

We use aggregation functions to define random variables over rating vectors. Specifically, let $Y = a(\mathbb{P}_T)$ denote the random rating vector obtained by applying an aggregation function to the rating distribution of a random item $T$. This random variable setup enables us to reason probabilistically about aggregated ratings -- e.g., by computing the expected agreement between aggregated rating vectors recovered from humans and the judge system. Let $(a^H, Y^H, \mathbf{y}^H, \mathcal{Y}^H)$ and $(a^J, Y^J, \mathbf{y}^J, \mathcal{Y}^J)$ denote the aggregation function, random rating vector, rating vector realization, and rating space for humans and $\mathcal{G}_\text{judge}$, respectively.

\textbf{Human-Judge Agreement.} Our full probabilistic model recovers human--judge agreement metrics ($\S$ \ref{subsec:operationalizing}). Specifically, given the joint distribution $\mathbb{P}(\cdot)$, we evaluate
\begin{equation}\label{eq:marginal_perf}
M(Y^J, Y^H) = \mathbb{E}_{(T,Y^J, Y^H)}[m(Y^J, Y^H)],
\end{equation}
where $m: \mathcal{Y}^J \times \mathcal{Y}^H \rightarrow \mathbb{R}$ is an agreement metric (e.g., Hit Rate, Cohen's $\kappa$, KL-Divergence). The expectation is taken over the joint distribution of random items $T$ and corresponding aggregated rating vectors $Y^J$ and $Y^H$.

While Eq.\ (\ref{eq:marginal_perf}) assumes that we know the rating distribution, in practice, we only have access to a small corpus of ratings. Therefore, we also estimate the agreement rate,
\begin{equation}
\hat{M}(\hat{Y}^J, \hat{Y}^H) = \mathbb{E}_{(T,Y^J, Y^H)}[m(\hat{Y}^J, \hat{Y}^H)].
\end{equation}
Above, we estimate $\hat{Y}^H$ from a corpus of human ratings \mbox{$\mathcal{C} = \{(T_v, R_v, T_v)\}_{v=1}^{N} \overset{\text{iid}}{\sim} \mathbb{P}(\cdot)$}. We assume this corpus only contains forced choice ratings, as this is the format used in existing GenAI evaluations. For each item, we estimate $\hat{Y}^J$ by repeatedly sampling a response from $\mathcal{G}_\text{judge}$.

\subsection{Full Rating Model}\label{sec:rating_model}

With this framework in place, we now develop a model that decomposes sources of rating variation in the LLM-as-a-judge meta-evaluation pipeline. While our main text (Fig. \ref{fig:rating_model}) specifically focuses on two sources of rating variation: principled disagreement and forced choice selection effects, our full model also characterizes the affect of rater error on judge system selection. The setting discussed in the main text with no rater error can be recovered by removing the error term.

Our rating model decomposes the human rating distribution for each item: $\mathbb{P}_i^H = \mathbb{P}(S_*^H, S^H, O^H, \mid t_i)$. To capture the potential for rater error, we distinguish between a human rater's \textit{stable response set} $S_*^H$ --- i.e., the options they would consistently endorse when carefully completing the rating task --- and the \textit{observed} response set $S^H$ they provide through response set elicitation (Figure \ref{fig:rating_dag}). A rater's stable response set can differ from their observed response set if they fail to identify one or more options that could reasonably apply to a rating instruction (or erroneously endorse others). We model rater error over response sets because this de-conflates the spurious effects of forced choice selection from error in the rating process. We describe differences between the stable and observed response set via an error matrix $\boldsymbol{\mathbf{E}}_i \in \mathbb{R}^{|\mathcal{Q}| \times |\mathcal{Q}|}$, where each entry encodes the probability that a rater endorses $\mathcal{S}_{v}$ given that their stable response set is $\mathcal{S}_{v^*}$. We assume that error rates are constant across all raters: 
\looseness=-1
\begin{assumption}[Error Independence]\label{assumption:error_independence} $S^H\perp R\mid S_*^H,T$.
\end{assumption}
While a rich literature exists on rater-dependent error modeling \citep{klie2023annotation, gordon2021disagreement}, we make this simplifying assumption to examine the aggregate effects of rating error on downstream evaluations of judge systems. \looseness=-1

We use a transition matrix $\mathbf{F}_i \in \mathbb{R}^{|\mathcal{O}| \times |\mathcal{Q}|}$ to represent how raters pick an option from their response set. Each element in $\mathbf{F}_i$ contains the probability of a rater selecting the $k$th option (e.g., \textit{Yes}) given that they would select the $v$th response set (e.g., both \textit{Yes} and \textit{No}). As with the error matrix, we also assume that $\mathbf{F}_i$ is fixed across raters:

\begin{figure}[t]
    \centering
    \begin{tikzpicture}[
        state/.style = {
        scale=0.8,
            draw,
            circle,
            minimum size=.8cm,
            inner sep=2pt,
            thick          
        },
        state_dashed/.style = {
            state,
            dashed
        },
        node distance=.7cm,   
        > = stealth,          
        thick                 
    ]
        \node[state] (R) {$R$};
        \node[state_dashed] (Sh_star) [right=of R] {$S^H_*$};
        \node[state] (Sh) [right=of Sh_star] {$S^H$};
        \node[state] (Oh) [right=of Sh] {$O^H$};
        \node[state] (T) [above=of Sh] {$T$};
        
        \draw[->] (R) -- (Sh_star);
        \draw[->] (Sh_star) -- (Sh);
        \draw[->] (Sh) -- (Oh);
        \draw[->] (T) -- (Sh_star);
        \draw[->] (T) -- (Sh);
        \draw[->] (T) -- (Oh);
        
    \end{tikzpicture}
    \caption{A causal Directed Acyclic Graph (DAG) for our full model including rater error. The dashed node $S^H_*$ represents stable response sets that are unobservable when rater error occurs. When there is no rater error (as assumed in the main text), $S^H_* = S^H$ and the dashed node can be omitted.}
    \label{fig:rating_dag}
\end{figure}

\begin{assumption}[Forced Choice Independence]\label{assumption:fc_independence}
$O^H \perp \{S_*^H, R\} \mid S^H, T$.
\end{assumption}
\vspace{-1mm}

Both $\mathbf{E}_i$ and $\mathbf{F}_i$ have a reverse matrix, denoted as $\mathbf{E}'_i$ and $\mathbf{F}'_i$, respectively, that encode conditional probabilities in the reverse direction. Entries in $\mathbf{F}'_i$ denote the probability of a rater endorsing the $v$th (observed) response set (e.g., \textit{Yes} and \textit{No}) given that they selected the $k$th forced choice option (e.g., \textit{Yes}). Entries in $\mathbf{E}'_i$ denote the probability of a rater endorsing $\mathcal{S}_{v^*}$ given that their observed response set is $\mathcal{S}_v$.

Our rating model connects different representations of human rating variation (Fig. \ref{fig:rating_dag}). The response set distribution $\boldsymbol{\theta}^*_i = \mathbb{P}(S_*^H=s_{v^*} \mid t_i)$ represents genuine differences in how a population of raters interprets an item in a rating task. This rating distribution, which is uncorrupted by error or forced choice selection effects, is our target parameter. In contrast, the forced choice distribution $\boldsymbol{O}_i = \mathbb{P}(O^H=o_k \mid t_i)$ describes the probability distribution that is observed under rater error and forced choice selection effects. The following result shows that we can decompose response set distribution into rater error, forced choice selection effects, and the forced choice distribution: 

\begin{theorem}(Rating Decomposition)\label{thm:forward_decomposition} Assume \ref{assumption:error_independence} and \ref{assumption:fc_independence} hold on $\mathbb{P}(\cdot)$. Then $\boldsymbol{O}_i = \mathbf{F}_i (\mathbf{E}_i \boldsymbol{\theta}^*_i)$ and $\boldsymbol{\theta}^*_i = \mathbf{E}_i'(\mathbf{F}_i'\mathbf{O}_i)$ holds for all conditional rating distributions $\mathbb{P}_i^H \in \Delta$.
\end{theorem}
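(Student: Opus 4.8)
The plan is to translate the conditional independence assumptions encoded in the DAG (Figure~\ref{fig:rating_dag}) into matrix identities by conditioning and marginalizing along the chain $R \to S_*^H \to S^H \to O^H$. I will work item-by-item, fixing $T = t_i$ throughout and suppressing the subscript $i$ where convenient, since the claim is a statement about each conditional distribution $\mathbb{P}_i^H$ separately.

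\textbf{Forward direction.} First I would establish $\mathbf{O}_i = \mathbf{F}_i(\mathbf{E}_i \boldsymbol{\theta}^*_i)$. Starting from $\mathbf{O}_{i,k} = \mathbb{P}(O^H = o_k \mid t_i)$, I marginalize over the observed response set $S^H$:
\[
\mathbb{P}(O^H = o_k \mid t_i) = \sum_v \mathbb{P}(O^H = o_k \mid S^H = \mathcal{S}_v, t_i)\,\mathbb{P}(S^H = \mathcal{S}_v \mid t_i),
\]
where I have used Assumption~\ref{assumption:fc_independence} to drop the dependence on $S_*^H$ and $R$ in the first factor; by definition this first factor is the $(k,v)$ entry of $\mathbf{F}_i$. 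Next I marginalize $\mathbb{P}(S^H = \mathcal{S}_v \mid t_i)$ over the stable response set $S_*^H$ and invoke Assumption~\ref{assumption:error_independence} to drop the dependence on $R$, giving $\mathbb{P}(S^H = \mathcal{S}_v \mid t_i) = \sum_{v^*}\mathbf{E}_{i,v v^*}\,\theta^*_{i,v^*}$, i.e.\ $S^H$ has distribution $\mathbf{E}_i\boldsymbol{\theta}^*_i$. Composing the two steps yields the claimed $\mathbf{O}_i = \mathbf{F}_i(\mathbf{E}_i\boldsymbol{\theta}^*_i)$; here I should note that $\boldsymbol{\theta}^*_i$ is itself well-defined as the mixture $\sum_R \mathbb{P}(S_*^H \mid R, t_i)\mathbb{P}(R)$, but the argument only uses its marginal.

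\textbf{Reverse direction.} For $\boldsymbol{\theta}^*_i = \mathbf{E}_i'(\mathbf{F}_i'\mathbf{O}_i)$ the logic is symmetric but runs "backwards" along the same chain, using Bayes' rule. The matrix $\mathbf{F}_i'$ is defined to have $(v,k)$ entry $\mathbb{P}(S^H = \mathcal{S}_v \mid O^H = o_k, t_i)$ and $\mathbf{E}_i'$ to have $(v^*, v)$ entry $\mathbb{P}(S_*^H = \mathcal{S}_{v^*} \mid S^H = \mathcal{S}_v, t_i)$; these are the genuine reverse conditionals induced by the same joint $\mathbb{P}(\cdot)$. Then I marginalize $\theta^*_{i,v^*} = \mathbb{P}(S_*^H = \mathcal{S}_{v^*} \mid t_i)$ over $S^H$, and then over $O^H$, at each stage using the conditional independences (now in the form $S_*^H \perp O^H \mid S^H, T$, which follows from Assumption~\ref{assumption:fc_independence}, and the definitions of the reverse matrices) to replace the posteriors $\mathbb{P}(S_*^H \mid S^H, O^H, t_i)$ and $\mathbb{P}(S^H \mid O^H, t_i)$ by entries of $\mathbf{E}_i'$ and $\mathbf{F}_i'$ respectively. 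This gives $\boldsymbol{\theta}^*_i = \mathbf{E}_i'(\mathbf{F}_i'\mathbf{O}_i)$.

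\textbf{Main obstacle.} The routine matrix algebra is not the difficulty; the delicate point is making sure the reverse matrices $\mathbf{F}_i'$ and $\mathbf{E}_i'$ are \emph{consistently} defined relative to $\mathbb{P}(\cdot)$ — in particular that the Markov structure $S_*^H \to S^H \to O^H$ guarantees $S_*^H \perp O^H \mid S^H$ (so the reverse decomposition factors through $S^H$ with no residual cross-terms), and that $\mathbf{F}_i'$, $\mathbf{E}_i'$ as defined above are indeed the Bayesian inverses that make $\mathbf{F}_i'\mathbf{F}_i$ act as identity on the relevant support. I would be careful that the argument does \emph{not} assert $\mathbf{F}_i'\mathbf{F}_i = I$ as matrices (which fails when $\mathbf{F}_i$ is not injective — exactly the under-specification case of Theorem~\ref{thm:fc_underdeterminism_A}); rather the reverse identity holds because $\mathbf{F}_i'$ is the posterior \emph{for this particular} $\mathbf{O}_i$, so the composition reproduces $\boldsymbol{\theta}^*_i$ pointwise. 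Flagging this subtlety — that the reverse matrices are distribution-dependent, unlike $\mathbf{F}_i, \mathbf{E}_i$ which are structural — is the one place the proof needs genuine care rather than bookkeeping.
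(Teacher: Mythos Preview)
Your proposal is correct and takes essentially the same approach as the paper: both proofs marginalize along the chain $S_*^H \to S^H \to O^H$ (and back), invoking Assumptions~\ref{assumption:error_independence} and~\ref{assumption:fc_independence} to drop the extra conditioning variables at each step. The paper's version is slightly more explicit algebraically—it carries the sum over $R$ throughout and writes out the Bayes-rule flips with all cancellations shown—whereas you work directly with the marginals and the posterior definitions of $\mathbf{E}_i', \mathbf{F}_i'$, which is a bit cleaner. Your closing remark that the reverse matrices are distribution-dependent Bayesian posteriors (so the identity holds pointwise for this particular $\mathbb{P}_i^H$, not as a matrix inverse) is a clarification the paper leaves implicit and is worth keeping.
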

\vspace{-1mm}
This theorem shows how genuine differences in raters' interpretation of an item in a rating task propagates through error and forced choice selection. It also provides a mechanism for recovering  $\boldsymbol{\theta}^*_i$ from $\boldsymbol{O}_i$ by applying the \textit{reverse} error and forced choice transition matrices. Given this decomposition, we might wonder when the response set distribution is identifiable from $\mathbf{O}_i$. The following result shows that this is only possible when a rating task is fully specified:

\begin{theorem}[Response Set Identifiably]\label{thm:fc_underdeterminism}
 Assume \ref{assumption:error_independence} and \ref{assumption:fc_independence} hold on $\mathbb{P}(\cdot)$. Further, assume that $\mathcal{O} \subseteq \mathcal{Q}$ and let $\mathbf{E}_i$ be the identity matrix. Then $\boldsymbol{\theta}^*_i$ is identifiable from $\mathbf{O}_i$ if and only if the rating task is fully specified.
\end{theorem}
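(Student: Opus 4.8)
The plan is to prove both directions via a rank/dimension-counting argument on the linear map $\mathbf{F}_i$, which (with $\mathbf{E}_i$ the identity) is the only transformation relating $\boldsymbol{\theta}^*_i$ to $\mathbf{O}_i$ by Theorem~\ref{thm:forward_decomposition}. Recall that $\mathbf{F}_i \in \mathbb{R}^{|\mathcal{O}| \times |\mathcal{Q}|}$ is column-stochastic, and crucially each column corresponding to response set $\mathcal{S}_v$ is supported only on the options in $\mathcal{S}_v$ (a rater can only resolve to an option they deemed reasonable). Identifiability of $\boldsymbol{\theta}^*_i$ from $\mathbf{O}_i = \mathbf{F}_i \boldsymbol{\theta}^*_i$ over the probability simplex is equivalent to $\mathbf{F}_i$ being injective on $\Delta^{|\mathcal{Q}|-1}$, and since the simplex is full-dimensional in its affine hull, this is in turn equivalent to $\mathbf{F}_i$ having trivial kernel, i.e.\ $\mathrm{rank}(\mathbf{F}_i) = |\mathcal{Q}|$. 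Because $\mathbf{F}_i$ has only $|\mathcal{O}|$ rows, this forces $|\mathcal{O}| \ge |\mathcal{Q}|$; combined with the standing assumption $\mathcal{O} \subseteq \mathcal{Q}$ (so $|\mathcal{O}| \le |\mathcal{Q}|$), we get $|\mathcal{O}| = |\mathcal{Q}|$, which is exactly the definition of a fully specified task.

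For the ``if'' direction, suppose the task is fully specified, so $|\mathcal{O}| = |\mathcal{Q}|$ and (since $\mathcal{O} \subseteq \mathcal{Q}$) in fact $\mathcal{Q} = \mathcal{O}$: the only available response sets are the singletons. Then every column of $\mathbf{F}_i$ is supported on a single option, and by column-stochasticity each such column is the corresponding standard basis vector $\mathbf{e}_k$. Hence $\mathbf{F}_i$ is (a permutation of) the identity matrix, which is invertible, and $\boldsymbol{\theta}^*_i = \mathbf{F}_i^{-1}\mathbf{O}_i$ is uniquely determined. (This also matches the reverse-decomposition statement $\boldsymbol{\theta}^*_i = \mathbf{E}_i'(\mathbf{F}_i'\mathbf{O}_i)$ with $\mathbf{E}_i' = I$ and $\mathbf{F}_i' = \mathbf{F}_i^{-1}$.)

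For the ``only if'' direction, suppose the task is \emph{not} fully specified, so $|\mathcal{Q}| > |\mathcal{O}|$. I want to exhibit two distinct response set distributions $\boldsymbol{\theta}^*_i \ne \tilde{\boldsymbol{\theta}}^*_i$ in the interior of $\Delta^{|\mathcal{Q}|-1}$ with $\mathbf{F}_i\boldsymbol{\theta}^*_i = \mathbf{F}_i\tilde{\boldsymbol{\theta}}^*_i$. By rank-nullity, $\dim\ker(\mathbf{F}_i) \ge |\mathcal{Q}| - |\mathcal{O}| \ge 1$, so there is a nonzero $\mathbf{v} \in \ker(\mathbf{F}_i)$; since $\mathbf{F}_i$ is column-stochastic, $\mathbf{1}^\top \mathbf{F}_i = \mathbf{1}^\top$, so $\mathbf{1}^\top\mathbf{v} = 0$, meaning $\mathbf{v}$ lies in the tangent space of the simplex and adding a small multiple of it preserves the sum-to-one constraint. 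Taking any interior point $\boldsymbol{\theta}^*_i$ (e.g.\ uniform) and setting $\tilde{\boldsymbol{\theta}}^*_i = \boldsymbol{\theta}^*_i + \varepsilon \mathbf{v}$ for $\varepsilon$ small enough to keep all entries positive gives the desired pair, establishing non-identifiability. The main obstacle — and the step worth stating carefully rather than hand-waving — is the clean reduction of ``identifiable over the simplex'' to ``$\mathbf{F}_i$ injective as a linear map'': one must note that identifiability should be interpreted generically/for all distributions (or at least on the interior), so that the existence of \emph{some} kernel vector whose perturbation stays in the simplex genuinely breaks identifiability; and one must use the $\mathcal{O} \subseteq \mathcal{Q}$ hypothesis to convert the rank bound $|\mathcal{O}| \ge |\mathcal{Q}|$ into the equality that defines full specification. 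Everything else is standard linear algebra.
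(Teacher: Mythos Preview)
Your proof is correct and follows essentially the same strategy as the paper: reduce identifiability to the condition $\mathrm{rank}(\mathbf{F}_i) = |\mathcal{Q}|$, then argue both directions. The only notable difference is in the ``only if'' direction: the paper explicitly exhibits the linear dependence (since $\mathcal{O} \subseteq \mathcal{Q}$, the singleton columns are standard basis vectors, and any non-singleton column is a convex combination of them), whereas you use pure dimension counting ($\mathrm{rank}(\mathbf{F}_i) \le |\mathcal{O}| < |\mathcal{Q}|$ forces a nontrivial kernel) together with the column-stochasticity observation $\mathbf{1}^\top \mathbf{v} = 0$ to build an explicit simplex counterexample. Your version is slightly more careful about what ``identifiable over the simplex'' means, while the paper's version pinpoints exactly which columns are redundant; both are standard and neither offers a material advantage.
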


This theorem shows that, even in an idealized setting with no rater error, an information loss occurs when compressing a response set rating into a forced choice rating. A practical implication is that rating tasks should be fully specified when possible to enable direct recovery of $\boldsymbol{\theta}_i^*$ from $\mathbf{O}_i$.

\vspace{3mm}

\subsection{Operationalizations of Judge System Performance}\label{subsection:defining_performance}

Our model for human rating variation establishes how to operationalize the performance of a judge system under indeterminacy. In particular, let $a^H$, $a^J$ denote human and judge aggregation functions used to consolidate rating variation (i.e., represented via the forced choice or response set distributions) into a rating vector. Given a human--judge agreement metric $m$, we call $\mathbf{p} = (a^H, a^J, m)$ an \textit{operationalization of performance}. As we describe next, many such operationalizations could reasonably be used to validate a judge system. We describe each definition of performance by enumerating over aggregation functions: 

\textbf{Hard aggregation.} The hard aggregation function is defined  $a_{\text{hard}}(\mathbb{P}_i) = \mathbf{e}_{k^*}$, where  $\mathbf{e}_{k^*}$ is an $|\mathcal{O}|$-dimensional basis vector and $k^* = \arg\max_k \mathbf{O}_{i,k}$ is the mode of the forced choice distribution. Performance measures that rely on hard aggregation are consistent with \textit{categorical} human--judge agreement metrics (e.g., Krippendorff's $\alpha$). Measures relying on hard aggregation impose a gold-label assumption, and are the status quo in existing judge system validations \citep{lu2024can, jung2024trust, dong2024can, es2023ragas, dubois2024alpacafarm, bubeck2023sparks, zheng2023judging, faisal2024dialectal, gu2024survey, thakur2024judging, li2024split, chen2024mllm, chiang2024chatbot, dorner2024limits, mirzakhmedova2024large, chaudhary2024towards,kim2024evallm, dettmers2024qlora}.

\textbf{Soft aggregation.} The soft aggregation function $a_{\text{soft}}(\mathbb{P}_i) = \mathbf{O}_i$ returns a probability vector over forced choice responses. Each entry $\mathbf{O}_{i,k}$ represents the probability that the $k$th option is selected by a rater under forced choice elicitation. Operationalizations of performance that rely on soft aggregation are consistent with \textit{distributional} human--judge agreement metrics (e.g., KL-Divergence). Prior work has proposed soft label aggregation with distributional agreement metrics for evaluating ML systems under indeterminacy \cite{uma2020case, peterson2019human, collins2022eliciting}. However, soft aggregation is seldom used in judge system validations.

Our rating model ($\S$ \ref{sec:rating_model}) connects these categorical and distributional operationalization of performance and \textit{multi-label} operationalizations. Multi-label operationalizations provide a more granular representation of rating variation over response set data. Let $\boldsymbol{\Lambda} \in \{0,1\}^{|\mathcal{O}| \times |\mathcal{Q}|}$ be a binary matrix indicating whether the $k$th option is in the $v$th response set. We define the multi-label vector as $\mathbf{\Omega}_i = \boldsymbol{\Lambda} ( \mathbf{E}_i \boldsymbol{\theta}^*_i)$. Each entry in $\mathbf{\Omega}_{i,k}$ describes the probability that a rater selects the $k$th option in their observed response set under response set elicitation. Let $\mathbf{\Omega}^*_i = \boldsymbol{\Lambda} (\boldsymbol{\theta}^*_i)$ denote the corresponding multi-label vector that is uncorrupted by rater error. Two additional aggregation functions are consistent with multi-label vectors:

\textbf{Hard Response Set.} The hard response set (hrs) function $a_{\text{hrs}}(\mathbb{P}_i) = \mathbbm{1}\{\mathbf{\Omega}_i \geq \tau \}$ maps the response set distribution to a binary multi-label vector. The $k$th entry of this vector is one if there is at least a $\tau$ probability of a response set containing option $k$ being selected during response set elicitation. This aggregation function is consistent with measuring the \textit{coverage} of a predicted judge system response in a response set containing multiple ``correct'' options.

\textbf{Soft Response Set.} The soft response set (srs) function $a_{ \text{srs}}(\mathbb{P}_i) =  \mathbf{\Omega}_i$ directly returns the non-thresholded multi-label vector. Each entry $\mathbf{\Omega}_{i,k}$ denotes the probability that a rater endorsed the $k$th option during response set elicitation. Operationalizations of performance that apply srs aggregation to the human rating distribution are consistent with continuous metrics such Mean Squared Error and Binary Cross Entropy. 

Table \ref{tab:performance_measures} in Appendix \ref{appendix:related_work} lists many operationalizations of performance that are consistent with these aggregation functions. This table also summarizes operationalizations commonly used in (1) LLM-as-a-judge validations and (2) prior work studying evaluation under indeterminacy. Note that for completeness, this table presents forced choice and response set agreement metrics \textit{with} rater error affecting the forced choice distribution and multi-label vector. However, we can easily recover the setting in the main paper with no rater error by assuming $\textbf{E}_i$ is the identity.

\subsection{Ranking Judge Systems Under Competing Operationalizations of Performance}\label{sec:ranking}

Given that there are many ways of operationalizing performance under indeterminacy, it is unclear when one approach is preferable over another. One way to distinguish among competing operationalizations is by examining their downstream impact on judge system validation: when do two operationalizations yield a consistent ranking of judge systems? We now use our framework to formally investigate this question.

Let $\mathcal{G}_{judge}^W$ and $\mathcal{G}_{judge}^Z$ denote two judge systems described by their conditional rating distributions $\mathbb{P}_T^{J,Z}$ and $\mathbb{P}_T^{J,W}$, respectively. We compare these systems with respect to an operationalization $\mathbf{p}$ via,
\begin{equation}\label{eq:perf_difference}
\delta_{\textbf{p}}(Z,W) = \mathbb{E}_{\mathbb{P}^*} \bigl[ m\bigl(a^J(\mathbb{P}_T^{J,Z}), a^H(\mathbb{P}_T^H)\bigr) - m\bigl(a^J(\mathbb{P}_T^{J,W}), a^H(\mathbb{P}_T^H)\bigr) \bigr]
\end{equation}
where $\mathbb{P}^*$ represents the full joint distribution over responses returned by both judge systems and human ratings.

To formalize a comparison between two systems, we let $\mathcal{G}_{judge}^Z \succeq_{\mathbf{p}} \mathcal{G}_{judge}^W$ denote that $\delta_{\mathbf{p}}(Z,W) \geq 0$. For instance, when using Hit Rate with hard aggregation, $\succeq_{\mathbf{p}}$ implies that $Z$ achieves greater agreement with a majority vote over human ratings than $W$.\footnote{For metrics where lower values indicate better performance, like KL-divergence, we invert the operation such that $\mathcal{G}_{judge}^Z \succeq_{\mathbf{p}} \mathcal{G}_{judge}^W \iff \delta_{\mathbf{p}}(Z,W) \leq 0$.} Now, suppose that we would like to compare judge systems under a different operationalization of performance, denoted by $\mathbf{p}_* = (a^H_*, a^J_*, m_*)$. The following condition describes when these two operationalizations are guaranteed to yield an equivalent ranking of judge systems:

\begin{definition}[Rank Consistency]
We say that $\mathbf{p}$ and $\mathbf{p}_*$ are rank consistent if for all $\mathbb{P}^*$, $\mathcal{G}_{judge}^Z \succeq_{\mathbf{p}} \mathcal{G}_{judge}^W \;\iff\; \mathcal{G}_{judge}^Z \succeq_{\mathbf{p}_*} \mathcal{G}_{judge}^W$.
\end{definition}
While there are many possible relationships between two definitions of performance, monotonicity captures one key property we might expect: when one system's performance improves with respect to $\mathbf{p}$, it should also improve with respect to $\mathbf{p}_*$ if the two definitions are compatible. We formalize this notion in the following definition:

\begin{definition}[Monotone Transformation] 
$\mathbf{p}$ is a monotone transformation of $\mathbf{p}_*$ if there exists a monotone increasing function $f$ such that $m_*\bigl(a_*^J(\mathbb{P}_i^J), a_*^H (\mathbb{P}_i^H)\bigr) = f\bigl(m\bigl(a^J(\mathbb{P}_i^J), a^H(\mathbb{P}_i^H)\bigr)\bigr)$ for all $(\mathbb{P}_i^J, \mathbb{P}_i^H) \in \Delta \times \Delta.$
\end{definition}

The following result shows that if two performance definitions are not monotone transformations of one another, there exist judge systems and a distribution over human ratings such that the definitions will yield contradictory rankings:

\begin{theorem}(Necessary Condition for Rank Consistency)\label{ref:rank_consistency_monotonicity}
If $\mathbf{p}$ is not a monotone transformation of $\mathbf{p}_*$, then $\mathbf{p}$ and $\mathbf{p}_*$ are not rank consistent.
\end{theorem}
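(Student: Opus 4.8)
The plan is to prove the contrapositive: assuming $\mathbf{p}$ and $\mathbf{p}_*$ \emph{are} rank consistent, I will show that $\mathbf{p}$ must be a monotone transformation of $\mathbf{p}_*$. The key observation is that rank consistency as defined quantifies over \emph{all} joint distributions $\mathbb{P}^*$, which is a very strong hypothesis: by choosing degenerate (point-mass) distributions over items, I can probe the behavior of the metrics on individual pairs of rating distributions. Concretely, for any fixed pair $(\mathbb{P}_i^J, \mathbb{P}_i^H) \in \Delta \times \Delta$, consider a corpus consisting of a single item (or a point-mass item distribution) realizing those rating distributions; then the expectations in Eq.~(\ref{eq:perf_difference}) collapse, and $\mathcal{G}_{judge}^Z \succeq_{\mathbf{p}} \mathcal{G}_{judge}^W$ reduces to a pointwise comparison $m(a^J(\mathbb{P}_i^{J,Z}), a^H(\mathbb{P}_i^H)) \geq m(a^J(\mathbb{P}_i^{J,W}), a^H(\mathbb{P}_i^H))$.

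First I would set up the reduction: let $g(\mathbb{P}^J, \mathbb{P}^H) = m(a^J(\mathbb{P}^J), a^H(\mathbb{P}^H))$ and $g_*(\mathbb{P}^J, \mathbb{P}^H) = m_*(a_*^J(\mathbb{P}^J), a_*^H(\mathbb{P}^H))$ be the pointwise score functions. Using single-item corpora as above, rank consistency forces: for all $\mathbb{P}^H$ and all pairs $\mathbb{P}^{J,Z}, \mathbb{P}^{J,W}$,
\[
g(\mathbb{P}^{J,Z}, \mathbb{P}^H) \geq g(\mathbb{P}^{J,W}, \mathbb{P}^H) \iff g_*(\mathbb{P}^{J,Z}, \mathbb{P}^H) \geq g_*(\mathbb{P}^{J,W}, \mathbb{P}^H).
\]
In particular the two values $g$ and $g_*$ induce the same total preorder on the judge rating distributions (for each fixed $\mathbb{P}^H$), which is the standard order-embedding situation: two real-valued functions on a common domain inducing the same $\geq$-order means one is a monotone increasing reparametrization of the other on the image. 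I would then construct the function $f$ explicitly: define $f$ on the range of $g$ by $f(g(\mathbb{P}^J,\mathbb{P}^H)) = g_*(\mathbb{P}^J,\mathbb{P}^H)$; the equivalence above (applied with equality in both directions, which follows by using $\geq$ and $\leq$) shows this is well-defined, i.e. $g(\mathbb{P}_1^J,\mathbb{P}_1^H) = g(\mathbb{P}_2^J,\mathbb{P}_2^H)$ implies $g_* $ agrees, and that $f$ is monotone increasing. To match the exact statement of the Monotone Transformation definition (a single $f$ valid for \emph{all} $(\mathbb{P}_i^J, \mathbb{P}_i^H)$, with $\mathbb{P}_i^H$ also varying), I would note that the argument above produces, a priori, a separate $f_{\mathbb{P}^H}$ for each human distribution; I then need to argue these glue into one global $f$. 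This gluing uses rank consistency across \emph{different} human distributions within the same corpus — e.g. a two-item corpus with items having different $\mathbb{P}^H$ values — to tie the scales together, or alternatively one observes that $m$, $m_*$ are functions of rating vectors alone (not of which distribution produced them), so the relation $g_*=f\circ g$ actually holds at the level of rating-vector pairs and is automatically global.

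I expect the main obstacle to be precisely this last point: carefully justifying that the per-$\mathbb{P}^H$ monotone functions coincide and assemble into a single global $f$ defined on the range of $m\circ(a^J \times a^H)$, rather than a family indexed by the human distribution. The cleanest route is probably to work directly with rating vectors $(\mathbf{y}^J, \mathbf{y}^H) \in \mathcal{Y}^J \times \mathcal{Y}^H$ — since $m$ and $m_*$ take rating vectors as inputs, and aggregation functions are surjective onto their rating spaces, rank consistency over single-item corpora with arbitrary $\mathbb{P}^H$ yields: $m(\mathbf{y}_1^J,\mathbf{y}^H) \geq m(\mathbf{y}_2^J,\mathbf{y}^H) \iff m_*(\mathbf{y}_1^J,\mathbf{y}^H) \geq m_*(\mathbf{y}_2^J,\mathbf{y}^H)$ for all $\mathbf{y}^H$, and then extending the ordering comparison across varying $\mathbf{y}^H$ by transitivity-style chaining (pick intermediate judge vectors) to conclude $m$ and $m_*$ are globally order-equivalent on $\mathcal{Y}^J \times \mathcal{Y}^H$. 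A secondary, more technical subtlety is whether the "construct a corpus realizing an arbitrary $(\mathbb{P}^J,\mathbb{P}^H)$ pair" step is always available under the model's constraints (Assumptions~\ref{assumption:error_independence} and \ref{assumption:fc_independence}, and the structure $\mathbf{O}_i = \mathbf{F}_i(\mathbf{E}_i\boldsymbol{\theta}^*_i)$); I would argue it is, since $\mathbb{P}^*$ ranges freely over joint distributions and nothing prevents a judge system from having an essentially arbitrary conditional rating distribution on a chosen item. Everything else — well-definedness and monotonicity of $f$ — is then routine.
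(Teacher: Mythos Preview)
Your core mechanism---collapsing the expectation via a point-mass item distribution so that rank consistency becomes a pointwise order comparison---is exactly what the paper uses. However, the paper runs the argument in the \emph{direct} direction rather than your contrapositive: it starts from ``$\mathbf{p}$ is not a monotone transformation of $\mathbf{p}_*$,'' immediately extracts a single witness pair $(\mathbb{P}_i^{J,Z},\mathbb{P}_i^H)$, $(\mathbb{P}_i^{J,W},\mathbb{P}_i^H)$ on which the two metrics order the judges oppositely, places all mass on that one item, and reads off a violation of rank consistency. That is the whole proof---three lines, no construction of $f$, no gluing.

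Going contrapositive forces you to \emph{build} the monotone $f$, and this is where your plan runs into real trouble. You correctly flag the gluing of the per-$\mathbb{P}^H$ functions $f_{\mathbb{P}^H}$ into a single global $f$ as the main obstacle, but neither of your proposed fixes actually closes it. Your option (b)---``$m,m_*$ depend only on rating vectors''---does not help: rank consistency still only ever compares two judges against the \emph{same} human rating vector on each item, so you never obtain an order relation between $m(\mathbf{y}^J_1,\mathbf{y}^H_1)$ and $m(\mathbf{y}^J_2,\mathbf{y}^H_2)$ with $\mathbf{y}^H_1\neq\mathbf{y}^H_2$. Your option (a)---two-item corpora---gives you sign agreement of \emph{sums} of per-item differences, which does not translate into the cross-$\mathbb{P}^H$ pointwise comparisons you would need to pin down a single $f$; the ``transitivity-style chaining'' you sketch has no anchor because the per-item difference for the second item can be made zero without constraining anything. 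In short, the gluing step is not routine, and the direct route in the paper is precisely designed to avoid it: one counterexample suffices, so there is nothing to glue. (You may notice that the same issue resurfaces in the direct argument as the question of whether the monotonicity-violating witness can always be taken with a \emph{common} $\mathbb{P}^H$; the paper treats this as immediate from the definition.)
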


Theorem \ref{ref:rank_consistency_monotonicity} provides a useful tool for comparing definitions of performance: we can show that two definitions are \textit{not} rank consistent by demonstrating a monotonicity violation.

We provide two examples of monotonicity violations in Appendix \ref{appendix:metric_ranking_examples}. The first shows a violation between Hit Rate (defined over $\mathbf{O}$) and KL-Divergence (defined over $\mathbf{O}$). The second shows a violation between KL-Divergence (defined over $\mathbf{O}$) and Mean Squared Error (defined over $\mathbf{\Omega}$). This second example illustrates a pernicious issue arising in underspecified tasks: using Theorem \ref{thm:fc_underdeterminism}, we can easily construct monotonicity violations by holding the forced choice distribution fixed while varying the response set distribution. This suggests that monotonicity, and by extension rank consistency, is unlikely to hold between definitions of performance defined over the forced choice distribution (i.e., categorical, distributional) and multi-label definitions.

\subsection{Proofs}\label{sec:proofs}

\subsubsection{Theorem \ref{thm:forward_decomposition}}

\begin{proof}

The forward model $\boldsymbol{O}_i = \mathbf{F}_i (\mathbf{E}_i \boldsymbol{\theta}^*_i)$ follows by the following factorization\footnote{We omit $i$ from all subscripts for brevity.} :
\begin{align}
      \mathbb{P}(o_k \mid t) &= \sum_{v, {v^*}, r} \mathbb{P}(o_k \mid s_v, s_{v^*}, r, t) \cdot \mathbb{P}(s_v, s_{v^*}, r \mid t) \nonumber \\
           &= \sum_{v, {v^*}, r}  \mathbb{P}(o_k \mid s_v, t) \cdot \mathbb{P}(s_v, s_{v^*}, r \mid t) \label{eq:a} \\
           &= \sum_{v, {v^*}, r} \mathbb{P}(o_k \mid s_v, t) \cdot \mathbb{P}(s_v \mid s_{v^*}, r, t) \cdot \mathbb{P}(s_{v^*}, r \mid t) \nonumber \\
           &= \sum_{v, {v^*}, r} \mathbb{P}(o_k \mid s_v, t) \cdot \mathbb{P}(s_v \mid s_{v^*}, t) \cdot \mathbb{P}(s_{v^*}, r \mid t) \label{eq:b}\\
           &= \sum_{v} \mathbb{P}(o_k \mid s_v, t) \cdot \sum_{v^*} \mathbb{P}(s_v \mid s_{v^*}, t) \cdot  \sum_{r} \mathbb{P}(s_{v^*}, r \mid t) \nonumber \\
            &= \sum_{v}\boldsymbol{\mathbf{F}}_{k, v} \cdot \sum_{v^*} \boldsymbol{\mathbf{E}}_{v, v^*} \cdot \boldsymbol{\theta}^*_{v^*}. \nonumber
\end{align}
Above, (\ref{eq:a}) holds by forced choice independence and (\ref{eq:b}) holds by error independence. The reverse model $\boldsymbol{\theta}^*_i = \mathbf{E}_i'(\mathbf{F}_i'\mathbf{O}_i)$ follows by the following factorization:
\begin{align}
     \mathbb{P}(s_{v^*} \mid t) &= \sum_{r,v,k} \mathbb{P}(s_{v^*} \mid r, s_v, o_k, t) \cdot \mathbb{P}(s_v \mid r, o_k, t) \cdot \mathbb{P}(o_k, r \mid t) \nonumber \\
       &= \sum_{r,v,k} \mathbb{P}(s_{v^*} \mid r, s_v, t) \cdot \mathbb{P}(s_v \mid r, o_k, t) \cdot \mathbb{P}(o_k, r \mid t) \label{eq:c} \\
       &= \sum_{r,v,k} \left( \frac{\mathbb{P}(s_{v} \mid r, s_{v^*}, t) \cdot \mathbb{P}(r,s_{v^*} \mid t)}{\mathbb{P}(r,s_v \mid t)} \right) \cdot \left( \frac{\mathbb{P}(o_k \mid r, s_v, t) \cdot \mathbb{P}(r,s_v \mid t)}{\mathbb{P}(r, o_k \mid t)} \right) \cdot \mathbb{P}(o_k, r \mid t) \nonumber \\
       &= \sum_{r,v,k} \mathbb{P}(s_{v} \mid s_{v^*}, t) \cdot \mathbb{P}(r,s_{v^*} \mid t) \cdot \mathbb{P}(o_k \mid s_v, t)  \label{eq:d} \\
       &= \sum_{v,k} \left(\frac{\mathbb{P}(s_{v}^* \mid s_{v}, t) \cdot \mathbb{P}(s_v \mid t)}{\mathbb{P}(s_{v^*} \mid t)} \right) \cdot \left( \frac{\mathbb{P}(s_v \mid o_k, t) \cdot \mathbb{P}(o_k \mid t)}{\mathbb{P}(s_v \mid t)} \right) \cdot \sum_r \mathbb{P}(r,s_{v^*} \mid t) \nonumber \\
       &= \sum_{v,k} \mathbb{P}(s_{v}^* \mid s_{v}, t) \cdot \mathbb{P}(s_v \mid o_k, t) \cdot \mathbb{P}(o_k \mid t) \nonumber \\
       &= \sum_{v} \mathbf{E}_{v^*,v}' \cdot \sum_k \mathbf{F}_{v,k}' \cdot \mathbf{O}_k \nonumber
\end{align}
where (\ref{eq:c}) holds by forced choice independence and (\ref{eq:d}) holds by forced choice and error independence.
\end{proof}

\subsubsection{Theorem \ref{thm:fc_underdeterminism}}

\begin{proof}

We remove dependence on $i$ from all terms for brevity. To begin, note that $\boldsymbol{\theta}^*$ is identifiable from $\mathbf{O}$ if and only if $\boldsymbol{\theta}^* = \mathbf{E}'(\mathbf{F}_i'\mathbf{O}) = \mathbf{F}\boldsymbol{\theta}^*$ is fully determined (where the system simplifies by taking $\mathbf{E}$ as the identity matrix). The system system $\mathbf{F} \boldsymbol{\theta}^*$ must be consistent because Theorem \ref{thm:forward_decomposition} establishes a solution. A consistent system with $n_o = |\mathcal{O}|$ equations and $n_s = |\mathcal{Q}|$ unknowns is fully determined if and only if rank($\boldsymbol{\mathbf{F}}$)$ = n_s$. 

We will first show that $\mathcal{Q} = \{ \{o_k \} : o_k \in \mathcal{O}\}$ implies that rank($\boldsymbol{\mathbf{F}}$)$ = n_s$. To begin, note that (1) $\sum_k  \mathbf{F}_{k, v} = 1, \; \forall v \in \{1, ..., n_s\}$ because each column in $\mathbf{F}$ represents a valid probability distribution; and (2) $\mathbf{F}_{k, v} = 0, \;\; \forall a \neq k,\; \forall v \in \{1, ..., n_s\} \;$ because $\mathbf{\Lambda}_{k,v} = 0 \implies \boldsymbol{F}_{k, v} = 0$. This implies that

$$
1 = \sum_a \mathbf{F}_{a, v} =  \sum_{a \neq k} \mathbf{F}_{a, v} + \mathbf{F}_{k, v} =  \mathbf{F}_{k, v}, \quad \forall v \in \{1,..., n_s\}. 
$$

Thus, each singleton set $\{o_k \} \in \mathcal{S}$ maps to a standard basis vector $\mathbf{e}_k \in \mathbb{R}^{n_o}$. Further, because $n_s=n_o$ by definition of $\mathcal{Q}$ and $\mathcal{O} \subseteq \mathcal{Q}$, each option must appear in exactly one set, giving us exactly $n_s=n_o$ distinct basis vectors. The rank of a matrix is equal to the number of linear independent column vectors. Because each of the $k$ standard basis vectors must be linearly independent, it follows that rank($\mathbf{F}$)$=n_o=n_s$.

We will show the reverse implication that rank($\mathbf{F}$) = $n_s$ $\implies$ $\mathcal{Q} = \{ \{o_k \} : o_k \in \mathcal{O} \}$ by contradiction. Suppose there exists a set $\mathcal{S}_v \in \mathcal{Q}$ containing more than one option, i.e., $|\mathcal{S}| > 1$. Let $\mathbf{v}$ denote the column of $\mathbf{F}$ corresponding to $\mathcal{S}_v$. Since $\mathcal{O} \subseteq \mathcal{Q}$, for each option $o_k \in \mathcal{S}$, there exists a column in $\mathbf{F}$ that is the standard basis vector $\mathbf{e}_k$, as shown above. Therefore, $\mathbf{v}$ can be written as a linear combination of these basis vectors: $\mathbf{v} = \sum_{k} \alpha_k \mathbf{e}_k$ where $\alpha_k = [0,1]$. This shows that column $\mathbf{v}$ is linearly dependent with the columns corresponding to singleton sets ${o_k}$ for $o_k \in \mathcal{S}$. This implies $\mathbf{F}$ cannot have $n_s$ linearly independent columns, contradicting rank($\mathbf{F}$) = $n_s$.

\end{proof}

\newpage
\subsubsection{Theorem \ref{ref:rank_consistency_monotonicity}.}
\begin{proof}

Proof by contradiction. Let $\mathbf{p}$ and $\mathbf{p}_*$ denote pairs of performance definitions with increasing cardinality (i.e., higher values being better). Let $Y^Z = a^J(\mathbb{P}_T^{J,Z}), \; Y^W = a^J(\mathbb{P}_T^{J,W}), \; Y^H = a^H(\mathbb{P}_T^{H})$ denote random functions of $T$ corresponding to definition $\mathbf{p}$. Let $Y^Z_* = a^J_*(\mathbb{P}_T^{J,Z}), \; Y^W_* = a^W_*(\mathbb{P}_T^{J,W}), \; Y^H_* = a^H_*(\mathbb{P}_T^{H})$ correspond to definition $\mathbf{p}_*$. Since $\mathbf{p}$ is not a monotone transformation of $\mathbf{p}_*$, by definition there must exist distributions $\{(\mathbb{P}_i^{J,Z}, \mathbb{P}_i^H), (\mathbb{P}_i^{J,W}, \mathbb{P}_i^H)\} \in \Delta \times \Delta$ corresponding to realizations of these random variables satisfying
$$
m(Y^Z, Y^H) < m(Y^W, Y^H), \quad\quad
m_*(Y^Z_*,Y^H_*) > m_(Y^W_*, Y^H_*).
$$

Now suppose that $\mathbb{P}^*$ places all marginal probability mass over $T$ on the $i$'th item -- i.e., $\mathbb{P}^*(T=t_i) = 1$. Then:
\begin{align*}
\delta_{\mathbf{p}}(Z,W) &=  \mathbb{E}_{\mathbb{P}^*}[m(Y^Z, Y^H) - m(Y^W, Y^H)] < 0 \\
\delta_{\mathbf{p}'}(Z,W) &= \mathbb{E}_{\mathbb{P}^*}[m_*(Y^Z_*, Y^H_*) - m_*(Y^W_*, Y^H_*)] > 0
\end{align*}

Thus, rank consistency is violated because there exists a distribution $\mathbb{P}^*$ for which $\mathcal{G}_{judge}^Z \succ_{\mathbf{p}_*} \mathcal{G}_{judge}^W$ but $\mathcal{G}_{judge}^W \succ_{\mathbf{p}} \mathcal{G}_{judge}^Z$. This provides a contradiction, proving the result.

\end{proof}

\subsection{Rank Consistency Under Rater Error}\label{sec:additional_analysis}


\begin{lemma}[Rank Consistency of MSE (srs/srs) Under Rater Error ]\label{lemma:mse_rater_error}
    Let $\boldsymbol{\theta}^{*}$ and $\boldsymbol{\theta} = \mathbf{E}\boldsymbol{\theta}^{*}$ denote the stable and observed response set distributions for human raters.\footnote{We omit subscript $i$ from all terms for brevity. We also omit superscript $h$ from human response set distribution, error, and multi-label vectors where the context is clear.} Let $\boldsymbol{\theta}^{J,Z}$ and $\boldsymbol{\theta}^{J,W}$ denote observed response set distributions for judge systems $\mathcal{G}^Z_{judge}$ and $\mathcal{G}^W_{judge}$ where both judge systems have a rater error matrix $\mathbf{E}^{J,Z}$ and $\mathbf{E}^{J,W}$ that is the identity. Let $\mathbf{\Lambda}$ be the binary matrix mapping response sets to options and define $\delta^* = \text{MSE}(\boldsymbol{\Omega}^{*}, \boldsymbol{\Omega}^{J,Z}) - \text{MSE}(\boldsymbol{\Omega}^{*}, \boldsymbol{\Omega}^{J,W})$ as the difference in MSE under error-free conditions. The ranking of judge systems using MSE with soft response set aggregation is preserved under human rating error if and only if:
\begin{equation}\label{eq:consistency_condition}
   \text{sign}((\boldsymbol{\theta}^{*} - \boldsymbol{\theta})^T \mathbf{\Lambda}^T \mathbf{\Lambda} (\boldsymbol{\theta}^{J,W} - \boldsymbol{\theta}^{J,Z})) = \text{sign}(\delta^*). 
\end{equation}
\end{lemma}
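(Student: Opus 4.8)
The plan is to expand the MSE difference under human rating error, identify the ``error-free'' part with $\delta^*$, and show the remaining terms collapse exactly to the inner product appearing in the claimed sign condition. Throughout, write $\boldsymbol{\Omega}^{*} = \boldsymbol{\Lambda}\boldsymbol{\theta}^{*}$, $\boldsymbol{\Omega} = \boldsymbol{\Lambda}\boldsymbol{\theta} = \boldsymbol{\Lambda}\mathbf{E}\boldsymbol{\theta}^{*}$ for the human multi-label vectors (stable and observed), and $\boldsymbol{\Omega}^{J,Z} = \boldsymbol{\Lambda}\boldsymbol{\theta}^{J,Z}$, $\boldsymbol{\Omega}^{J,W} = \boldsymbol{\Lambda}\boldsymbol{\theta}^{J,W}$ for the two judge systems (recall their error matrices are the identity, so observed equals stable for the judges). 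The quantity governing the ranking under observed human data is $\delta = \text{MSE}(\boldsymbol{\Omega}, \boldsymbol{\Omega}^{J,Z}) - \text{MSE}(\boldsymbol{\Omega}, \boldsymbol{\Omega}^{J,W})$, and we want $\text{sign}(\delta) = \text{sign}(\delta^*)$.

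\textbf{Key steps.} First I would use the standard algebraic identity $\|\mathbf{a}-\mathbf{b}\|_2^2 - \|\mathbf{a}-\mathbf{c}\|_2^2 = \|\mathbf{b}\|_2^2 - \|\mathbf{c}\|_2^2 - 2\,\mathbf{a}^T(\mathbf{b}-\mathbf{c})$ applied with $\mathbf{a}$ the human multi-label vector and $\mathbf{b},\mathbf{c}$ the two judge multi-label vectors. Applying this once with $\mathbf{a} = \boldsymbol{\Omega}$ and once with $\mathbf{a} = \boldsymbol{\Omega}^{*}$ and subtracting, the judge-only terms $\|\boldsymbol{\Omega}^{J,Z}\|_2^2 - \|\boldsymbol{\Omega}^{J,W}\|_2^2$ cancel, leaving
\[
\delta - \delta^{*} = -2\,(\boldsymbol{\Omega} - \boldsymbol{\Omega}^{*})^T(\boldsymbol{\Omega}^{J,Z} - \boldsymbol{\Omega}^{J,W}).
\]
Next I would substitute $\boldsymbol{\Omega} - \boldsymbol{\Omega}^{*} = \boldsymbol{\Lambda}(\boldsymbol{\theta} - \boldsymbol{\theta}^{*})$ and $\boldsymbol{\Omega}^{J,Z} - \boldsymbol{\Omega}^{J,W} = \boldsymbol{\Lambda}(\boldsymbol{\theta}^{J,Z} - \boldsymbol{\theta}^{J,W})$ to obtain $\delta = \delta^{*} + 2\,(\boldsymbol{\theta}^{*} - \boldsymbol{\theta})^T \boldsymbol{\Lambda}^T \boldsymbol{\Lambda}(\boldsymbol{\theta}^{J,Z} - \boldsymbol{\theta}^{J,W})$, which up to the harmless factor of $2$ and the sign flip on the last difference is precisely the expression in \eqref{eq:consistency_condition}. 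Finally, $\text{sign}(\delta) = \text{sign}(\delta^{*})$ holds if and only if the correction term $2\,(\boldsymbol{\theta}^{*} - \boldsymbol{\theta})^T \boldsymbol{\Lambda}^T \boldsymbol{\Lambda}(\boldsymbol{\theta}^{J,Z} - \boldsymbol{\theta}^{J,W})$ does not push $\delta$ across zero; I would argue this is equivalent to the correction term having the same sign as $\delta^{*}$ (when $\delta^* \neq 0$), which after rewriting $\boldsymbol{\theta}^{J,Z} - \boldsymbol{\theta}^{J,W} = -(\boldsymbol{\theta}^{J,W} - \boldsymbol{\theta}^{J,Z})$ is exactly the stated condition.

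\textbf{Main obstacle.} The routine part is the quadratic expansion; the step requiring care is the ``if and only if'' at the end. The naive reading --- that $\delta$ and $\delta^*$ share a sign iff the correction term shares a sign with $\delta^*$ --- is not literally true for arbitrary magnitudes (a correction with the same sign as $\delta^*$ never flips it, but a correction of opposite sign flips it only if it is large enough in magnitude). So I expect the intended statement implicitly treats the worst case over the family of error matrices $\mathbf{E}$ (or over admissible $\boldsymbol{\theta}$ consistent with the observed data), i.e., rank consistency is required to hold robustly; under that reading, the biconditional reduces cleanly to the sign-matching condition since one can always scale the error perturbation $\boldsymbol{\theta}^{*} - \boldsymbol{\theta}$ to make the correction dominate whenever its sign opposes $\delta^*$. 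I would state this quantifier explicitly at the start of the proof, then the remaining argument is the direct computation above. An alternative, if the lemma intends a fixed $\mathbf{E}$, is that ``preserved'' means $\text{sign}(\delta) = \text{sign}(\delta^*)$ is being \emph{defined} as the content of \eqref{eq:consistency_condition} via $\delta = \delta^* + (\text{correction})$ together with the observation that both summands sharing a sign is sufficient and, in the relevant regime, necessary; I would reconcile with the surrounding text before finalizing which reading to adopt.
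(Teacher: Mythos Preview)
Your computational route is the same as the paper's: expand the two MSE differences as quadratic forms, note that the judge-only terms are shared, and reduce everything to a comparison of the cross terms. The paper writes out $\delta^*$ and $\delta$ in full, observes the common first two terms, and then does a case split on $\text{sign}(\delta^*)$; your use of the identity $\|\mathbf a-\mathbf b\|^2-\|\mathbf a-\mathbf c\|^2=\|\mathbf b\|^2-\|\mathbf c\|^2-2\,\mathbf a^T(\mathbf b-\mathbf c)$ to pass directly to $\delta-\delta^*$ is a slightly cleaner packaging of the same computation.

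Your ``main obstacle'' observation is well-founded and in fact goes beyond what the paper's proof establishes. The paper's two cases argue only that the sign condition is \emph{sufficient}: in Case~1 the chain of inequalities yields $\delta\le\delta^*<0$, and symmetrically in Case~2 it yields $\delta\ge\delta^*>0$. Neither case argues necessity, so the ``if and only if'' in the lemma is not fully justified by the paper's own argument either; your proposed reading as a robustness statement over admissible error perturbations is a sensible way to recover a genuine biconditional.

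One sign to recheck: your derived sufficient condition is that the correction $2(\boldsymbol\theta^{*}-\boldsymbol\theta)^T\boldsymbol\Lambda^T\boldsymbol\Lambda(\boldsymbol\theta^{J,Z}-\boldsymbol\theta^{J,W})$ share the sign of $\delta^*$. Swapping to the lemma's $(J,W)-(J,Z)$ ordering introduces a minus sign, so this is actually $\text{sign}\bigl((\boldsymbol\theta^{*}-\boldsymbol\theta)^T\boldsymbol\Lambda^T\boldsymbol\Lambda(\boldsymbol\theta^{J,W}-\boldsymbol\theta^{J,Z})\bigr)=-\text{sign}(\delta^*)$, not $=\text{sign}(\delta^*)$ as the lemma states. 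The paper's own case analysis lands on exactly this opposite-sign condition (Case~1 with $\delta^*<0$ ends at the inner product being $\ge 0$; Case~2 with $\delta^*>0$ ends at it being $\le 0$), so the discrepancy appears to be a sign typo in the lemma statement rather than a flaw in either your argument or the paper's derivation.
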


This lemma provides conditions under which measuring the performance of a judge system against error-corrupted versus error-free human ratings yields a consistent ranking of judge systems (when measured against MSE(srs/srs)). The condition essentially requires that the direction of the error-induced shift in human ratings ($\boldsymbol{\theta}^{*} - \boldsymbol{\theta}$) matches the direction of the stable response set shift across judge systems ($\boldsymbol{\theta}^{J,W} - \boldsymbol{\theta}^{J,Z}$) when projected to the multi-label space. If human rating error and judge system performance differences shift the response set distribution in the same direction, the ranking of judge systems will be consistent for error-free and error-corrupted ratings. Conversely, rankings can invert under an inverse relationship. 

E.q. (\ref{eq:consistency_condition}) is satisfied in our experimental setup because the ensemble of judge system rating distributions is generated by adding random perturbations (i.e., uncorrelated with rater error) to the human stable response set vector. Thus we see little change in the reliability of MSE (srs/srs) across settings with no rater error (Figure \ref{fig:convergence_plot}, center) and rater error (Figure \ref{fig:convergence_plot}, right).

\begin{proof}[Proof of Lemma \ref{lemma:mse_rater_error}]

For brevity, let $\mathbf{M} = \mathbf{\Lambda}^T \mathbf{\Lambda}$. The difference in judge system MSE measured against the multi-label human rating vector $\mathbf{\Omega}^* = \mathbf{\Lambda} \boldsymbol{\theta}^*$ derived from the stable response set distribution is given by: 
\begin{align*}
\delta^* &= \text{MSE}(\mathbf{\Omega}^*, \; \mathbf{\Omega}^{J,Z}) -  \text{MSE}(\mathbf{\Omega}^*, \; \mathbf{\Omega}^{J,W}) \\
 &= ||\mathbf{\Lambda} \boldsymbol{\theta}^{*} - \mathbf{\Lambda} \boldsymbol{\theta}^{J,Z}||^2_2 - ||\mathbf{\Lambda} \boldsymbol{\theta}^{*} - \mathbf{\Lambda} \boldsymbol{\theta}^{J,W}||^2_2 \\
&= (\boldsymbol{\theta}^{*} - \boldsymbol{\theta}^{J,Z})^T \mathbf{M} (\boldsymbol{\theta}^{*} - \boldsymbol{\theta}^{J,Z}) - (\boldsymbol{\theta}^{*} - \boldsymbol{\theta}^{J,W})^T \mathbf{M} (\boldsymbol{\theta}^{*} - \boldsymbol{\theta}^{J,W}) \\
&= (\boldsymbol{\theta}^{*})^T \mathbf{M} \boldsymbol{\theta}^{*} - (\boldsymbol{\theta}^{*})^T \mathbf{M} \boldsymbol{\theta}^{J,Z} - (\boldsymbol{\theta}^{J,Z})^T \mathbf{M} \boldsymbol{\theta}^{*} + (\boldsymbol{\theta}^{J,Z})^T \mathbf{M} \boldsymbol{\theta}^{J,Z} \\
& \quad\quad - (\boldsymbol{\theta}^{*})^T \mathbf{M} \boldsymbol{\theta}^{*} + (\boldsymbol{\theta}^{*})^T \mathbf{M} \boldsymbol{\theta}^{J,W} + (\boldsymbol{\theta}^{J,W})^T \mathbf{M} \boldsymbol{\theta}^{*} - (\boldsymbol{\theta}^{J,W})^T \mathbf{M} \boldsymbol{\theta}^{J,W} \\
&= (\boldsymbol{\theta}^{J,Z})^T \mathbf{M} \boldsymbol{\theta}^{J,Z} - (\boldsymbol{\theta}^{J,W})^T \mathbf{M} \boldsymbol{\theta}^{J,W} + 2(\boldsymbol{\theta}^{*})^T \mathbf{M} (\boldsymbol{\theta}^{J,W} - \boldsymbol{\theta}^{J,Z})
\end{align*}

Let $\mathbf{\Omega} = \mathbf{\Lambda} (\mathbf{E} \boldsymbol{\theta})$ denote the multi-label vector recovered from the observed response set distribution. Applying the same derivation as above to the error-corrupted MSE metric yields: 
\begin{align*}
\delta &= \text{MSE}(\mathbf{\Omega}, \; \mathbf{\Omega}^{J,Z}) -  \text{MSE}(\mathbf{\Omega}, \; \mathbf{\Omega}^{J,W}) \\
&= (\boldsymbol{\theta}^{J,Z})^T \mathbf{M} \boldsymbol{\theta}^{J,Z} - (\boldsymbol{\theta}^{J,W})^T \mathbf{M} \boldsymbol{\theta}^{J,W} + 2(\mathbf{E} \boldsymbol{\theta}^{*})^T \mathbf{M} (\boldsymbol{\theta}^{J,W} - \boldsymbol{\theta}^{J,Z})
\end{align*}

Observe that the first two terms appear in both expansions. Thus we need to focus on the third term while showing the conditions required for rank consistency --- i.e., \text{sign}($\delta$) = \text{sign}($\delta^*$).

\begin{itemize}
    \item \textbf{Case 1:} $\delta^* < 0$ ($\mathcal{G}^Z_{judge}$ is better than $\mathcal{G}^W_{judge}$ under no rater error.) For both inequalities to hold, we need:
    \begin{align*}
    2(\mathbf{E} \boldsymbol{\theta}^{*})^T \mathbf{M} (\boldsymbol{\theta}^{J,W} - \boldsymbol{\theta}^{J,Z}) &\leq 2(\boldsymbol{\theta}^{*})^T \mathbf{M} (\boldsymbol{\theta}^{J,W} - \boldsymbol{\theta}^{J,Z}) \\
    (\mathbf{E} \boldsymbol{\theta}^{*})^T \mathbf{M} (\boldsymbol{\theta}^{J,W} - \boldsymbol{\theta}^{J,Z}) &\leq (\boldsymbol{\theta}^{*})^T \mathbf{M} (\boldsymbol{\theta}^{J,W} - \boldsymbol{\theta}^{J,Z}) \\
    (\boldsymbol{\theta}^{*})^T \mathbf{M} (\boldsymbol{\theta}^{J,W} - \boldsymbol{\theta}^{J,Z}) - (\mathbf{E} \boldsymbol{\theta}^{*})^T \mathbf{M} (\boldsymbol{\theta}^{J,W} - \boldsymbol{\theta}^{J,Z}) &\geq 0 \\
    ((\boldsymbol{\theta}^{*})^T - (\mathbf{E} \boldsymbol{\theta}^{*})^T) \mathbf{M} (\boldsymbol{\theta}^{J,W} - \boldsymbol{\theta}^{J,Z}) &\geq 0 \\
    (\boldsymbol{\theta}^{*} - \mathbf{E} \boldsymbol{\theta}^{*})^T \mathbf{M} (\boldsymbol{\theta}^{J,W} - \boldsymbol{\theta}^{J,Z}) &\geq 0
    \end{align*}

    \item \textbf{Case 2}: $\delta^* > 0$ ($\mathcal{G}^W_{judge}$ is better than $\mathcal{G}^Z_{judge}$ under no rater error.).
    For rank consistency, we need $\delta > 0$ as well. Following similar steps, we get:
\end{itemize}
\begin{align*}
2(\mathbf{E}\boldsymbol{\theta}^{*})^T \mathbf{M} (\boldsymbol{\theta}^{J,W} - \boldsymbol{\theta}^{J,Z}) &\geq 2(\boldsymbol{\theta}^{*})^T \mathbf{M} (\boldsymbol{\theta}^{J,W} - \boldsymbol{\theta}^{J,Z})\\
-2(\mathbf{E}\boldsymbol{\theta}^{*})^T \mathbf{M} (\boldsymbol{\theta}^{J,W} - \boldsymbol{\theta}^{J,Z}) &\leq -2(\boldsymbol{\theta}^{*})^T \mathbf{M} (\boldsymbol{\theta}^{J,W} - \boldsymbol{\theta}^{J,Z}) \\
2(\mathbf{E}\boldsymbol{\theta}^{*})^T \mathbf{M} (\boldsymbol{\theta}^{J,Z} - \boldsymbol{\theta}^{J,W}) &\leq 2(\boldsymbol{\theta}^{*})^T \mathbf{M} (\boldsymbol{\theta}^{J,Z} - \boldsymbol{\theta}^{J,W}) \\
(\boldsymbol{\theta}^{*} - \mathbf{E} \boldsymbol{\theta}^{*})^T \mathbf{M} (\boldsymbol{\theta}^{J,Z} - \boldsymbol{\theta}^{J,W}) &\geq 0\\
\end{align*}

\end{proof}

\newpage

\begin{table}[htbp]
    \centering
    \renewcommand{\arraystretch}{1.2}
    \begin{tabular}{>{\centering\arraybackslash}p{2.7cm} >{\centering\arraybackslash}p{2.6cm} >{\centering\arraybackslash}p{3cm} c}
        \toprule
        \textbf{Rating Task} & \textbf{Property} & \textbf{Citation} & \textbf{Ratings per Item} \\
        \midrule
        Civil Comments & Toxicity & \citep{civil_comments} & 10+ \\
        
        MNLI & Entailment & \citep{williams2017broad} & 100 \\
        
        SNLI & Entailment & \citep{bowman2015large} & 100 \\
        
        $\alpha$-NLI & Entailment & \citep{nie2019adversarial} & 100 \\
        
        SummEval (Relevance) & Relevance & \citep{fabbri2021summeval} & 8 \\
        
        SummEval (Coherence) & Coherence & \citep{fabbri2021summeval} & 8 \\
        
        SummEval (Consistency) & Factuality & \citep{fabbri2021summeval} & 8 \\
        
        SummEval (Fluency) & Fluency & \citep{fabbri2021summeval} & 8 \\
        
        QAGS & Factuality & \citep{wang2020asking} & 3 \\
        
        TopicalChat \\ (Uses Knowledge) & Uses Knowledge & \citep{gopalakrishnan2023topical} & 3 \\
        
        TopicalChat (Understandable) & Understandable & \citep{gopalakrishnan2023topical} & 3 \\
        \bottomrule
    \end{tabular}
    \vspace{1mm}
    \caption{An overview of the eleven rating tasks included in our empirical analysis. Each task includes the property being evaluated and number of human ratings per item.}
    \label{tab:task_summary}
\end{table}

\section{Real-Data Experiments: Setup Details and Additional Results}\label{appendix:real_data_experiments}

\textbf{Rating Task Configuration.} Table \ref{tab:task_summary} provides an overview of all rating tasks included in our experiments.  Tables \ref{tab:civil_comments_mapping}-\ref{tab:binary_mapping} illustrate the approach used to reconstruct the response set distribution from the forced choice distribution via the sensitivity parameter $\beta^H_t$. Table \ref{tab:binary_mapping} applies to all rating tasks with binary Yes/No options \citep{fabbri2021summeval,wang2020asking, gopalakrishnan2023topical}.

\textbf{Prompts.} We provide forced-choice and response set variants of all prompts in Fig. \ref{fig:civil-comments-prompts}-\ref{fig:topical_chat_understandable} of $\S$ \ref{subsecsec:rating_prompts}. The ``no explanations'' instruction included in prompts pertains to the final model output. It does not prohibit a reasoning-enabled model (e.g., o3-Mini) from producing a reasoning trace at inference time.\looseness=-1 

\textbf{Model Inference Settings.} We sample all models with a temperature of 1.0. For all models, we also limit \texttt{max\_tokens} used for generation to 5. This low max token limit feasible because only few tokens are needed to provide a forced-choice or response set rating (e.g., ''A'', ``BB''). When using a reasoning-enabled model (e.g., o3-Mini), we set the max token length for the reasoning trace to 1024 tokens.\looseness=-1 

\textbf{Invalid Responses and Robustness Testing.} For all rating tasks, we assign a \textit{null} forced choice option $\emptyset \in \mathcal{O}$ and response set $\{\emptyset\} \in \mathcal{Q}$ that are selected when a judge system returns an invalid character. Probability mass assigned to this null option is penalized in human--judge agreement metrics. Because we specifically study factors in the \textit{human} rating process that can confound meta-evaluation, we do not systematically study the influence of factors such as prompt formatting \citep{sclar2023quantifying} and option ordering \citep{pezeshkpour2023large} on our results. In practice, meta-evaluation designers may also want to report such analysis as part of their validation pipeline.

\textbf{Item Sampling.}  To match common LLM-as-a-judge meta-evaluation workflows that conduct analysis on a small corpus of items, we sub-sample all rating tasks to 200 ratings per item. We randomly sample items for all rating tasks apart from civil comments, which is sampled via a stratified random sampling approach to select comments with an observed agreement level in the range $[0.2, 0.5]$. 

\textbf{Summ Eval Task Design.} SummEval ratings tasks were originally collected on a 1-5 Likert scale. Because this corresponds to many response sets $31=2^5-1$ in comparison to the number of ratings collected per item ($8$), we discretize the scale by assigning a positive binary label if a rater selected $\geq$ 4. This discretization improves the finite sample stability of estimated agreement metrics (\S \ref{appendix:implementation}).

\textbf{Cost and Inference Time.} The total cost of running all models was $199.76$. Each rating task took 30 minutes to run when models were run in parallel, with the exception of Claude Sonnet 3.5, which took approximately 2 hours per rating task due to high API response latency.

\textbf{Estimation Details.} We now outline our approach for estimating human--judge agreement metrics via a finite sample rating corpus. For each item $i$, we estimate each term in our rating model:

\begin{enumerate}
    \item \textbf{Judge system:} Estimate $\hat{\mathbf{O}}^J_i$ (forced choice prompt), $\boldsymbol{\hat{\Omega}}_i = \boldsymbol{\Lambda} \boldsymbol{\hat{\theta}}^J_i$ (response set prompt) via empirical frequencies from judge responses.
    
    \item \textbf{Humans:} Estimate $\hat{\mathbf{O}}^H_i$ via empirical frequencies from forced choice human ratings. Next, apply reverse rating model $\hat{\mathbf{\Omega}}^H_i = \mathbf{\Lambda} (\mathbf{F}'_i (\hat{\mathbf{O}}^H_i))$, where $\mathbf{F}'_i$ is recovered from the sensitivity parameter $\beta^H$.
\end{enumerate}

We then take the empirical average over the evaluation corpus to estimate the human--judge agreement metric:
$$
\hat{M}(\hat{Y}^J, \hat{Y}^H) = \frac{1}{n} \sum_{i} m(\hat{Y}_i^J, \hat{Y}_i^H) = \frac{1}{n} \sum_{i} m(a^J(\hat{\mathbf{O}}^J_i, \boldsymbol{\hat{\Omega}}_i^J),a^H(\hat{\mathbf{O}}^H_i, \boldsymbol{\hat{\Omega}}_i^H))
$$
where the third term illustrates the full expansion with the application of aggregation functions to the estimated rating terms. We estimate downstream performance metrics using the same approach. 

\textbf{Computation of $\text{MSE } \hat{F}$.} Our experiments reporting $\text{MSE } F$ assume oracle knowledge of $\mathbf{F}'$ in step (2) of the estimation procedure outlined above. Our experiments reporting $\text{MSE } \hat{F}$ apply step (2), but recover $\hat{\mathbf{\Omega}}^H_i = \mathbf{\Lambda} (\hat{\mathbf{F}}'_i (\hat{\mathbf{O}}^H_i))$ via $\hat{\mathbf{F}}'_i$.
To estimate $\hat{\mathbf{F}}'_i$, we sample an auxiliary corpus with 200 (forced choice, response set) paired ratings. We sample one paired rating for each item in the validation corpus. We then use this auxiliary corpus to compute sample estimates of the conditional probabilities in the matrix $\hat{\mathbf{F}}'$, where each entry denotes the probability of a rater selecting the $q$th response set given that they picked the $k$th forced choice option. We assume that $\hat{\mathbf{F}}'$ is fixed across items to improve sampling efficiency. This works well in practice in our setting (Fig \ref{fig:main_analysis}).

\textbf{Additional Experimental Results.} Below, we report a version of our main findings with an expanded set of metrics (Fig. \ref{fig:main_results_full}), sensitivity parameter estimates recovered from all models across all tasks (Fig. \ref{fig:beta-analysis-full}), and detailed rank analysis for pairs of performance metrics for all rating tasks (Figs. \ref{fig:alphanli_ranking_breakdown}-\ref{fig:summ_eval_consistency}).

\newpage
\begin{table}[htbp]
    \centering
    \renewcommand{\arraystretch}{1.2}
    \setlength{\tabcolsep}{6pt}
    \begin{tabular}{|l|c|c|c|c|c|c|c|}
        \hline
        \multirow{2}{*}{\textbf{Options}} & \multicolumn{7}{c|}{\textbf{Response Sets}} \\
        \cline{2-8}
        & \{VT\} & \{T\} & \{N/U\} & \{VT,T\} & \{T,N/U\} & \{VT,N/U\} & \{VT,T,N/U\} \\
        \hline
        VT & 1 & 0 & 0 & 0 & 0 & 0 & 0 \\
        \hline
        T & 0 & 1 & 0 & 0 & 0 & 0 & 0 \\
        \hline
        N/U & 0 & $\beta^H_t$ & $1-\beta^H_t$ & 0 & 0 & 0 & 0 \\
        \hline
    \end{tabular}
    \vspace{2mm}
    \caption{Civil Comments rating task \citep{civil_comments}. Description of how the sensitivity parameter ($\beta^H_t$) is used to construct the forced choice translation matrix that recovers the response set distribution from the forced choice distribution. Positive options are $\mathcal{O}_{+} = \{\text{Toxic}, \text{Very Toxic}\}$. VT = Very Toxic, T = Toxic, N/U = No/Unsure. For completeness, note that technically this illustrates the \textit{reverse} forced choice translation matrix $\mathbf{F}'$ (forced choice $\rightarrow$ response set distribution) as opposed to the forward translation matrix $\mathbf{F}$ that maps the response set to forced choice distribution (see $\S$ \ref{appendix:theory}).}
    \label{tab:civil_comments_mapping}
\end{table}

\begin{table}[htbp]
    \centering
    \renewcommand{\arraystretch}{1.4}
    \setlength{\tabcolsep}{8pt}
    \begin{tabular}{|l|c|c|c|c|c|c|c|}
        \hline
        \multirow{2}{*}{\textbf{Options}} & \multicolumn{7}{c|}{\textbf{Response Sets}} \\
        \cline{2-8}
        & \{E\} & \{N\} & \{C\} & \{E, N\} & \{E, C\} & \{N, C\} & \{E, N, C\} \\
        \hline
        E & 1 & 0 & 0 & 0 & 0 & 0 & 0 \\
        \hline
        N & 0 & 1 & 0 & 0 & 0 & 0 & 0 \\
        \hline
        C & $\beta^H_t$ & 0 & $1-\beta^H_t$ & 0 & 0 & 0 & 0 \\
        \hline
    \end{tabular}
    \vspace{2mm}
    \caption{Natural Language Inference tasks (SNLI, MNLI). Description of how the sensitivity parameter ($\gamma$) is used to construct the forced choice translation matrix that recovers the response set distribution from the forced choice distribution. E=Entailment, C=Contradiction, N=Neutral. Applies to SNLI \citep{bowman2015large} and MNLI \citep{williams2017broad} datasets.}
    \label{tab:nli_response_mapping}
\end{table}

\begin{table}[htbp]
    \centering
    \renewcommand{\arraystretch}{1.4}
    \setlength{\tabcolsep}{10pt}
    \begin{tabular}{|c|c|c|c|c|}
        \hline
        \multirow{2}{*}{\textbf{Options}} & \multicolumn{3}{c|}{\textbf{Response Sets}} \\
        \cline{2-4}
        & \{Positive\} & \{Negative\} & \{Positive, Negative\} \\
        \hline
        Positive & 1 & 0 & 0 \\
        \hline
        Negative & $\beta^H_t$ & $1-\beta^H_t$ & 0 \\
        \hline
    \end{tabular}
    \vspace{2mm}
    \caption{Binary classification tasks. Description of how the sensitivity parameter ($\beta^H_t$) is used to construct the forced choice translation matrix that recovers the response set distribution from the forced choice distribution. Applies to rating tasks in QAGS \citep{wang2020asking}, SummEval \citep{fabbri2021summeval}, $\alpha$-NLI \citep{nie2019adversarial}, and TopicalChat \citep{gopalakrishnan2023topical}. 
    }
    \label{tab:binary_mapping}
\end{table}

\clearpage
\newpage

\begin{figure}
    \centering
    \includegraphics[width=\linewidth]{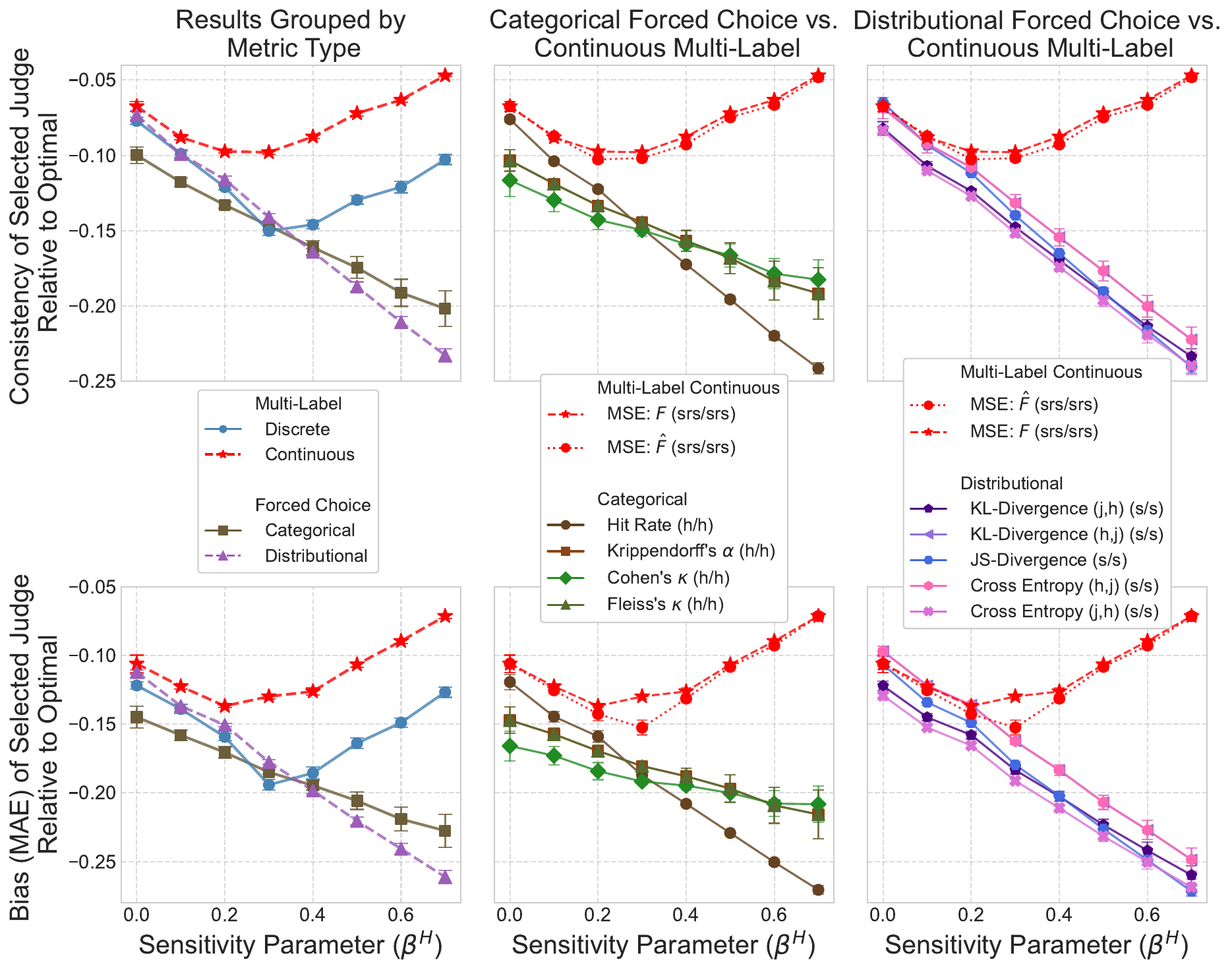}
    \caption{Main results with an extended set of metrics (see Fig. \ref{fig:main_analysis}). Center panel: Fleiss's $\kappa$ performs similarly to Krippendorff's $\alpha$. Right panel: KL-Divergence (j,h) and Cross Entropy (j,h) perform robustly and KL-Divergence (h,j) and Cross Entropy (h,j) perform poorly across settings.}
    \label{fig:main_results_full}
\end{figure}

\newpage
\begin{figure}
    \centering
    \includegraphics[width=\linewidth]{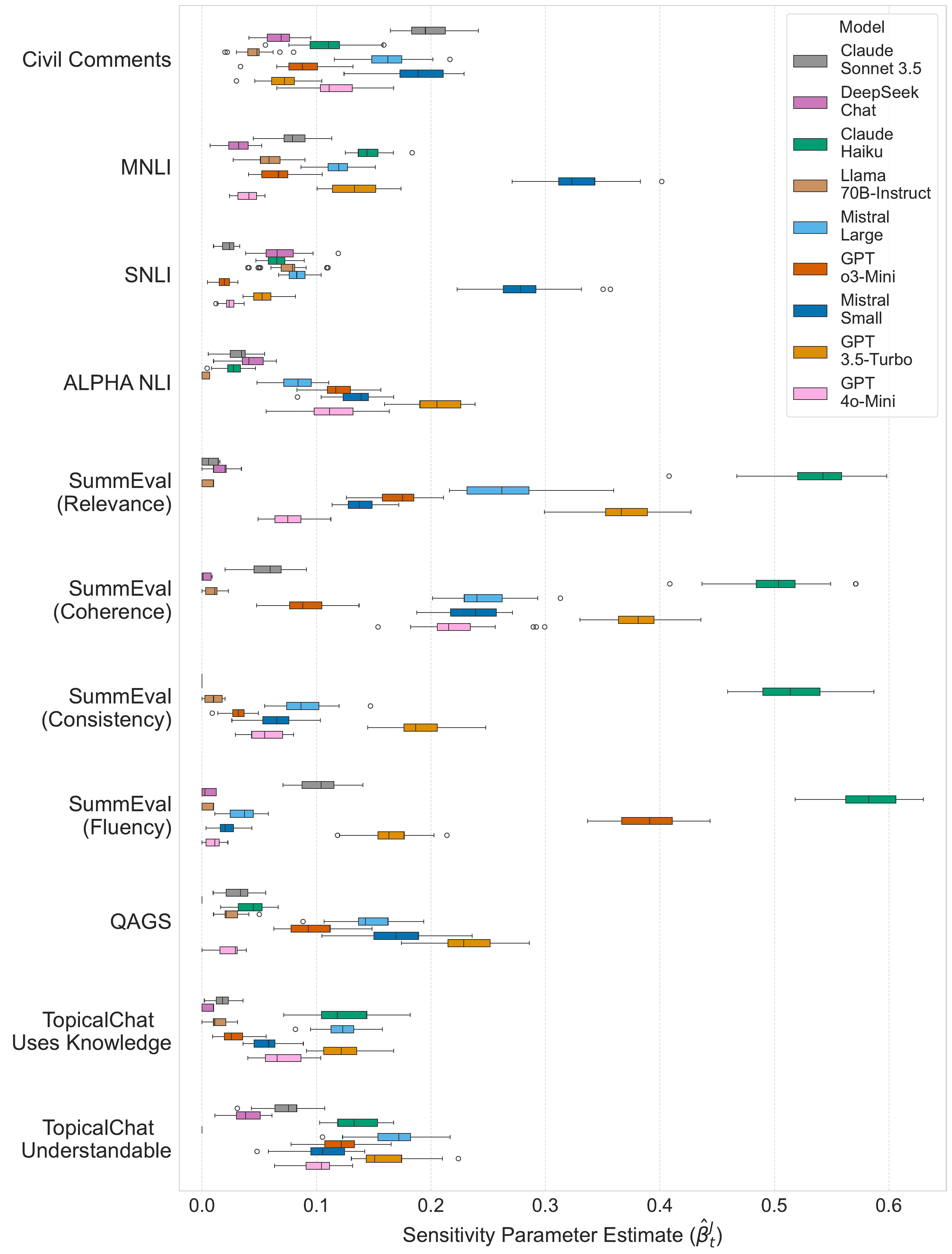}
    \caption{Task-specific sensitivity parameters ($\hat{\beta}_t^J$) recovered from all judge systems across tasks (non-abridged version of Fig. \ref{fig:beta_analysis}). Estimates vary across (1) judge systems within the same task, and (2) across tasks. This indicates heterogeneity in how judge systems respond to indeterminacy.}
    \label{fig:beta-analysis-full}
\end{figure}

\newpage

\newpage
\begin{figure*}[htbp]
\centering

\centering
\includegraphics[width=\textwidth, trim=0 280 0 0, clip]{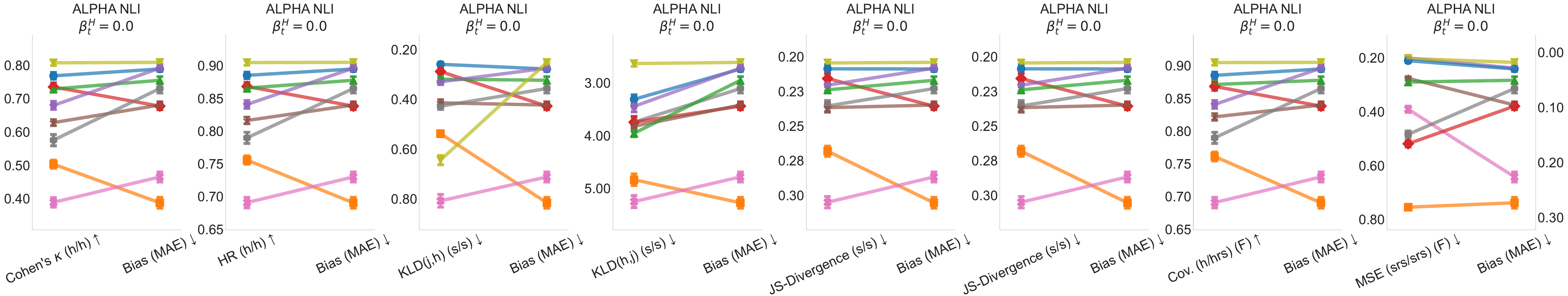}
\label{fig:subplot_a}

\includegraphics[width=\textwidth, trim=0 280 0 0, clip]{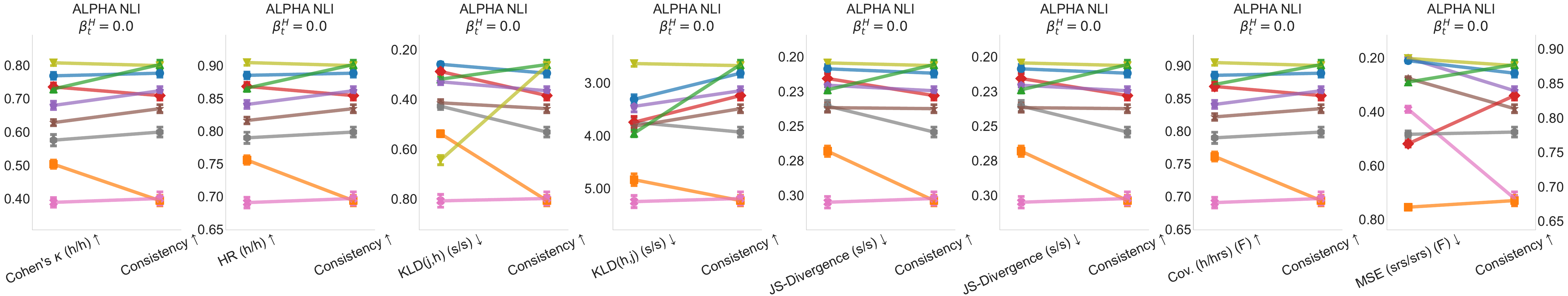}
\label{fig:subplot_b}

\includegraphics[width=\textwidth, trim=0 280 0 0, clip]{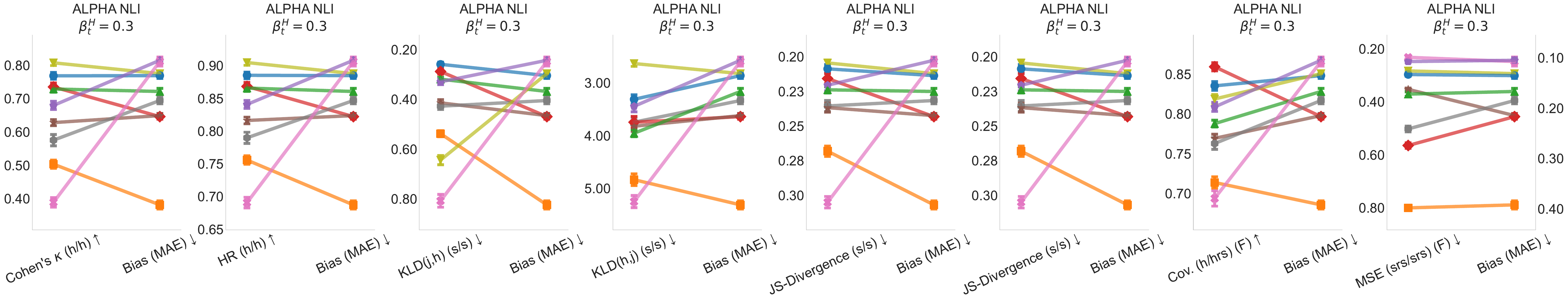}
\label{fig:subplot_c}

\includegraphics[width=\textwidth]{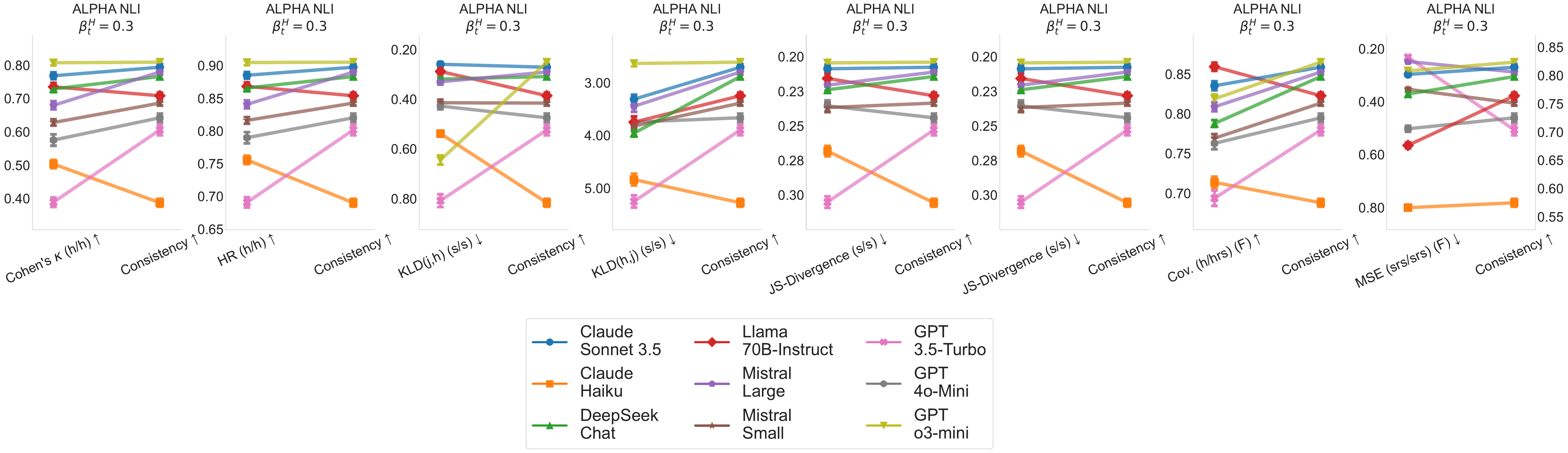}
\label{fig:subplot_d}

\caption{Ranking of models determined by human--judge agreement metrics versus downstream metrics for the $\alpha$-NLI rating task. Top two rows: $\beta^H_t=0$, Bottom two rows: $\beta^H_t=0.3$. }
\label{fig:alphanli_ranking_breakdown}
\end{figure*}
\newpage

\begin{figure*}[htbp]
\centering

\centering
\includegraphics[width=\textwidth, trim=0 280 0 0, clip]{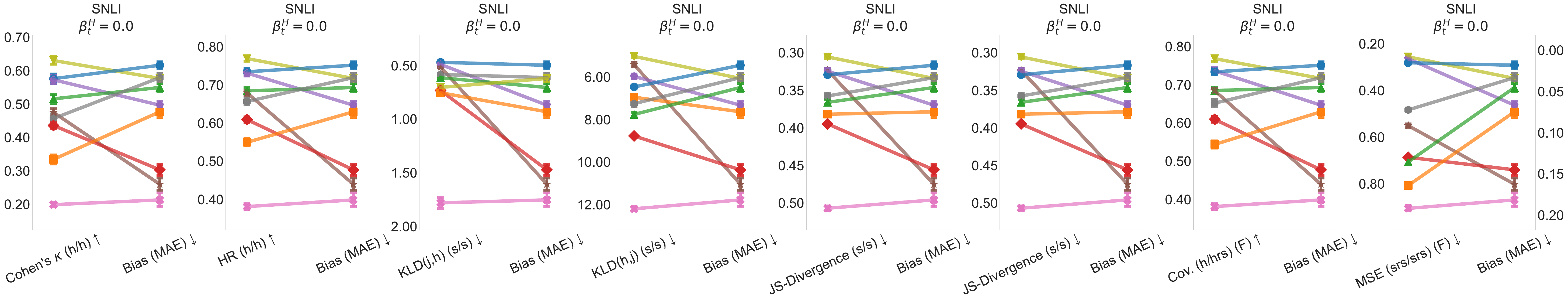}

\includegraphics[width=\textwidth, trim=0 280 0 0, clip]{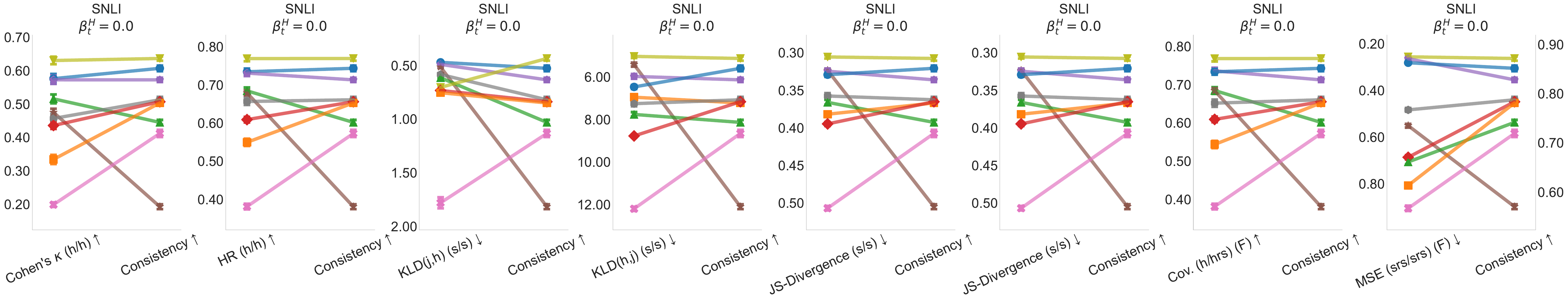}

\includegraphics[width=\textwidth, trim=0 280 0 0, clip]{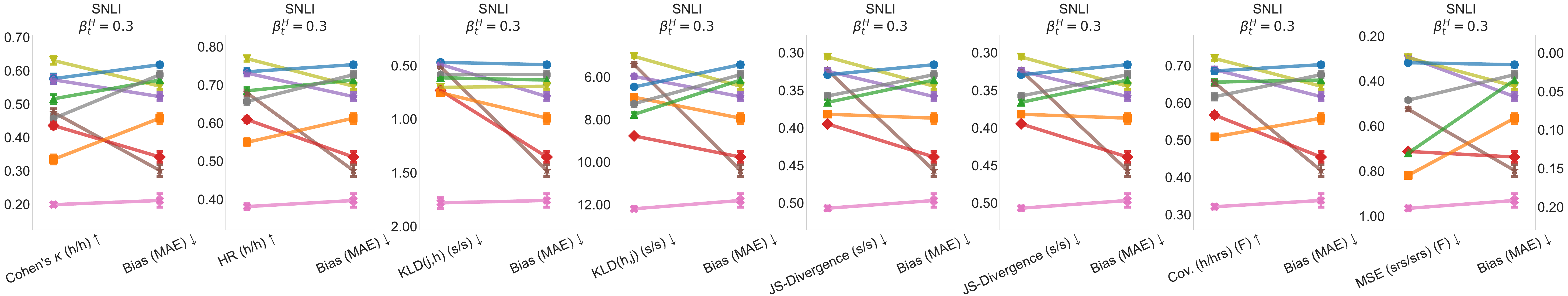}

\includegraphics[width=\textwidth]{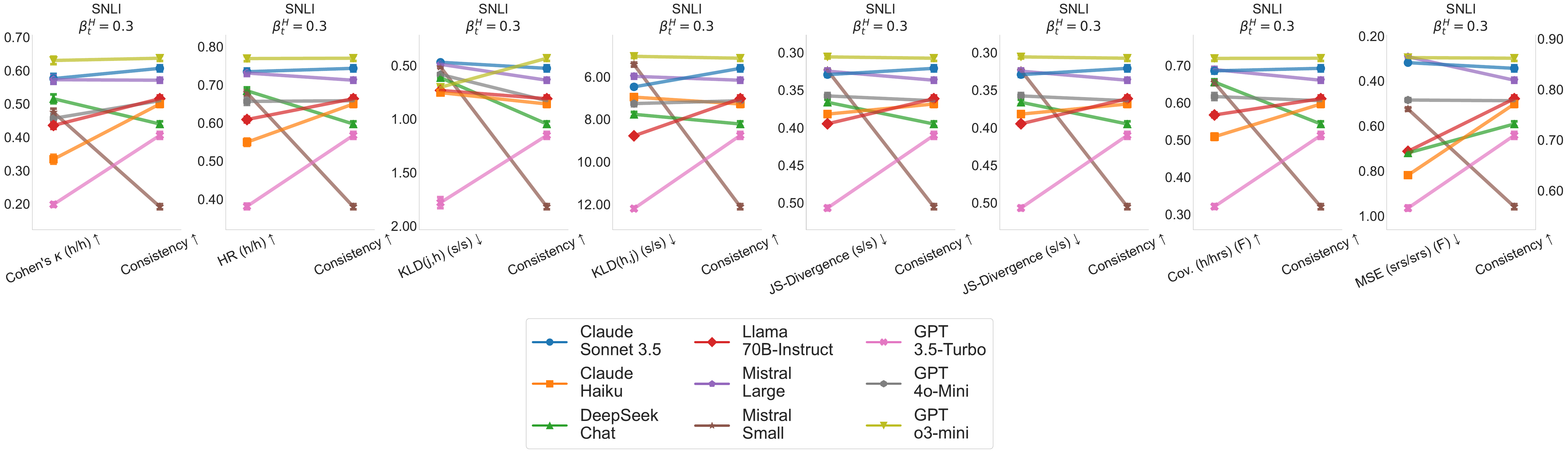}

\caption{Ranking of models determined by human--judge agreement metrics versus downstream metrics for the SNLI rating task. Top two rows: $\beta^H_t=0$, Bottom two rows: $\beta^H_t=0.3$. }
\label{fig:snli_nli_ranking_breakdown}
\end{figure*}


\begin{figure*}[htbp]
\centering

\centering
\includegraphics[width=\textwidth, trim=0 280 0 0, clip]{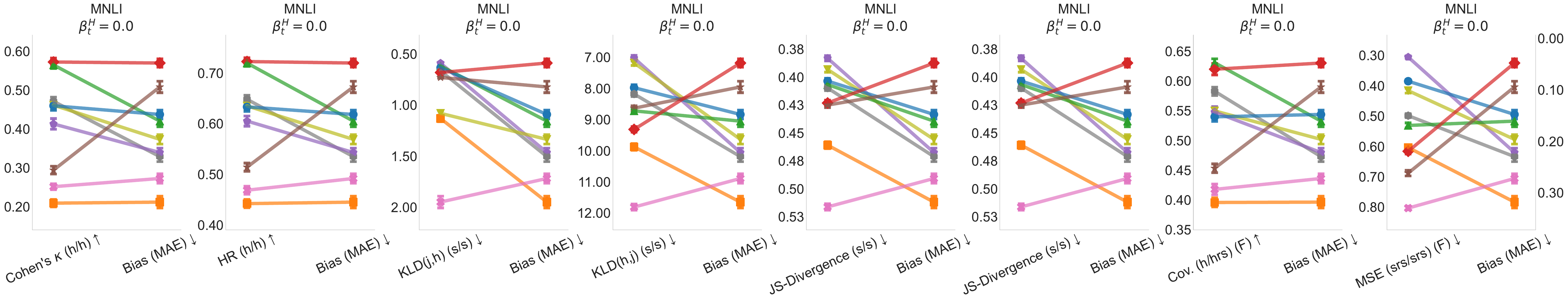}

\includegraphics[width=\textwidth, trim=0 280 0 0, clip]{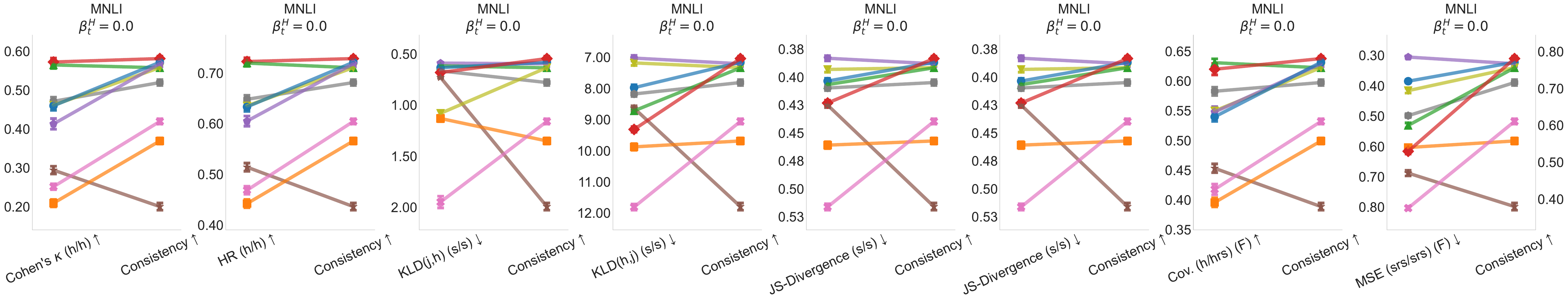}

\includegraphics[width=\textwidth, trim=0 280 0 0, clip]{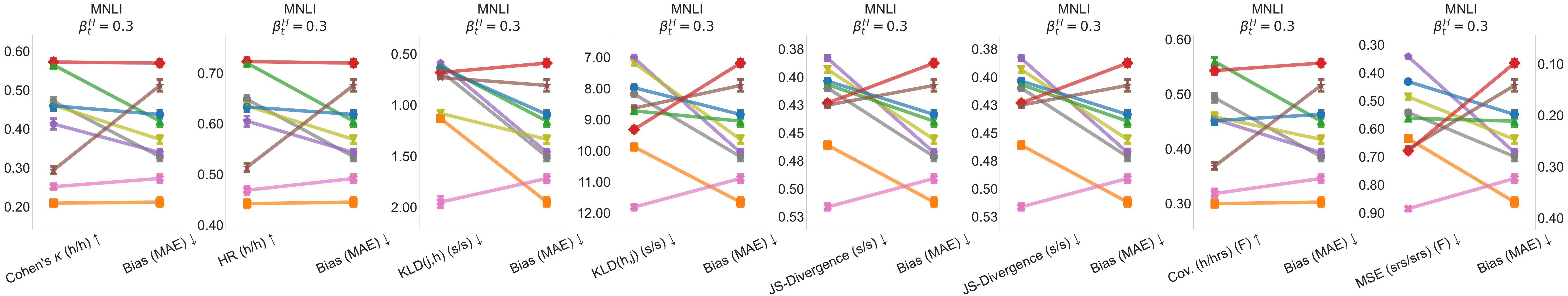}

\includegraphics[width=\textwidth]{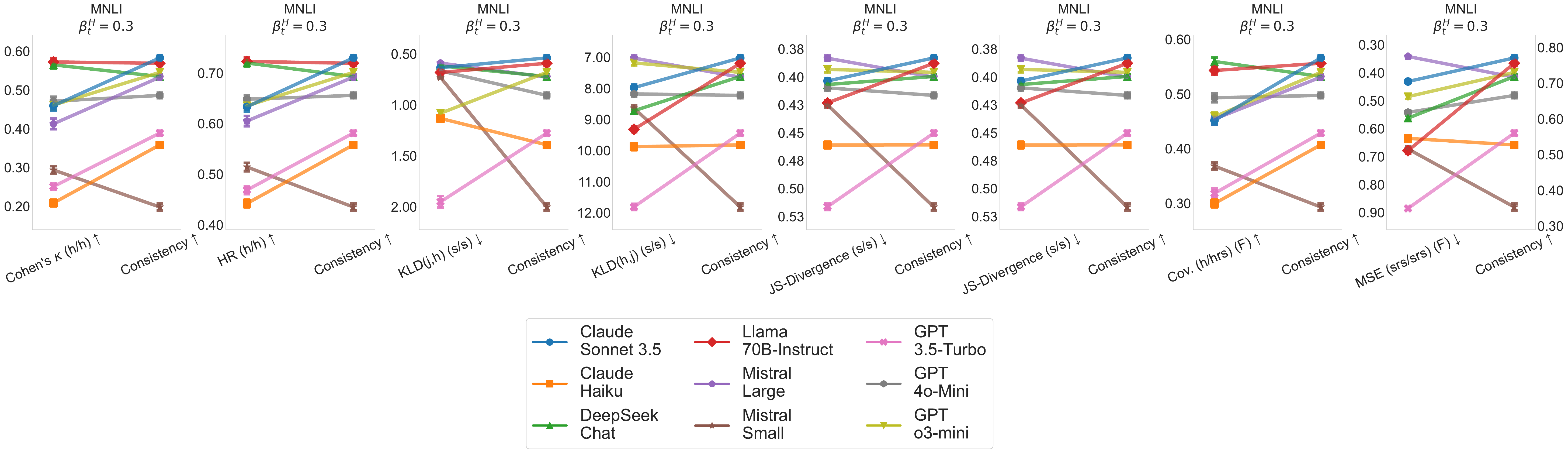}

\caption{Ranking of models determined by human--judge agreement metrics versus downstream metrics for the MNLI rating task. Top two rows: $\beta^H_t=0$, Bottom two rows: $\beta^H_t=0.3$. }
\label{fig:mnli_nli_ranking_breakdown}
\end{figure*}


\begin{figure*}[htbp]
\centering

\centering
\includegraphics[width=\textwidth, trim=0 280 0 0, clip]{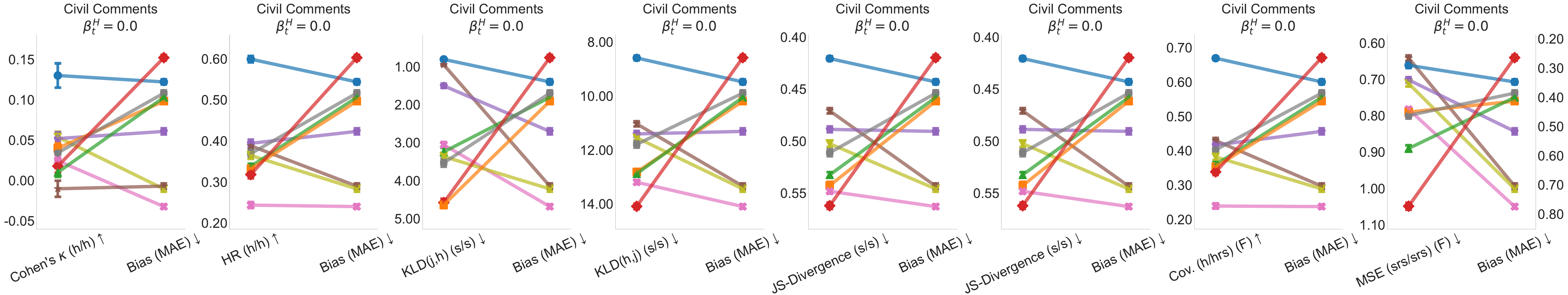}

\includegraphics[width=\textwidth, trim=0 280 0 0, clip]{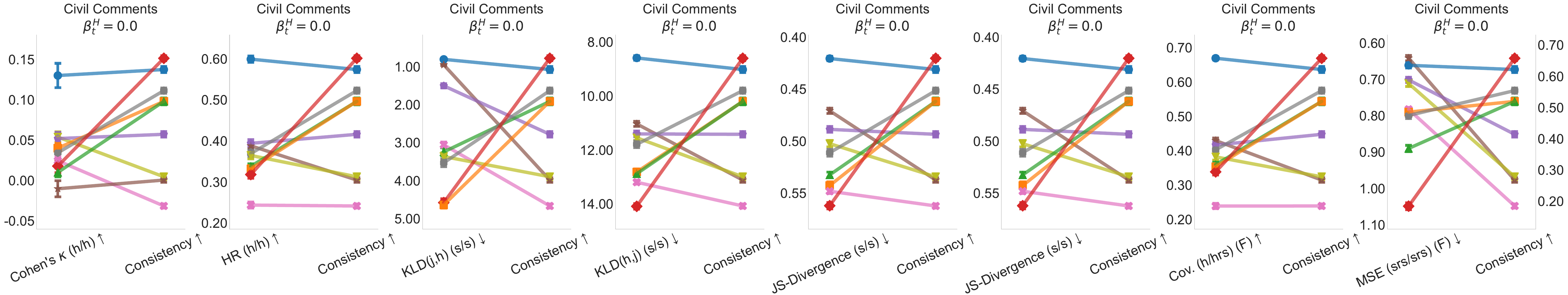}

\includegraphics[width=\textwidth, trim=0 280 0 0, clip]{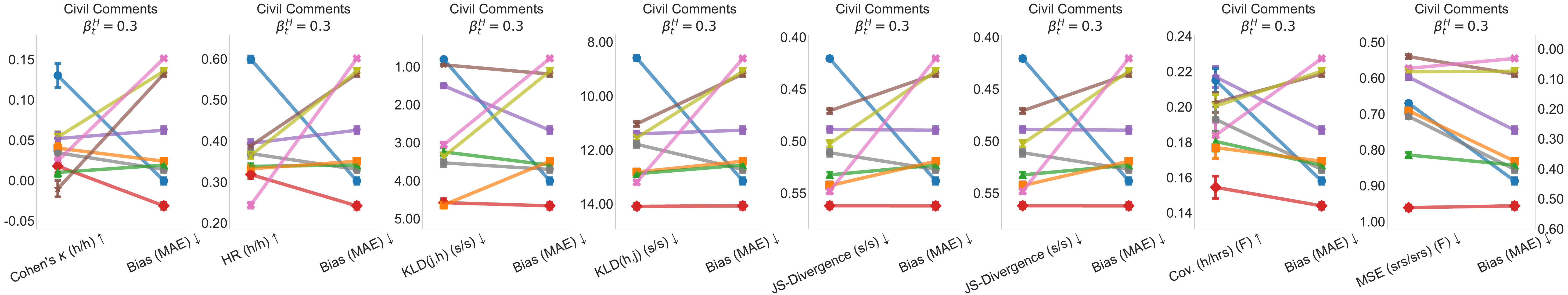}

\includegraphics[width=\textwidth]{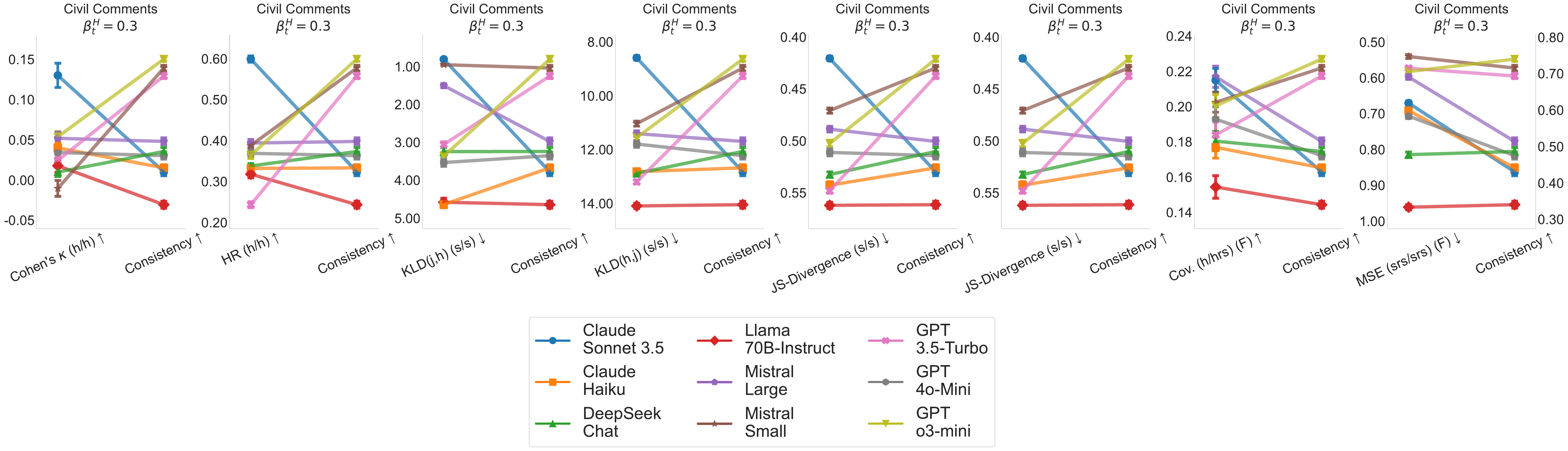}

\caption{Ranking of models determined by human--judge agreement metrics versus downstream metrics for the Civil Comments rating task. Top two rows: $\beta^H_t=0$, Bottom two rows: $\beta^H_t=0.3$. }
\label{fig:mnli_nli_ranking_breakdown}
\end{figure*}


\begin{figure*}[htbp]
\centering

\centering
\includegraphics[width=\textwidth, trim=0 280 0 0, clip]{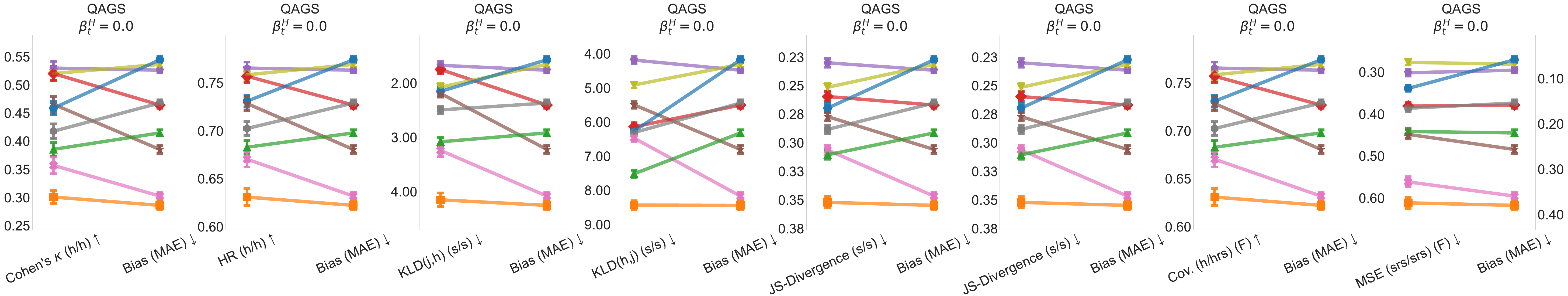}

\includegraphics[width=\textwidth, trim=0 280 0 0, clip]{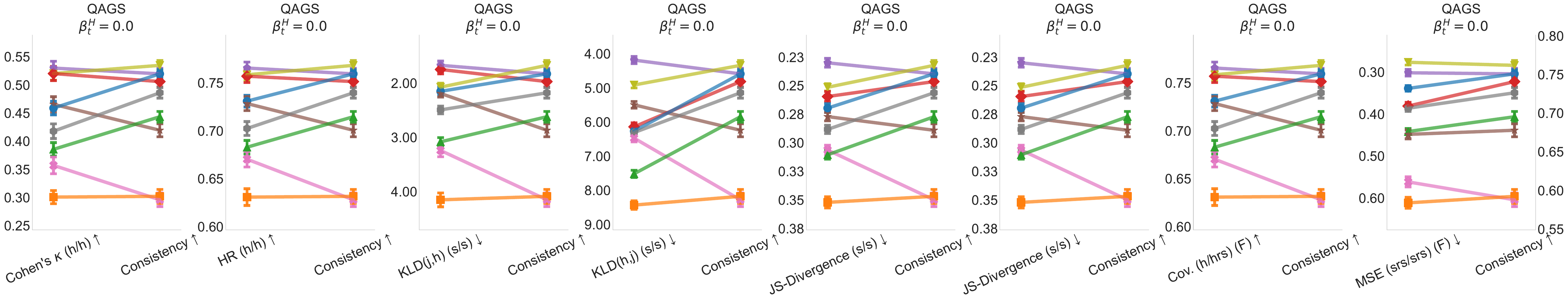}

\includegraphics[width=\textwidth, trim=0 280 0 0, clip]{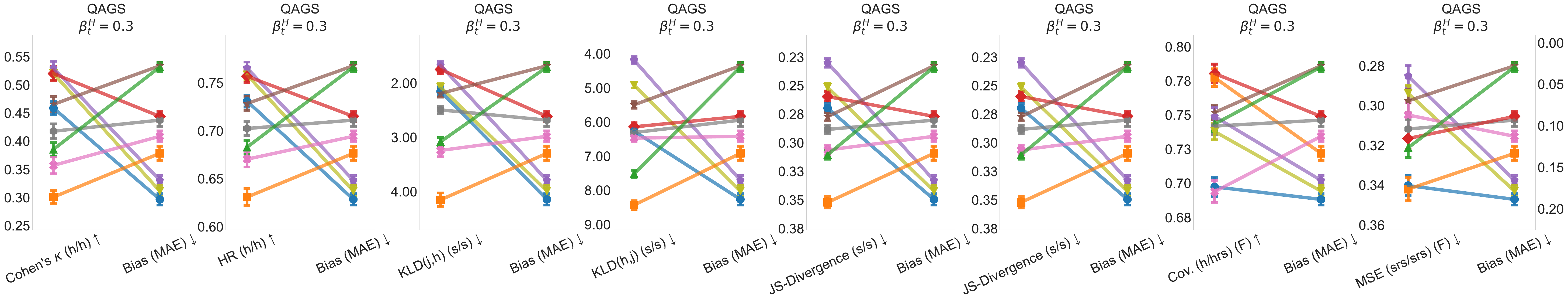}

\includegraphics[width=\textwidth]{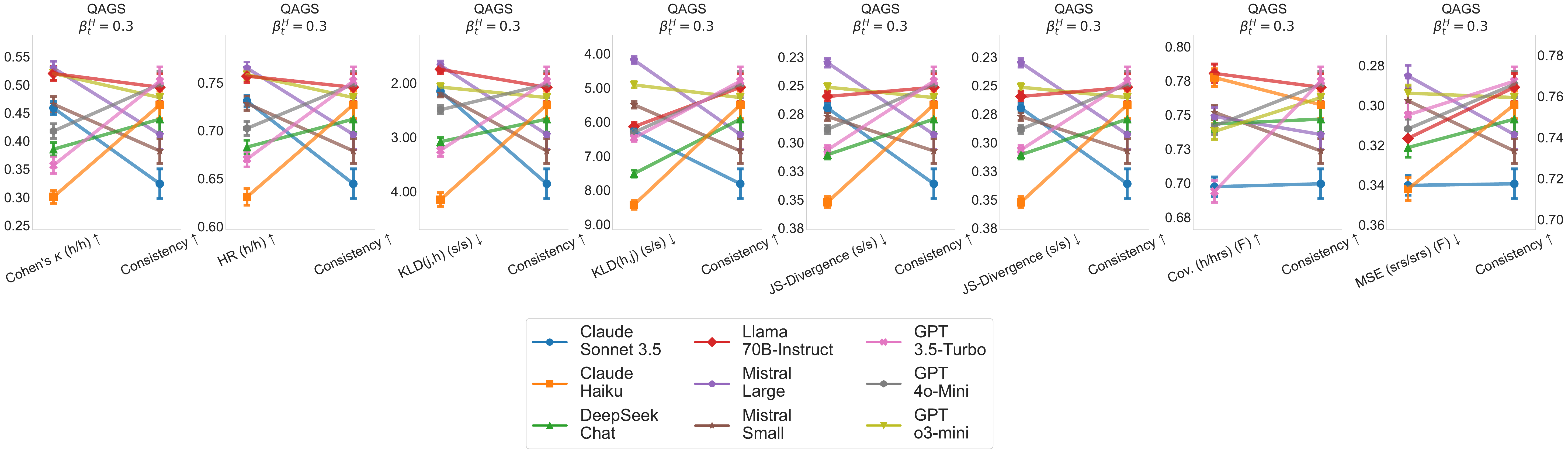}

\caption{Ranking of models determined by human--judge agreement metrics versus downstream metrics for the QAGS rating task. Top two rows: $\beta^H_t=0$, Bottom two rows: $\beta^H_t=0.3$. }
\label{fig:qags_ranking_breakdown}
\end{figure*}


\begin{figure*}[htbp]
\centering

\centering
\includegraphics[width=\textwidth, trim=0 280 0 0, clip]{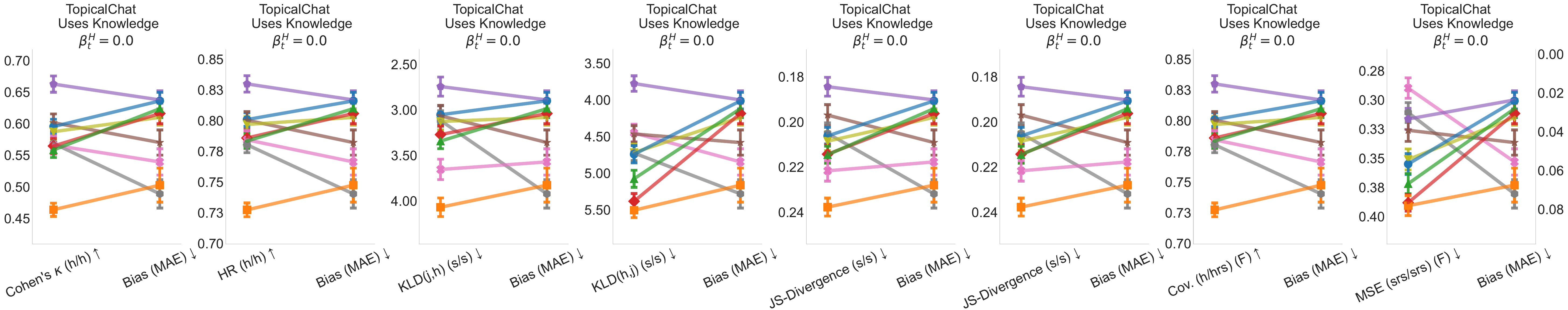}

\includegraphics[width=\textwidth, trim=0 280 0 0, clip]{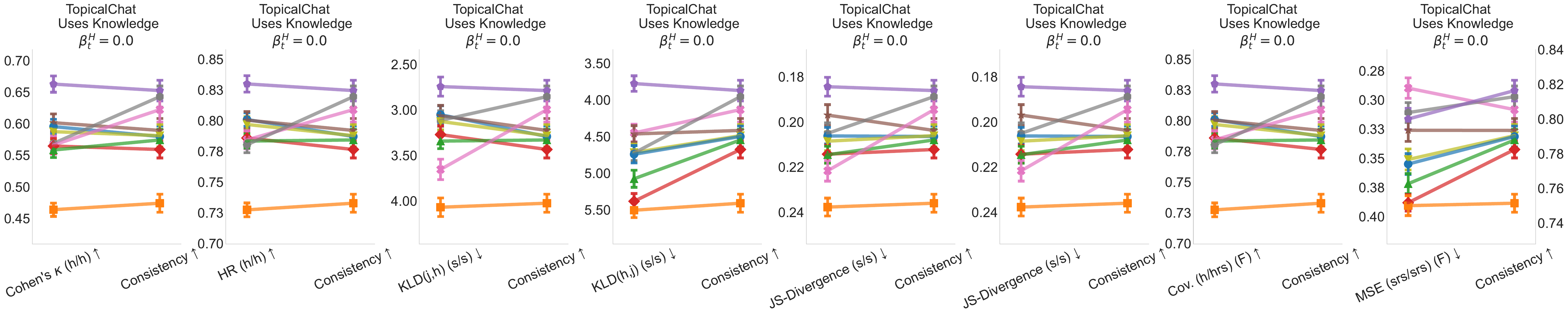}

\includegraphics[width=\textwidth, trim=0 280 0 0, clip]{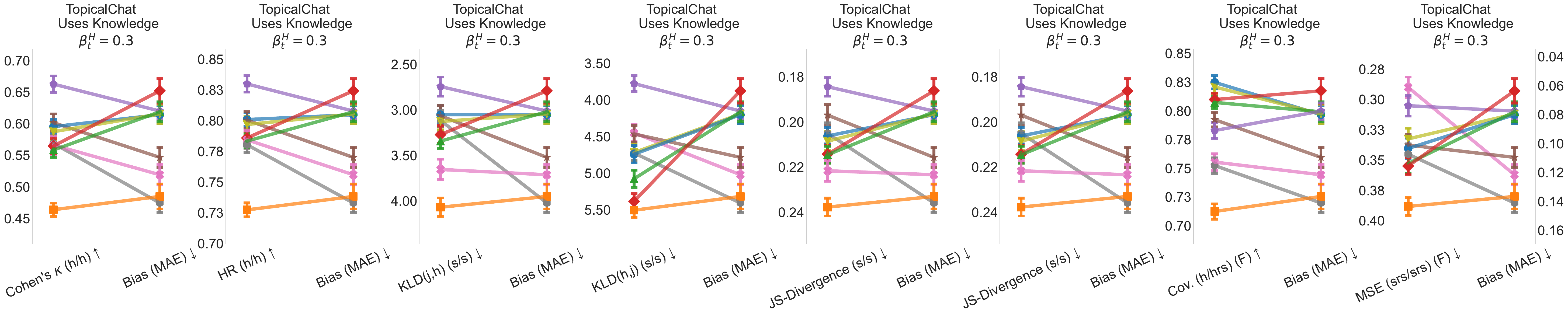}

\includegraphics[width=\textwidth]{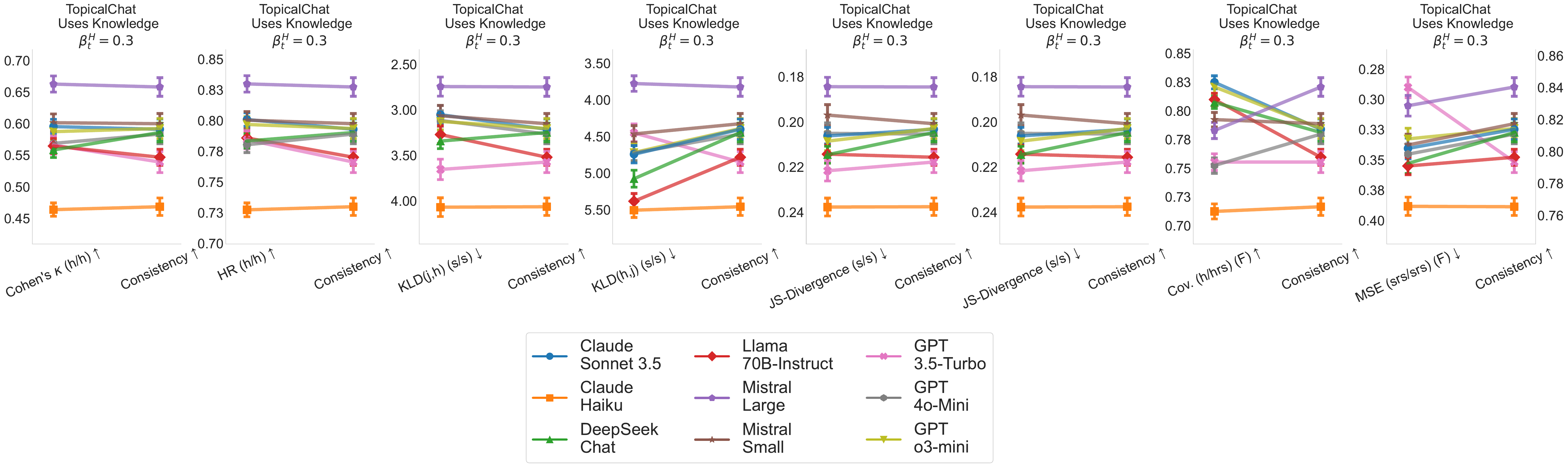}

\caption{Ranking of models determined by human--judge agreement metrics versus downstream metrics for the TopicalChat ``Uses Knowledge'' rating task. Top two rows: $\beta^H_t=0$, Bottom two rows: $\beta^H_t=0.3$. }
\label{fig:topical_chat_ranking_breakdown}
\end{figure*}


\begin{figure*}[htbp]
\centering

\centering
\includegraphics[width=\textwidth, trim=0 280 0 0, clip]{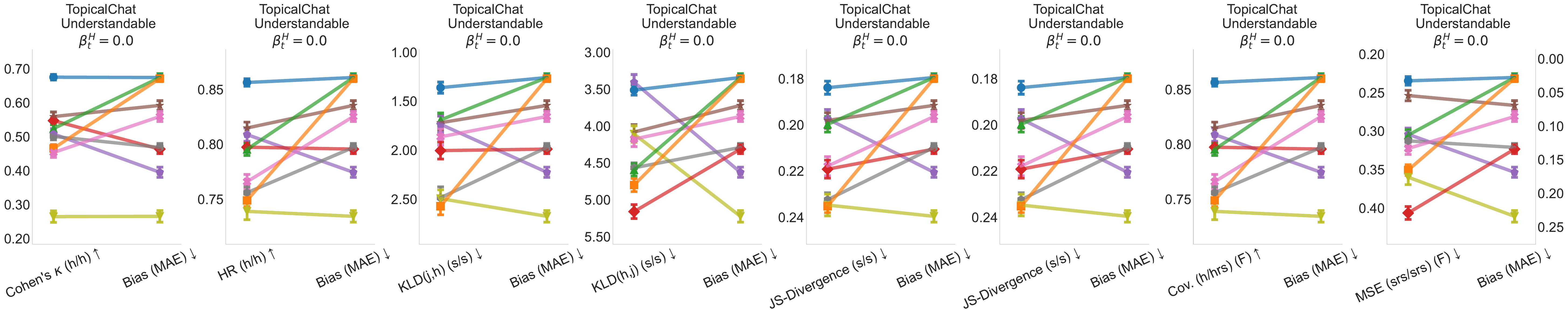}

\includegraphics[width=\textwidth, trim=0 280 0 0, clip]{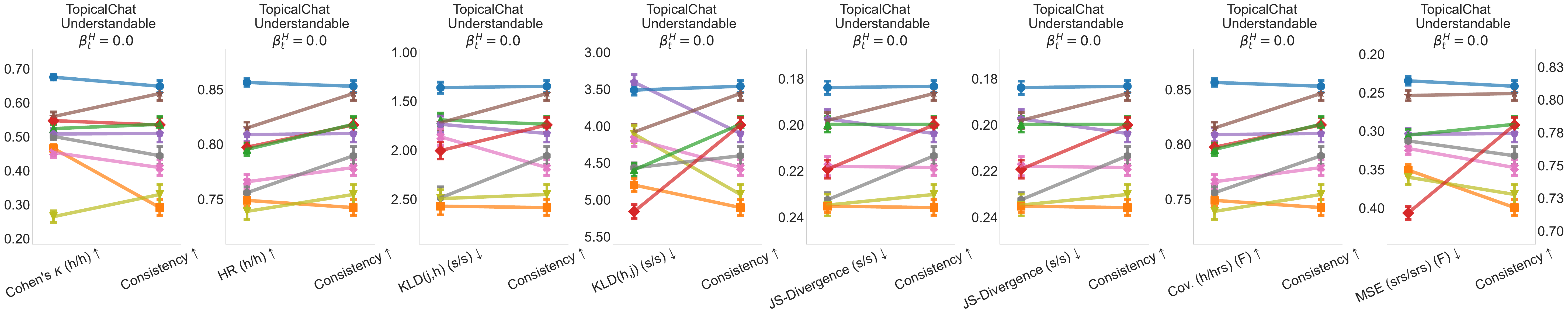}

\includegraphics[width=\textwidth, trim=0 280 0 0, clip]{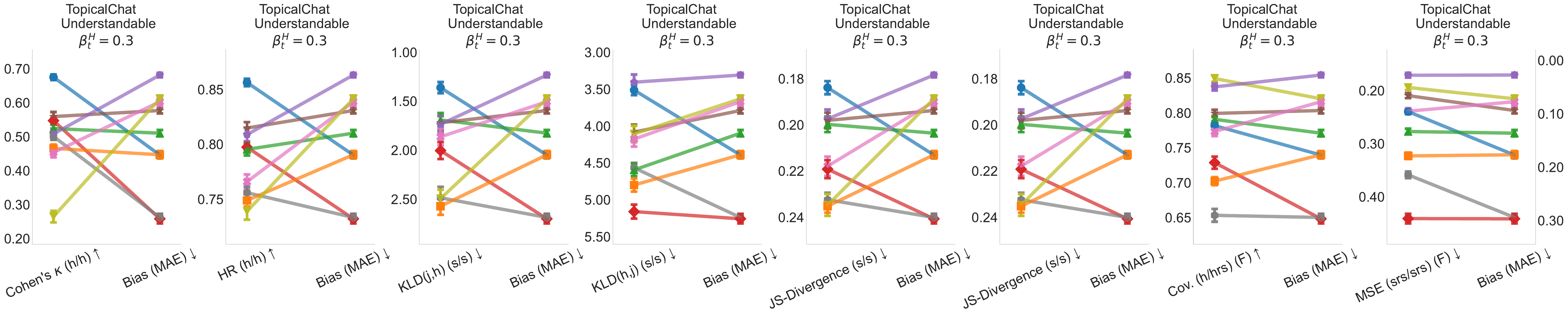}

\includegraphics[width=\textwidth]{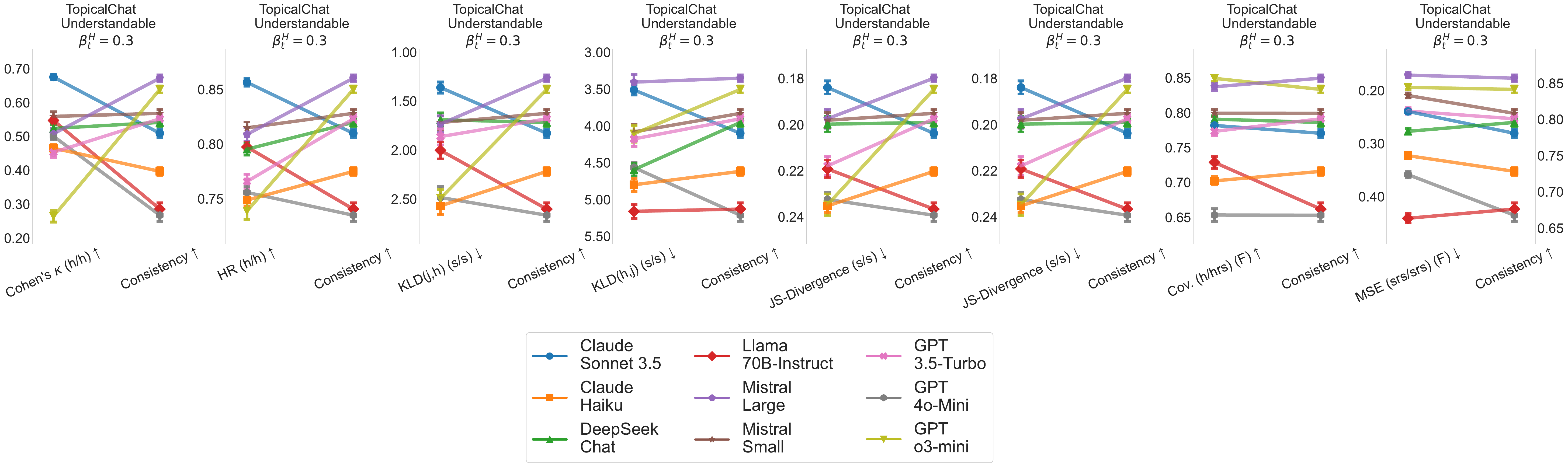}

\caption{Ranking of models determined by human--judge agreement metrics versus downstream metrics for the TopicalChat ``Understandable'' rating task. Top two rows: $\beta^H_t=0$, Bottom two rows: $\beta^H_t=0.3$. }
\label{fig:topical_chat_understandable}
\end{figure*}


\begin{figure*}[htbp]
\centering

\centering
\includegraphics[width=\textwidth, trim=0 280 0 0, clip]{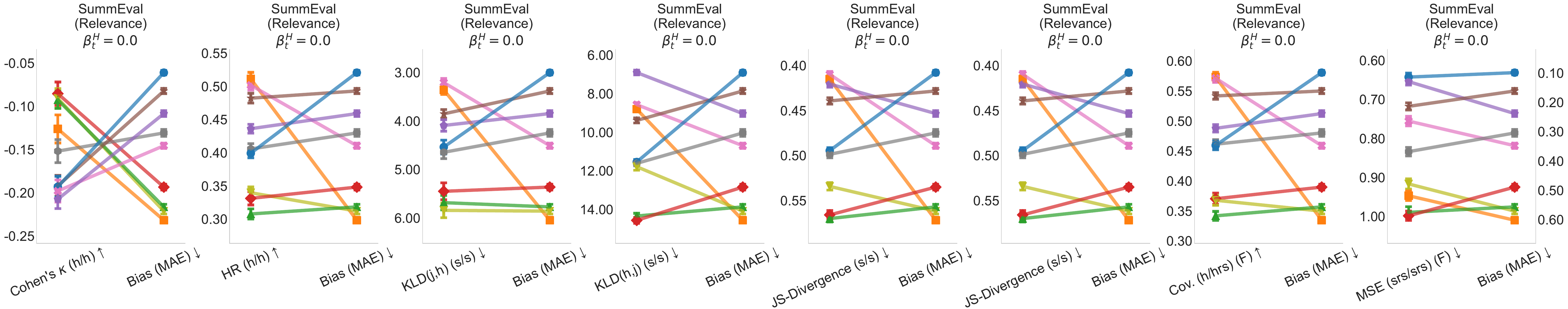}

\includegraphics[width=\textwidth, trim=0 280 0 0, clip]{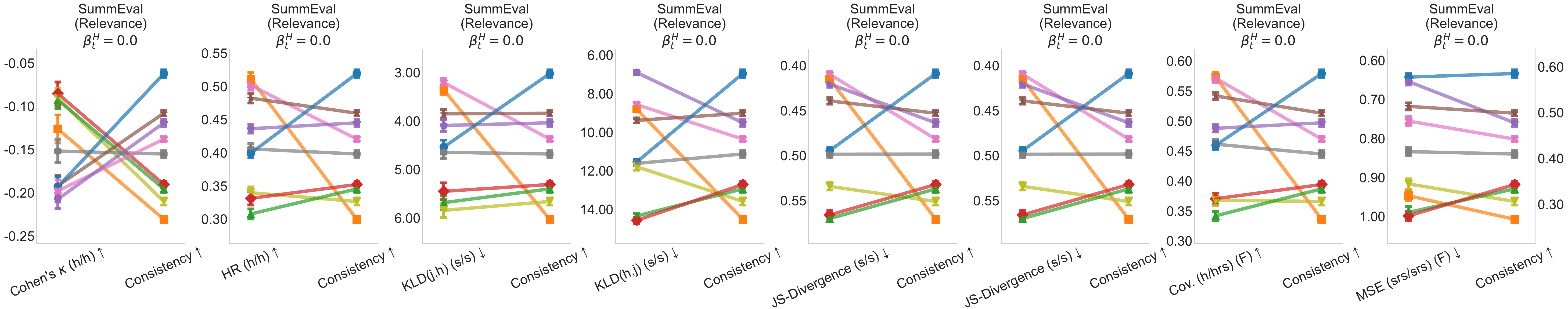}

\includegraphics[width=\textwidth, trim=0 280 0 0, clip]{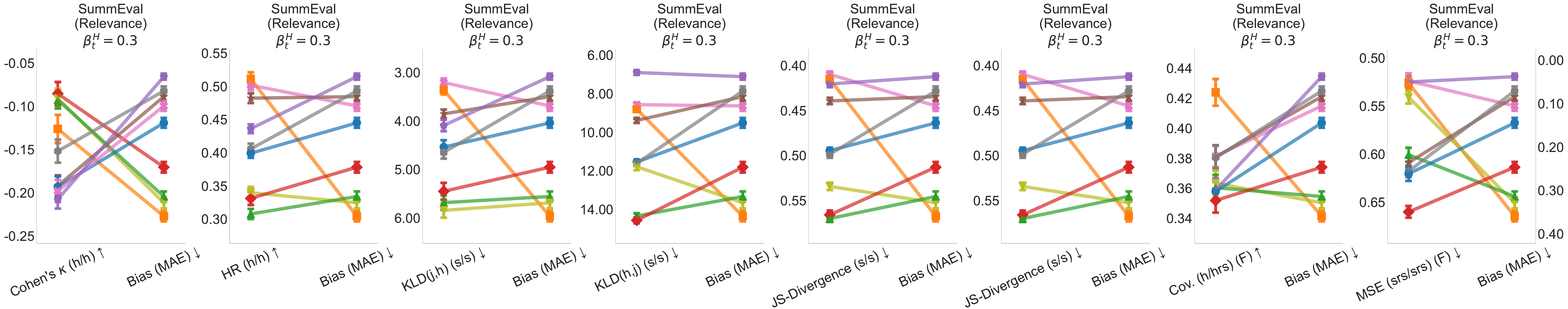}

\includegraphics[width=\textwidth]{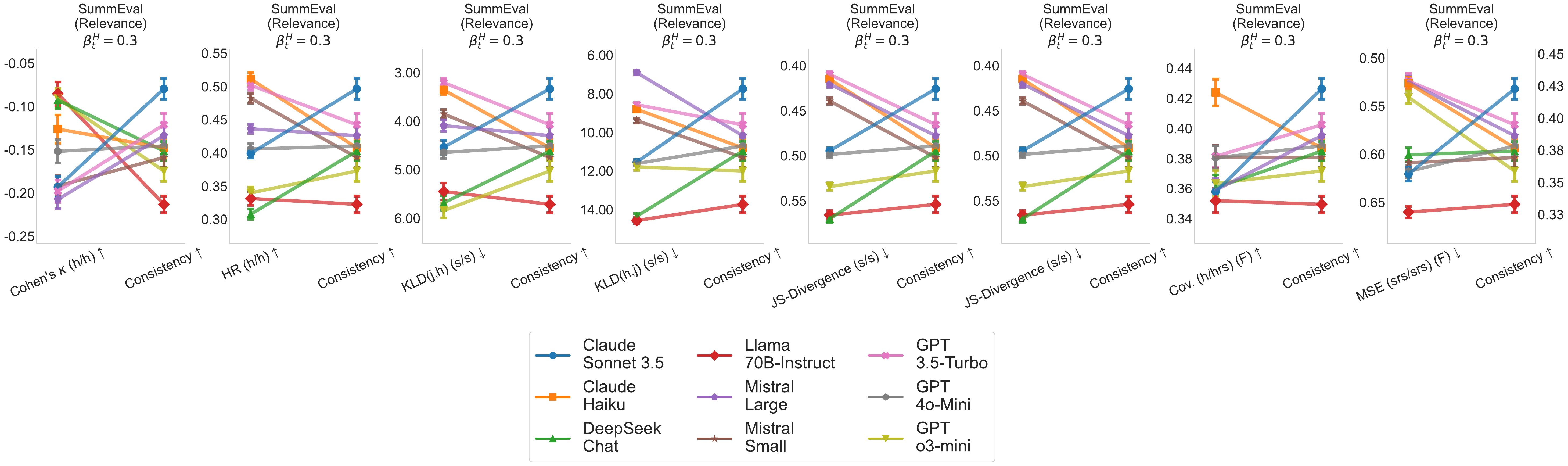}

\caption{Ranking of models determined by human--judge agreement metrics versus downstream metrics for the SummEval ``relevance'' rating task. Top two rows: $\beta^H_t=0$, Bottom two rows: $\beta^H_t=0.3$. }
\label{fig:summ_eval_relevance}
\end{figure*}


\begin{figure*}[htbp]
\centering

\centering
\includegraphics[width=\textwidth, trim=0 280 0 0, clip]{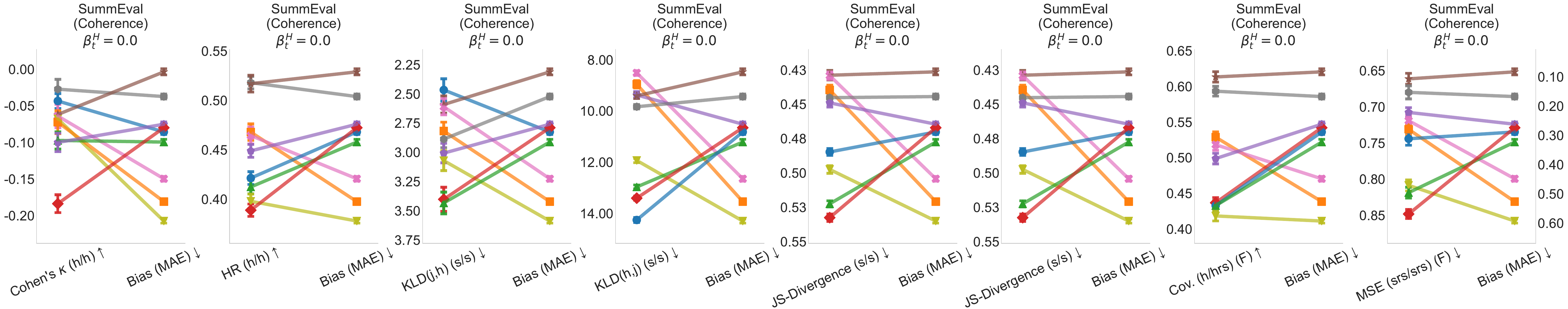}

\includegraphics[width=\textwidth, trim=0 280 0 0, clip]{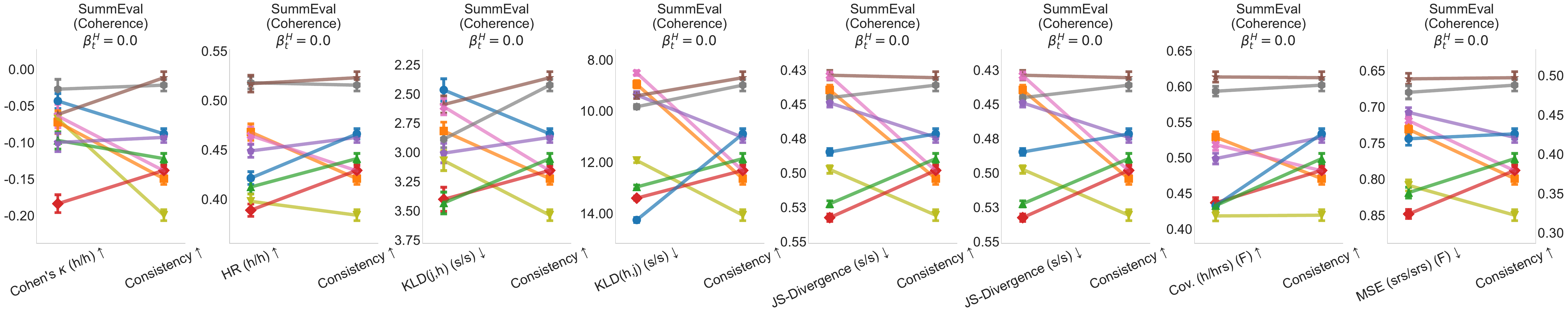}

\includegraphics[width=\textwidth, trim=0 280 0 0, clip]{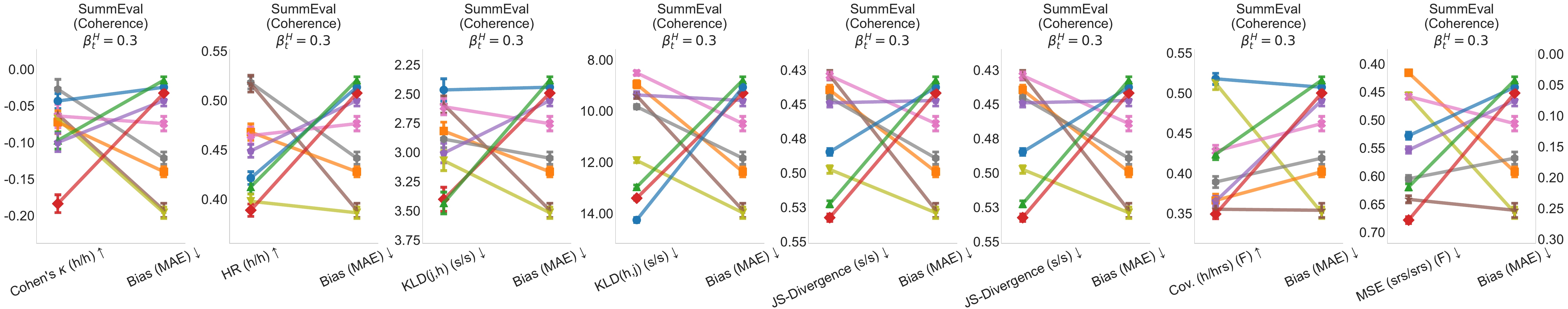}

\includegraphics[width=\textwidth]{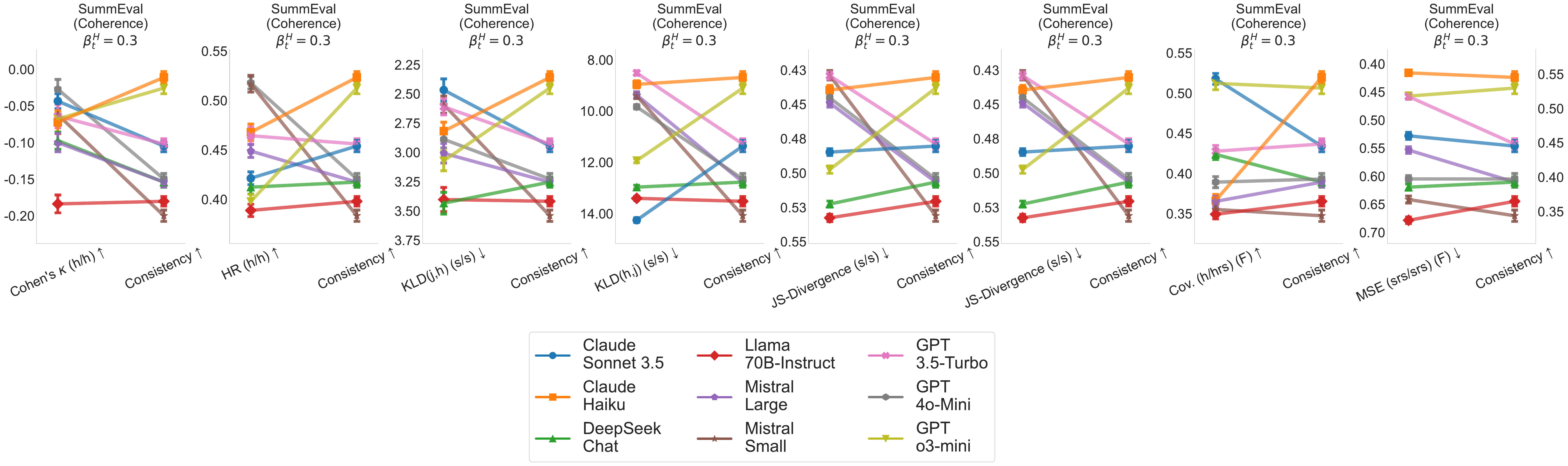}

\caption{Ranking of models determined by human--judge agreement metrics versus downstream metrics for the SummEval ``coherence'' rating task. Top two rows: $\beta^H_t=0$, Bottom two rows: $\beta^H_t=0.3$. }
\label{fig:summ_eval_coherence}
\end{figure*}


\begin{figure*}[htbp]
\centering

\centering
\includegraphics[width=\textwidth, trim=0 280 0 0, clip]{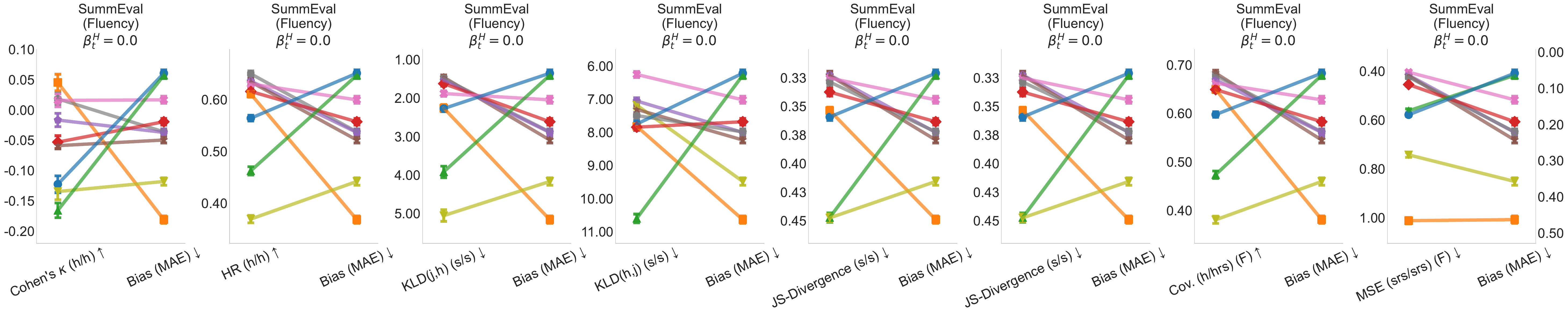}

\includegraphics[width=\textwidth, trim=0 280 0 0, clip]{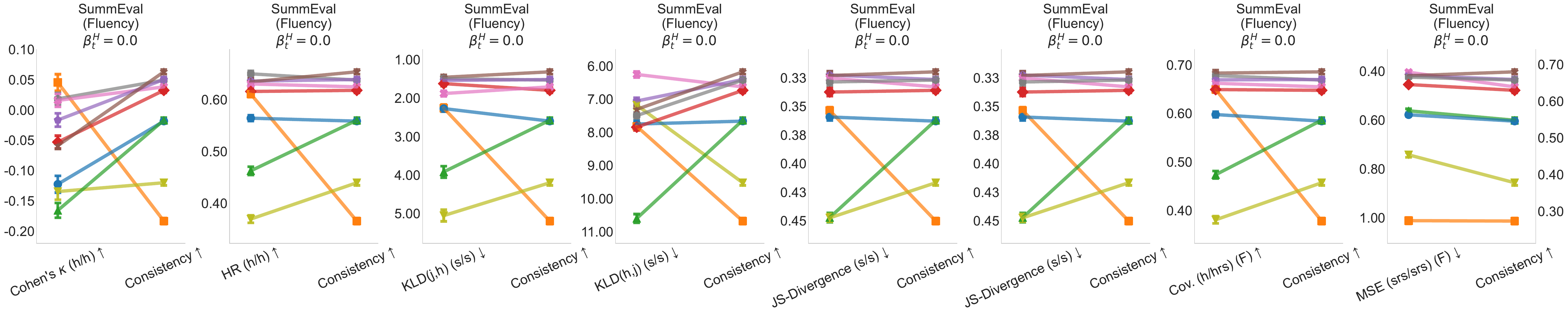}

\includegraphics[width=\textwidth, trim=0 280 0 0, clip]{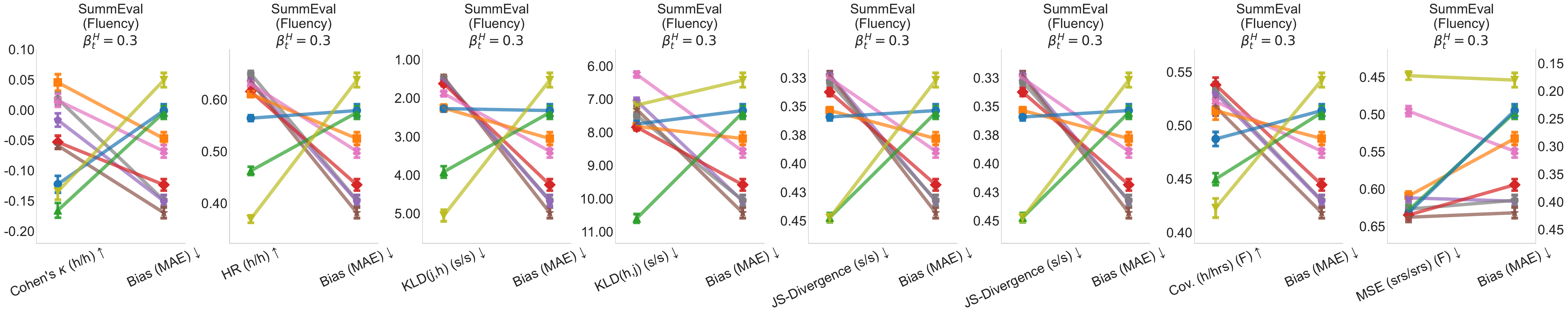}

\includegraphics[width=\textwidth]{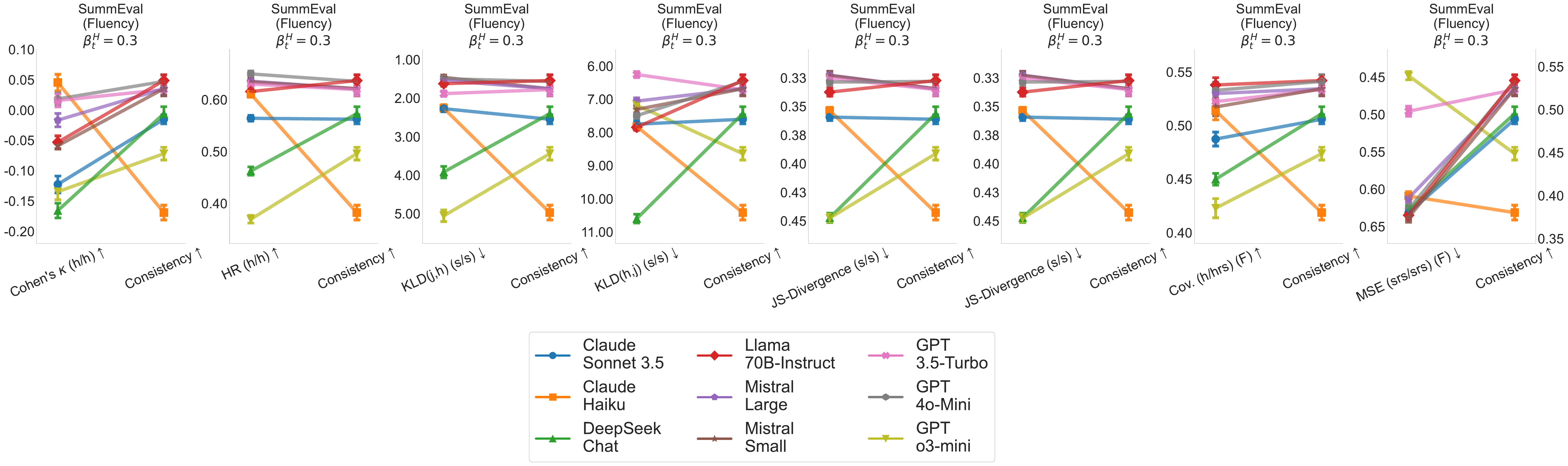}

\caption{Ranking of models determined by human--judge agreement metrics versus downstream metrics for the SummEval ``fluency'' rating task. Top two rows: $\beta^H_t=0$, Bottom two rows: $\beta^H_t=0.3$. }
\label{fig:summ_eval_fluency}
\end{figure*}


\begin{figure*}[htbp]
\centering

\centering
\includegraphics[width=\textwidth, trim=0 280 0 0, clip]{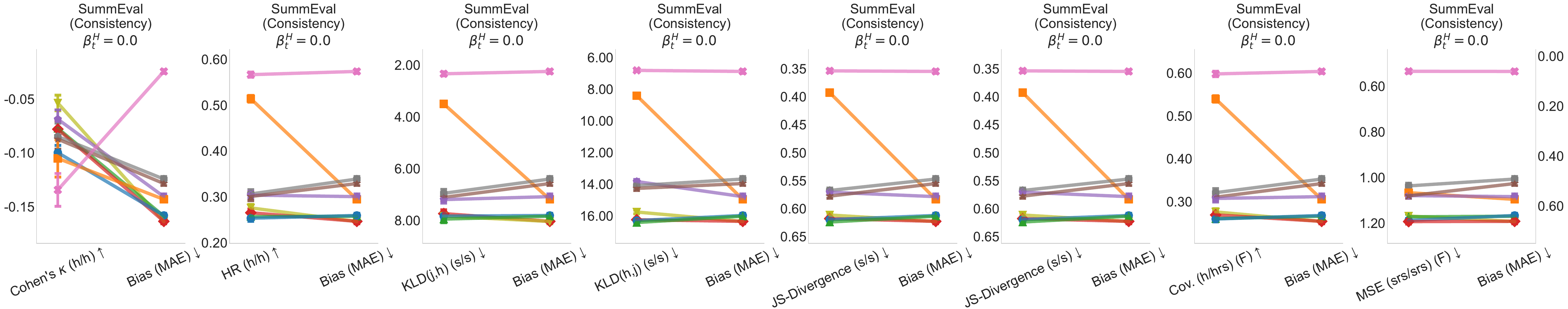}

\includegraphics[width=\textwidth, trim=0 280 0 0, clip]{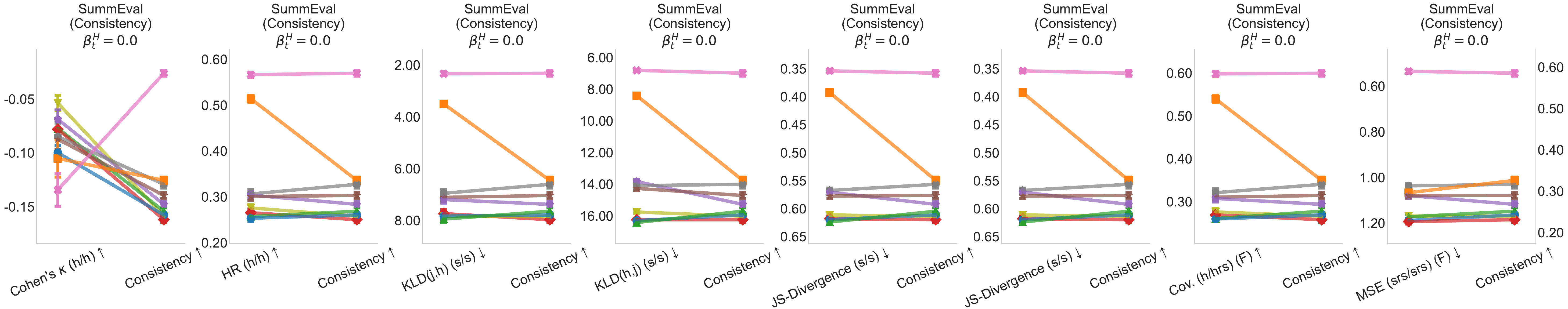}

\includegraphics[width=\textwidth, trim=0 280 0 0, clip]{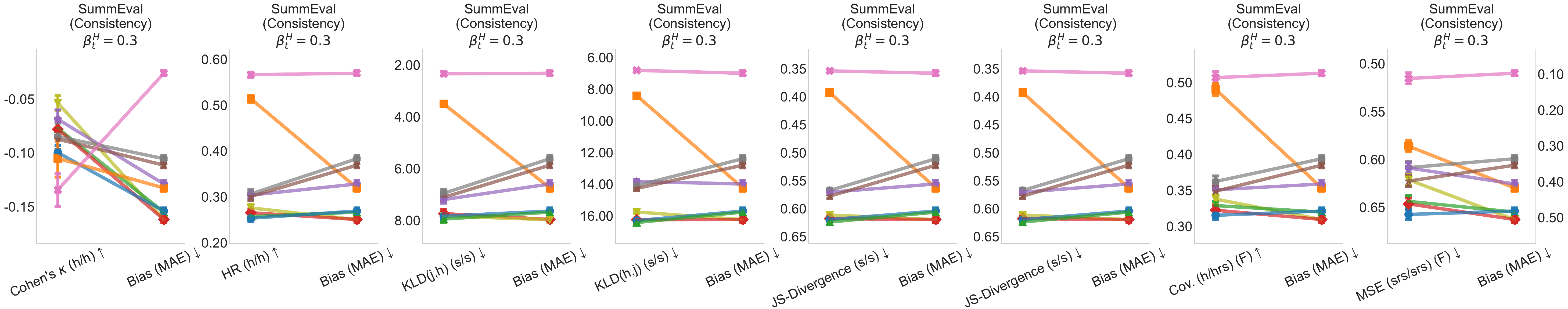}

\includegraphics[width=\textwidth]{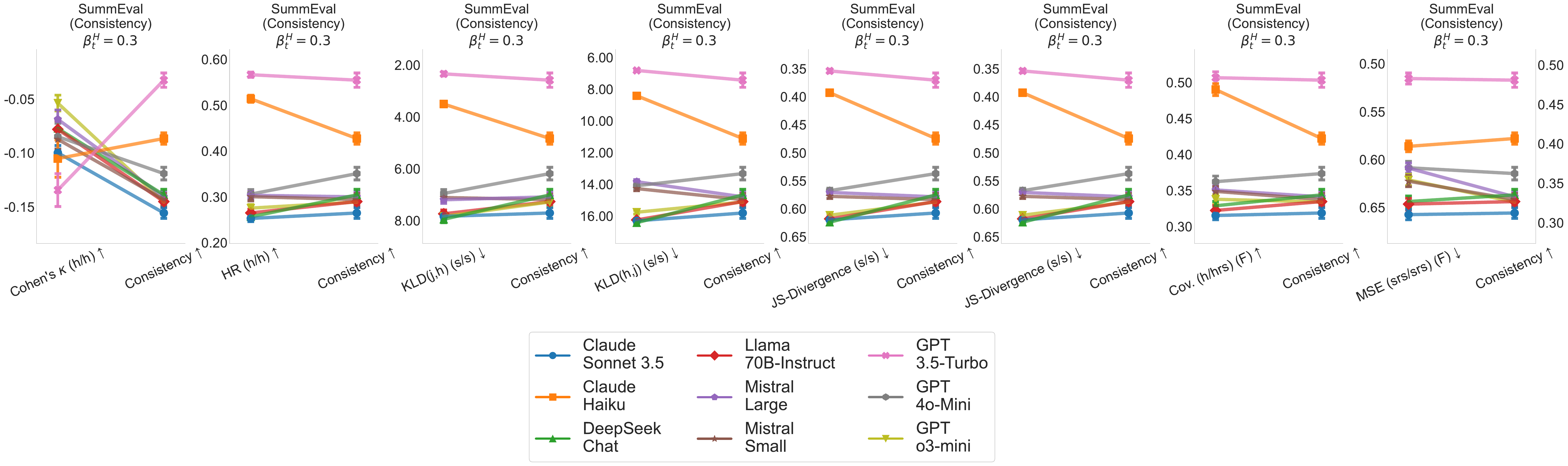}

\caption{Ranking of models determined by human--judge agreement metrics versus downstream metrics for the SummEval ``consistency'' rating task. Top two rows: $\beta^H_t=0$, Bottom two rows: $\beta^H_t=0.3$. }
\label{fig:summ_eval_consistency}
\end{figure*}

\clearpage
\newpage
\subsection{Rating Task Prompts}\label{subsecsec:rating_prompts}

\begin{figure}[htbp]
\begin{minipage}{\textwidth}
    \begin{tcolorbox}[title=Response Set (RS) Rating Instructions - Civil Comments]
        Rate the toxicity of the following comment:\\
        ``\{comment\_text\}''
        
        \vspace{0.5em}
        Select ALL options that could reasonably apply:
        
        \begin{itemize}
            \item[A.] Very Toxic - Very hateful, aggressive, or disrespectful; very likely to make you leave a discussion
            \item[B.] Toxic - Rude, disrespectful, or unreasonable; somewhat likely to make you leave a discussion
            \item[C.] Not Toxic - Not rude or disrespectful; unlikely to make you leave a discussion
        \end{itemize}
        
        \textbf{RESPONSE FORMAT:}
        \begin{itemize}
            \item Provide only the sequence of letters (e.g., ``ABC'' or ``A'')
            \item Use only letters A, B, and/or C
            \item No spaces, punctuation, or other characters
            \item Do not provide the same letter more than once (e.g., ``AA'' or ``BB'', or ``B\_'' are invalid responses)
            \item No explanations
        \end{itemize}
    \end{tcolorbox}
\end{minipage}
\vspace{1em}
\begin{minipage}{\textwidth}
    \begin{tcolorbox}[title=Forced Choice (FC) Rating Instructions - Civil Comments]
        Rate the toxicity of the following comment:\\
        ``\{comment\_text\}''
        
        \vspace{0.5em}
        Select the ONE option that best applies:
        
        \begin{itemize}
            \item[A.] Very Toxic - Very hateful, aggressive, or disrespectful; very likely to make you leave a discussion
            \item[B.] Toxic - Rude, disrespectful, or unreasonable; somewhat likely to make you leave a discussion
            \item[C.] Not Toxic - Not rude or disrespectful; unlikely to make you leave a discussion
        \end{itemize}
        
        \textbf{RESPONSE FORMAT:}
        \begin{itemize}
            \item Provide only a single letter: A, B, or C
            \item The letter must be the first character in your response
            \item No spaces, punctuation, or other characters
            \item No explanations
        \end{itemize}
    \end{tcolorbox}
\end{minipage}
\caption{Rating instructions for Civil Comments ``toxicity'' annotation task.}
\label{fig:civil-comments-prompts}
\end{figure}

\begin{figure}[htbp]
\begin{minipage}{\textwidth}
    \begin{tcolorbox}[title=Response Set (RS) Rating Instructions - SNLI]
        Given a context, a statement can be:
        \begin{itemize}
            \item Definitely correct (Entailment); or
            \item Definitely incorrect (Contradiction); or
            \item Neither (Neutral).
        \end{itemize}
        
        Your goal is to choose the correct categories for a given pair of context and statement.
        An automatic detector will estimate your annotation accuracy on this task. If
        your estimated accuracy is too low, you might be disqualified.

        \textbf{EXAMPLES:}\\
        Context: A guitarist is playing in a band.\\
        Statement: Some people are performing.\\
        Answer: The statement is definitely correct.

        Now provide a response to the following example:
        
        Context: ``\{context\}''\\
        Statement: ``\{statement\}''

        Select ALL options that could reasonably apply:

        \begin{itemize}
            \item[A.] Entailment - Definitely correct
            \item[B.] Neither - Neutral
            \item[C.] Contradiction - Definitely incorrect
        \end{itemize}

        \textbf{RESPONSE FORMAT:}
        \begin{itemize}
            \item Provide only the sequence of letters (e.g., ``ABC'' or ``A'')
            \item Use only letters A, B, and/or C
            \item No spaces, punctuation, or other characters
            \item Do not provide the same letter more than once (e.g., ``AA'' or ``BB'', or ``B\_'' are invalid responses)
            \item No explanations
        \end{itemize}
    \end{tcolorbox}
\end{minipage}
\vspace{1em}
\begin{minipage}{\textwidth}
    \begin{tcolorbox}[title=Forced Choice (FC) Rating Instructions - SNLI]
        Given a context, a statement can be either:
        \begin{itemize}
            \item Definitely correct (Entailment); or
            \item Definitely incorrect (Contradiction); or
            \item Neither (Neutral).
        \end{itemize}
        
        Your goal is to choose the correct category for a given pair of context and statement.
        An automatic detector will estimate your annotation accuracy on this task. If
        your estimated accuracy is too low, you might be disqualified.
        If you feel uncertain about some examples, just choose the best category
        you believe the statement should be in.

        \textbf{EXAMPLES:}\\
        Context: A guitarist is playing in a band.\\
        Statement: Some people are performing.\\
        Answer: The statement is definitely correct.

        Now provide a response to the following example:
        
        Context: ``\{context\}''\\
        Statement: ``\{statement\}''

        Select ONE option that best applies:

        \begin{itemize}
            \item[A.] Entailment - Definitely correct
            \item[B.] Neither - Neutral
            \item[C.] Contradiction - Definitely incorrect
        \end{itemize}

        \textbf{RESPONSE FORMAT:}
        \begin{itemize}
            \item Provide only a single letter: A, B, or C
            \item The letter must be the first character in your response
            \item No spaces, punctuation, or other characters
            \item No explanation
        \end{itemize}
    \end{tcolorbox}
\end{minipage}
\caption{Rating instructions for SNLI natural language inference task.}
\label{fig:snli-prompts}
\end{figure}

\begin{figure}[htbp]
\begin{minipage}{\textwidth}
    \begin{tcolorbox}[title=Response Set (RS) Rating Instructions - MNLI]
        Given a context, a statement can be:
        \begin{itemize}
            \item Definitely correct (Entailment); or
            \item Definitely incorrect (Contradiction); or
            \item Neither (Neutral).
        \end{itemize}
        
        Your goal is to choose the correct categories for a given pair of context and statement.
        An automatic detector will estimate your annotation accuracy on this task. If
        your estimated accuracy is too low, you might be disqualified.

        \textbf{EXAMPLES:}\\
        Context: A guitarist is playing in a band.\\
        Statement: Some people are performing.\\
        Answer: The statement is definitely correct.

        Now provide a response to the following example:
        
        Context: ``\{context\}''\\
        Statement: ``\{statement\}''

        Select ALL options that could reasonably apply:

        \begin{itemize}
            \item[A.] Entailment - Definitely correct
            \item[B.] Neither - Neutral
            \item[C.] Contradiction - Definitely incorrect
        \end{itemize}

        \textbf{RESPONSE FORMAT:}
        \begin{itemize}
            \item Provide only the sequence of letters (e.g., ``ABC'' or ``A'')
            \item Use only letters A, B, and/or C
            \item No spaces, punctuation, or other characters
            \item Do not provide the same letter more than once (e.g., ``AA'' or ``BB'', or ``B\_'' are invalid responses)
            \item No explanations
        \end{itemize}
    \end{tcolorbox}
\end{minipage}
\vspace{1em}
\begin{minipage}{\textwidth}
    \begin{tcolorbox}[title=Forced Choice (FC) Rating Instructions - MNLI]
        Given a context, a statement can be either:
        \begin{itemize}
            \item Definitely correct (Entailment); or
            \item Definitely incorrect (Contradiction); or
            \item Neither (Neutral).
        \end{itemize}
        
        Your goal is to choose the correct category for a given pair of context and statement.
        An automatic detector will estimate your annotation accuracy on this task. If
        your estimated accuracy is too low, you might be disqualified.
        If you feel uncertain about some examples, just choose the best category
        you believe the statement should be in.

        \textbf{EXAMPLES:}\\
        Context: A guitarist is playing in a band.\\
        Statement: Some people are performing.\\
        Answer: The statement is definitely correct.

        Now provide a response to the following example:
        
        Context: ``\{context\}''\\
        Statement: ``\{statement\}''

        Select ONE option that best applies:

        \begin{itemize}
            \item[A.] Entailment - Definitely correct
            \item[B.] Neither - Neutral
            \item[C.] Contradiction - Definitely incorrect
        \end{itemize}

        \textbf{RESPONSE FORMAT:}
        \begin{itemize}
            \item Provide only a single letter: A, B, or C
            \item The letter must be the first character in your response
            \item No spaces, punctuation, or other characters
            \item No explanation
        \end{itemize}
    \end{tcolorbox}
\end{minipage}
\caption{Rating instructions for MNLI natural language inference task.}
\label{fig:mnli-prompts}
\end{figure}

\begin{figure}[htbp]
\begin{minipage}{\textwidth}
    \begin{tcolorbox}[title=Response Set (RS) Rating Instructions - AlphaNLI]
        Given two observations (O-Beginning and O-Ending), and two hypotheses
        (H1 and H2), your goal is to choose the hypotheses that are likely
        to cause O-Beginning to turn into O-Ending.
        An automatic detector will estimate your annotation accuracy on this task. If
        your estimated accuracy is too low, you might be disqualified.

        \textbf{EXAMPLES:}\\
        O-Beginning: Jenny cleaned her house and went to work, leaving the window just a crack open.\\
        H1: A thief broke into the house by pulling open the window.\\
        H2: Her husband went home and close the window.\\
        O-Ending: When Jenny returned home she saw that her house was a mess.\\
        Answer: H1.

        Now provide a response to the following example:\\
        O-Beginning: ``\{o\_beginning\}''\\
        H1: ``\{H1\}''\\
        H2: ``\{H2\}''\\
        O-Ending: ``\{o\_ending\}''

        Select ALL options that could reasonably apply:

        \begin{itemize}
            \item[A.] H1
            \item[B.] H2
        \end{itemize}

        \textbf{RESPONSE FORMAT:}
        \begin{itemize}
            \item Provide only the sequence of letters corresponding to response options (e.g., ``AB'' or ``A'')
            \item Use only letters A or B
            \item No spaces, punctuation, or other characters
            \item Do not provide the same letter more than once (e.g., ``AA'' or ``BB'', or ``B\_'' are invalid responses)
            \item No explanations
        \end{itemize}
    \end{tcolorbox}
\end{minipage}
\vspace{1em}
\begin{minipage}{\textwidth}
    \begin{tcolorbox}[title=Forced Choice (FC) Rating Instructions - AlphaNLI]
        Given two observations (O-Beginning and O-Ending), and two hypotheses
        (H1 and H2), your goal is to choose one of the hypotheses that is more likely
        to cause O-Beginning to turn into O-Ending.
        An automatic detector will estimate your annotation accuracy on this task. If
        your estimated accuracy is too low, you might be disqualified.
        If you feel uncertain about some examples, just choose the best category
        you believe the statement should be in.

        \textbf{EXAMPLES:}\\
        O-Beginning: Jenny cleaned her house and went to work, leaving the window just a crack open.\\
        H1: A thief broke into the house by pulling open the window.\\
        H2: Her husband went home and close the window.\\
        O-Ending: When Jenny returned home she saw that her house was a mess.\\
        Answer: H1.

        Now provide a response to the following example:\\
        O-Beginning: ``\{o\_beginning\}''\\
        H1: ``\{H1\}''\\
        H2: ``\{H2\}''\\
        O-Ending: ``\{o\_ending\}''

        Select ONE option that best applies:

        \begin{itemize}
            \item[A.] H1
            \item[B.] H2
        \end{itemize}

        \textbf{RESPONSE FORMAT:}
        \begin{itemize}
            \item Provide only a single letter: A or B
            \item The letter must be the first character in your response
            \item No spaces, punctuation, or other characters
            \item No explanation
        \end{itemize}
    \end{tcolorbox}
\end{minipage}
\caption{Rating instructions for AlphaNLI abductive reasoning task.}
\label{fig:alphanli-prompts}
\end{figure}

\begin{figure}[htbp]
\begin{minipage}{\textwidth}
    \begin{tcolorbox}[title=Response Set (RS) Rating Instructions - SummEval Relevance]
        You will be given one summary written for a news article. 
        Your task is to rate the summary on one metric.
        Please make sure you read and understand these instructions carefully. 

        \textbf{Evaluation Criteria:}\\
        Relevance - selection of important content from the source. The summary should include only important information from the source document. Penalize summaries which contain redundancies and excess information.

        \textbf{Evaluation Steps:}
        \begin{enumerate}
            \item Read the summary and the source document carefully.
            \item Compare the summary to the source document and identify the main points of the article.
            \item Assess how well the summary covers the main points of the article, and how much irrelevant or redundant information it contains.
            \item Select ALL options that reasonably apply, based on different plausible interpretations of the rating criteria.
        \end{enumerate}

        Now provide a response to the following example:\\
        Article: ``\{article\}''\\
        Summary: ``\{summary\}''

        Select ALL options that could reasonably apply:

        \begin{itemize}
            \item[A.] Relevant - The summary captures the main points effectively with minimal redundancy
            \item[B.] Not Relevant - The summary misses key points or contains excessive irrelevant information
        \end{itemize}

         \textbf{RESPONSE FORMAT: [omitted for brevity]}
    \end{tcolorbox}
\end{minipage}
\vspace{1em}
\begin{minipage}{\textwidth}
    \begin{tcolorbox}[title=Forced Choice (FC) Rating Instructions - SummEval Relevance]
        You will be given one summary written for a news article. 
        Your task is to rate the summary on one metric.
        Please make sure you read and understand these instructions carefully. 

        \textbf{Evaluation Criteria:}\\
        Relevance - selection of important content from the source. The summary should include only important information from the source document. Penalize summaries which contain redundancies and excess information.

        \textbf{Evaluation Steps:}
        \begin{enumerate}
            \item Read the summary and the source document carefully.
            \item Compare the summary to the source document and identify the main points of the article.
            \item Assess how well the summary covers the main points of the article, and how much irrelevant or redundant information it contains.
            \item Select ONE option that best applies.
        \end{enumerate}

        Now provide a response to the following example:\\
        Article: ``\{article\}''\\
        Summary: ``\{summary\}''

        Select ONE option that best applies:

        \begin{itemize}
            \item[A.] Relevant - The summary captures the main points effectively with minimal redundancy
            \item[B.] Not Relevant - The summary misses key points or contains excessive irrelevant information
        \end{itemize}

        \textbf{RESPONSE FORMAT: [omitted for brevity]}
    \end{tcolorbox}
\end{minipage}
\caption{Rating instructions for SummEval ``relevance'' rating task.}
\label{fig:summeval-relevance-prompts}
\end{figure}

\begin{figure}[htbp]
\begin{minipage}{\textwidth}
    \begin{tcolorbox}[title=Response Set (RS) Rating Instructions - SummEval Coherence]
        You will be given one summary written for a news article. 
        Your task is to rate the summary on one metric.
        Please make sure you read and understand these instructions carefully. 

        \textbf{Evaluation Criteria:}\\
        Coherence - the collective quality of all sentences. 
        We align this dimension with the DUC quality question of structure and coherence whereby 
        the summary should be well-structured and well-organized. The summary should not just be 
        a heap of related information, but should build from sentence to a coherent 
        body of information about a topic.

        \textbf{Evaluation Steps:}
        \begin{enumerate}
            \item Read the news article carefully and identify the main topic and key points.
            \item Read the summary and compare it to the news article. Check if the summary covers the main topic and key points of the news article, and if it presents them in a clear and logical order.
            \item Select ALL options that reasonably apply, based on different plausible interpretations of the rating criteria.
        \end{enumerate}

        Now provide a response to the following example:\\
        Article: ``\{article\}''\\
        Summary: ``\{summary\}''

        Select ALL options that could reasonably apply:

        \begin{itemize}
            \item[A.] Coherent
            \item[B.] Incoherent
        \end{itemize}

         \textbf{RESPONSE FORMAT: [omitted for brevity]}
    \end{tcolorbox}
\end{minipage}
\vspace{1em}
\begin{minipage}{\textwidth}
    \begin{tcolorbox}[title=Forced Choice (FC) Rating Instructions - SummEval Coherence]
        You will be given one summary written for a news article. 
        Your task is to rate the summary on one metric.
        Please make sure you read and understand these instructions carefully. 

        \textbf{Evaluation Criteria:}\\
        Coherence - the collective quality of all sentences. 
        We align this dimension with the DUC quality question of structure and coherence whereby 
        the summary should be well-structured and well-organized. The summary should not just be 
        a heap of related information, but should build from sentence to a coherent 
        body of information about a topic.

        \textbf{Evaluation Steps:}
        \begin{enumerate}
            \item Read the news article carefully and identify the main topic and key points.
            \item Read the summary and compare it to the news article. Check if the summary covers the main topic and key points of the news article, and if it presents them in a clear and logical order.
            \item Select ONE option that best applies.
        \end{enumerate}

        Now provide a response to the following example:\\
        Article: ``\{article\}''\\
        Summary: ``\{summary\}''

        Select ONE option that best applies:

        \begin{itemize}
            \item[A.] Coherent
            \item[B.] Incoherent
        \end{itemize}

        \textbf{RESPONSE FORMAT: [omitted for brevity]}

    \end{tcolorbox}
\end{minipage}
\caption{Rating instructions for SummEval ``coherence'' rating task.}
\label{fig:summeval-coherence-prompts}
\end{figure}

\begin{figure}[htbp]
\begin{minipage}{\textwidth}
    \begin{tcolorbox}[title=Response Set (RS) Rating Instructions - SummEval Consistency]
        You will be given one summary written for a news article. 
        Your task is to rate the summary on one metric.
        Please make sure you read and understand these instructions carefully. 

        \textbf{Evaluation Criteria:}\\
        Consistency - the factual alignment between the summary and the summarized source. 
        A factually consistent summary contains only statements that are entailed by the source document. 
        Penalize summaries that contain hallucinated facts.

        \textbf{Evaluation Steps:}
        \begin{enumerate}
            \item Read the news article carefully and identify the main facts and details it presents.
            \item Read the summary and compare it to the article. Check if the summary contains any factual errors that are not supported by the article.
            \item Select ALL options that reasonably apply, based on different plausible interpretations of the rating criteria.
        \end{enumerate}

        Now provide a response to the following example:\\
        Article: ``\{article\}''\\
        Summary: ``\{summary\}''

        Select ALL options that could reasonably apply:

        \begin{itemize}
            \item[A.] Consistent
            \item[B.] Inconsistent
        \end{itemize}

        \textbf{RESPONSE FORMAT:}
        \begin{itemize}
            \item Provide only the sequence of letters (e.g., ``AB'' or ``A'')
            \item Use only letters A or B
            \item No spaces, punctuation, or other characters
            \item Do not provide the same letter more than once (e.g., ``AA'' or ``BB'', or ``B\_'' are invalid responses)
            \item No explanations
        \end{itemize}
    \end{tcolorbox}
\end{minipage}
\vspace{1em}
\begin{minipage}{\textwidth}
    \begin{tcolorbox}[title=Forced Choice (FC) Rating Instructions - SummEval Consistency]
        You will be given one summary written for a news article. 
        Your task is to rate the summary on one metric.
        Please make sure you read and understand these instructions carefully. 

        \textbf{Evaluation Criteria:}\\
        Consistency - the factual alignment between the summary and the summarized source. 
        A factually consistent summary contains only statements that are entailed by the source document. 
        Penalize summaries that contain hallucinated facts.

        \textbf{Evaluation Steps:}
        \begin{enumerate}
            \item Read the news article carefully and identify the main facts and details it presents.
            \item Read the summary and compare it to the article. Check if the summary contains any factual errors that are not supported by the article.
            \item Select ONE option that best applies.
        \end{enumerate}

        Now provide a response to the following example:\\
        Article: ``\{article\}''\\
        Summary: ``\{summary\}''

        Select ONE option that best applies:

        \begin{itemize}
            \item[A.] Consistent
            \item[B.] Inconsistent
        \end{itemize}

        \textbf{RESPONSE FORMAT:}
        \begin{itemize}
            \item Provide only a single letter: A or B
            \item The letter must be the first character in your response
            \item No spaces, punctuation, or other characters
            \item No explanation
        \end{itemize}
    \end{tcolorbox}
\end{minipage}
\caption{Rating instructions for SummEval ``consistency'' rating task.}
\label{fig:summeval-consistency-prompts}
\end{figure}

\begin{figure}[htbp]
\begin{minipage}{\textwidth}
    \begin{tcolorbox}[title=Response Set (RS) Rating Instructions - SummEval Fluency]
        You will be given one summary written for a news article. 
        Your task is to rate the summary on one metric.
        Please make sure you read and understand these instructions carefully. 

        \textbf{Evaluation Criteria:}\\
        Fluency - the quality of the summary in terms of grammar, spelling, punctuation, word choice, and sentence structure.

        \textbf{Evaluation Steps:}
        \begin{enumerate}
            \item Read the summary carefully.
            \item Assess the grammar, spelling, punctuation, word choice, and sentence structure.
            \item Select ALL options that reasonably apply, based on different plausible interpretations of the rating criteria.
        \end{enumerate}

        Now provide a response to the following example:\\
        Article: ``\{article\}''\\
        Summary: ``\{summary\}''

        Select ALL options that could reasonably apply:

        \begin{itemize}
            \item[A.] Fluent - The summary has good grammar, appropriate word choice, and flows naturally
            \item[B.] Not Fluent - The summary has errors that affect readability or sound unnatural
        \end{itemize}

        \textbf{RESPONSE FORMAT:}
        \begin{itemize}
            \item Provide only the sequence of letters (e.g., ``AB'' or ``A'')
            \item Use only letters A or B
            \item No spaces, punctuation, or other characters
            \item Do not provide the same letter more than once (e.g., ``AA'' or ``BB'', or ``B\_'' are invalid responses)
            \item No explanations
        \end{itemize}
    \end{tcolorbox}
\end{minipage}
\vspace{1em}
\begin{minipage}{\textwidth}
    \begin{tcolorbox}[title=Forced Choice (FC) Rating Instructions - SummEval Fluency]
        You will be given one summary written for a news article. 
        Your task is to rate the summary on one metric.
        Please make sure you read and understand these instructions carefully. 

        \textbf{Evaluation Criteria:}\\
        Fluency - the quality of the summary in terms of grammar, spelling, punctuation, word choice, and sentence structure.

        \textbf{Evaluation Steps:}
        \begin{enumerate}
            \item Read the summary carefully.
            \item Assess the grammar, spelling, punctuation, word choice, and sentence structure.
            \item Select ONE option that best applies.
        \end{enumerate}

        Now provide a response to the following example:\\
        Article: ``\{article\}''\\
        Summary: ``\{summary\}''

        Select ONE option that best applies:

        \begin{itemize}
            \item[A.] Fluent - The summary has good grammar, appropriate word choice, and flows naturally
            \item[B.] Not Fluent - The summary has errors that affect readability or sound unnatural
        \end{itemize}

        \textbf{RESPONSE FORMAT:}
        \begin{itemize}
            \item Provide only a single letter: A or B
            \item The letter must be the first character in your response
            \item No spaces, punctuation, or other characters
            \item No explanation
        \end{itemize}
    \end{tcolorbox}
\end{minipage}
\caption{Rating instructions for SummEval ``fluency'' rating task.}
\label{fig:summeval-fluency-prompts}
\end{figure}

\begin{figure}[htbp]
\begin{minipage}{\textwidth}
    \begin{tcolorbox}[title=Response Set (RS) Rating Instructions - QAGS]
        In this task, you will read an article and a sentence.

        The task is to determine if the sentence is factually correct given the contents of the article. Many sentences contain portions of text copied directly from the article.
        Be careful as some sentences may be combinations of two different parts of the article, resulting in sentences that overall aren't supported by the article.
        Some article sentences may seem out of place (for example, ``Scroll down for video'').
        If the sentence is a copy of an article sentence, including one of these sentences, you should still treat it as factually supported.
        Otherwise, if the sentence doesn't make sense, you should mark it as not supported. Also note that the article may be cut off at the end.
        
        Now provide a response to the following example:\\
        Article: ``\{article\}''\\
        Sentence: ``\{sentence\}''

        Is the sentence supported by the article? Select ALL options that could reasonably apply:

        \begin{itemize}
            \item[A.] Supported - The sentence is factually correct given the contents of the article
            \item[B.] Not Supported - The sentence contains facts not supported by the article
        \end{itemize}

        \textbf{RESPONSE FORMAT:}
        \begin{itemize}
            \item Provide only the sequence of letters (e.g., ``AB'' or ``A'')
            \item Use only letters A or B
            \item No spaces, punctuation, or other characters
            \item Do not provide the same letter more than once (e.g., ``AA'' or ``BB'', or ``B\_'' are invalid responses)
            \item No explanations
        \end{itemize}
    \end{tcolorbox}
\end{minipage}
\vspace{1em}
\begin{minipage}{\textwidth}
    \begin{tcolorbox}[title=Forced Choice (FC) Rating Instructions - QAGS]
        In this task, you will read an article and a sentence.

        The task is to determine if the sentence is factually correct given the contents of the article. Many sentences contain portions of text copied directly from the article.
        Be careful as some sentences may be combinations of two different parts of the article, resulting in sentences that overall aren't supported by the article.
        Some article sentences may seem out of place (for example, ``Scroll down for video'').
        If the sentence is a copy of an article sentence, including one of these sentences, you should still treat it as factually supported.
        Otherwise, if the sentence doesn't make sense, you should mark it as not supported. Also note that the article may be cut off at the end.
        
        Now provide a response to the following example:\\
        Article: ``\{article\}''\\
        Sentence: ``\{sentence\}''

        Is the sentence supported by the article? Select ONE option that best applies:

        \begin{itemize}
            \item[A.] Supported - The sentence is factually correct given the contents of the article
            \item[B.] Not Supported - The sentence contains facts not supported by the article
        \end{itemize}

        \textbf{RESPONSE FORMAT:}
        \begin{itemize}
            \item Provide only a single letter: A or B
            \item The letter must be the first character in your response
            \item No spaces, punctuation, or other characters
            \item No explanation
        \end{itemize}
    \end{tcolorbox}
\end{minipage}
\caption{Rating instructions for QAGS ``factual consistency'' rating task.}
\label{fig:qags-prompts}
\end{figure}

\begin{figure}[htbp]
\begin{minipage}{\textwidth}
    \begin{tcolorbox}[title=Response Set (RS) Rating Instructions - Topical Chat Uses Knowledge]
        Given a conversation and an interesting fact, your task is to rate how well the response uses the provided fact.
        Please make sure you read and understand these instructions carefully. 

        \textbf{Evaluation Criteria:}\\
        Given the interesting fact that the response is conditioned on, how well does the response use the fact?

        \textbf{Evaluation Steps:}
        \begin{enumerate}
            \item Read the conversation context, fact, and response carefully.
            \item Assess whether the response incorporates or references the provided fact.
            \item Select ALL options that reasonably apply, based on different plausible interpretations of the rating criteria.
        \end{enumerate}

        Now provide a response to the following example:\\
        Fact: ``\{fact\}''\\
        Context: ``\{context\}''\\
        Response: ``\{response\}''

        Select ALL options that could reasonably apply:

        \begin{itemize}
            \item[A.] Uses Knowledge - The response clearly uses or references the fact
            \item[B.] Doesn't Use Knowledge - The response does not mention or refer to the fact at all
        \end{itemize}

        \textbf{RESPONSE FORMAT:}
        \begin{itemize}
            \item Provide only the sequence of letters (e.g., ``AB'' or ``A'')
            \item Use only letters A or B
            \item No spaces, punctuation, or other characters
            \item Do not provide the same letter more than once (e.g., ``AA'' or ``BB'', or ``B\_'' are invalid responses)
            \item No explanations
        \end{itemize}
    \end{tcolorbox}
\end{minipage}
\vspace{1em}
\begin{minipage}{\textwidth}
    \begin{tcolorbox}[title=Forced Choice (FC) Rating Instructions - Topical Chat Uses Knowledge]
        Given a conversation and an interesting fact, your task is to rate how well the response uses the provided fact.
        Please make sure you read and understand these instructions carefully. 

        \textbf{Evaluation Criteria:}\\
        Given the interesting fact that the response is conditioned on, how well does the response use the fact?

        \textbf{Evaluation Steps:}
        \begin{enumerate}
            \item Read the conversation context, fact, and response carefully.
            \item Assess whether the response incorporates or references the provided fact.
            \item Select ONE option that best applies.
        \end{enumerate}

        Now provide a response to the following example:\\
        Fact: ``\{fact\}''\\
        Context: ``\{context\}''\\
        Response: ``\{response\}''

        Select ONE option that best applies:

        \begin{itemize}
            \item[A.] Uses Knowledge - The response clearly uses or references the fact
            \item[B.] Doesn't Use Knowledge - The response does not mention or refer to the fact at all
        \end{itemize}

        \textbf{RESPONSE FORMAT:}
        \begin{itemize}
            \item Provide only a single letter: A or B
            \item The letter must be the first character in your response
            \item No spaces, punctuation, or other characters
            \item No explanation
        \end{itemize}
    \end{tcolorbox}
\end{minipage}
\caption{Rating instructions for Topical Chat ``uses knowledge'' rating task.}
\label{fig:topical-chat-uses-knowledge-prompts}
\end{figure}

\begin{figure}[htbp]
\begin{minipage}{\textwidth}
    \begin{tcolorbox}[title=Response Set (RS) Rating Instructions - Topical Chat Understandable]
        Given a conversation and a response, your task is to rate whether the response is understandable in the context of the conversation.
        Please make sure you read and understand these instructions carefully. 

        \textbf{Evaluation Criteria:}\\
        Is the response understandable in the context of the history? (Not if it's on topic, but for example if it uses pronouns they should make sense)

        \textbf{Evaluation Steps:}
        \begin{enumerate}
            \item Read the conversation context, fact, and response carefully.
            \item Assess whether you can understand what the response is trying to communicate.
            \item Select ALL options that reasonably apply, based on different plausible interpretations of the rating criteria.
        \end{enumerate}

        Now provide a response to the following example:\\
        Fact: ``\{fact\}''\\
        Context: ``\{context\}''\\
        Response: ``\{response\}''

        Select ALL options that could reasonably apply:

        \begin{itemize}
            \item[A.] Understandable - You know what the person is trying to say
            \item[B.] Not Understandable - The response is difficult to understand
        \end{itemize}

        \textbf{RESPONSE FORMAT:}
        \begin{itemize}
            \item Provide only the sequence of letters (e.g., ``AB'' or ``A'')
            \item Use only letters A or B
            \item No spaces, punctuation, or other characters
            \item Do not provide the same letter more than once (e.g., ``AA'' or ``BB'', or ``B\_'' are invalid responses)
            \item No explanations
        \end{itemize}
    \end{tcolorbox}
\end{minipage}
\vspace{1em}
\begin{minipage}{\textwidth}
    \begin{tcolorbox}[title=Forced Choice (FC) Rating Instructions - Topical Chat Understandable]
        Given a conversation and a response, your task is to rate whether the response is understandable in the context of the conversation.
        Please make sure you read and understand these instructions carefully. 

        \textbf{Evaluation Criteria:}\\
        Is the response understandable in the context of the history? (Not if it's on topic, but for example if it uses pronouns they should make sense)

        \textbf{Evaluation Steps:}
        \begin{enumerate}
            \item Read the conversation context, fact, and response carefully.
            \item Assess whether you can understand what the response is trying to communicate.
            \item Select ONE option that best applies.
        \end{enumerate}

        Now provide a response to the following example:\\
        Fact: ``\{fact\}''\\
        Context: ``\{context\}''\\
        Response: ``\{response\}''

        Select ONE option that best applies:

        \begin{itemize}
            \item[A.] Understandable - You know what the person is trying to say
            \item[B.] Not Understandable - The response is difficult to understand
        \end{itemize}

        \textbf{RESPONSE FORMAT:}
        \begin{itemize}
            \item Provide only a single letter: A or B
            \item The letter must be the first character in your response
            \item No spaces, punctuation, or other characters
            \item No explanation
        \end{itemize}
    \end{tcolorbox}
\end{minipage}
\caption{Rating instructions for Topical Chat ``understandable'' rating task.}
\label{fig:topical-chat-understandable-prompts}
\end{figure}

\newpage

\clearpage
\newpage
\section{Synthetic Data Experiments: Setup Details and  Results}\label{appendix:synthetic_data_experiments}

In the main text, we claim that differences in how humans and LLMs resolve indeterminacy in forced-choice rating tasks heavily bias LLM-as-a-judge validations. In this section, we provide evidence for this claim via synthetic experiments. These experiments examine how different operationalizations of human--judge agreement respond to forced choice selection effects. These experiments further characterize the joint effects of forced choice selection and rater error (see $\S$ \ref{appendix:theory} for the full model). 

\subsection{Experiment Design.}

\textit{Human Rating Distribution}. For each item, we sample the response set distribution $\boldsymbol{\theta}^{*,H}_i \sim \text{Dir}(\mathbf{1}_{|\mathcal{Q}|})$. We let $\epsilon$ denote the probability that a rater selects an observed response set that differs from their stable response set. We construct the error matrix $\mathbf{E}^H_i$ such that diagonal entries denote the probability of no rating error $(1-\epsilon)$ and off-diagonal entries denote the probability of rating error ($\epsilon$). We use a skew parameter $\eta$ to control how errors are distributed across response sets. We let $k=0$ denote the index of the option used to categorize items (e.g., as ``toxic''). The skew parameter controls whether errors systematically favor ($\eta > 0$) or disfavor ($\eta < 0$) response sets containing option $k=0$. 

We model $\mathbf{F}_i^H$ by sampling an exponential decay function $\mathbf{F}^H_{i,k,v} \propto \exp(-\gamma^H \cdot r_k)$. Here, $r_k$ denotes the rank (low to high) of the $k$th option in the $v$th response set and $\gamma^H$ controls the extent to which forced choice ratings are biased towards low-index options (e.g., including the index used to categorize options $k=0$). We compute the forced choice distribution via $\mathbf{O}^H_{i} = \mathbf{F}_{i}^H(\mathbf{E}^H_i (\mathbf{\boldsymbol{\theta}}^{*,H}_{i}))$.

\textit{Judge Rating Distribution}. We model judge systems by sampling an ensemble of distributions with varying similarity to the human rating distribution. We control the deviation of the $z$th judge's rating distribution via $\sigma^{J_z} \sim \mathcal{U}(\sigma_\text{min}, \sigma_\text{max})$. We then sample the $z$th judges' response set distribution by applying $\boldsymbol{\theta}_{i}^{*,J_z} = \Pi_{\Delta}\{\boldsymbol{\theta}^*_{i} + \boldsymbol{\epsilon}_i^{J_z}\}$, where $\boldsymbol{\epsilon}_i^{J_z} \sim \mathcal{N}(0, (\sigma^{J_z})^2\mathbf{I})$ and $\Pi_{\Delta}$ projects onto the probability simplex. We sample $\mathbf{F}_{i}^{J_z}$ following the same procedure used to sample the human rating distribution and. Because judge systems are not affected by rater error, we directly compute $\mathbf{O}^{J_z}_{i} = \mathbf{F}_{i}^{J_z}(\mathbf{\boldsymbol{\theta}}^{*,J_z}_{i})$.

\subsubsection{Measuring Asymmetry in Human and Judge System Responses to Indeterminacy}

To measure the similarity between how humans and the judge system respond to indeterminate items, we measure whether a rater tends to systematically favor or disfavor the option used to categorize each item -- e.g., as ``toxic'', ``factually inconsistent'', or ``irrelevant'' -- given its inclusion in a response set. This metric, which we refer to as \textit{forced choice selection effects}, is measured via:
$$
\Gamma = \frac{1}{|\mathcal{Q}_+|}\sum_{ \mathcal{S} \in \mathcal{Q} } \frac{1}{|\mathcal{S}|} P(O=o_k \mid o_k \in \mathcal{S}),
$$
where $\mathcal{S} \in \mathcal{Q}$ is a response set, and $\mathcal{Q}_+ \in \mathcal{Q}$ denotes all response sets containing the option at index $k=0$. This metric is interpreted similarly to an odds ratio, and has the interpretation: 

\begin{itemize}
    \item $\Gamma=1$ describes a setting where a rater resolves indeterminacy by selecting a random option.

    \item $\Gamma>1$ describes a setting where a rater tends to select the positive option --- e.g., ``toxic'', ``factually inconsistent'' --- when they identify multiple options as correct for an item. 

    \item $\Gamma<1$ describes a setting where a rater tends to select the negative option --- e.g., ``not toxic'', ``factually consistent'' --- when they identify multiple options as correct for an item. 

\end{itemize}

We let $\Gamma^H$ and $\Gamma^{J_z}$ denote human and judge system forced choice selection effects, respectively. We then leverage this metric to construct two conditions: 

\begin{itemize}
    \item \textbf{Symmetric selection effects:} $\text{sign}(\Gamma^H - 1) = \text{sign}(\Gamma^{J_z} - 1)$. Symmetric selection effects occur when, on average, humans and the judge system resolve indeterminacy in the same way -- i.e., by favoring or disfavoring the option used to categorize each item.

    \item \textbf{Asymmetric selection effects:} $\text{sign}(\Gamma^H - 1) \neq \text{sign}(\Gamma^{J_z} - 1)$. Asymmetric selection effects occur when humans and the judge system resolve indeterminacy differently --- i.e., by selecting different forced choice options given that they identify multiple as correct. 
\end{itemize}

\textbf{Experimental Parameters.} We construct sampling parameters for humans and the judge system ($\gamma$) such that $\Gamma\approx\{0.5,1, 2\}$. We run all experiments with $50$ judge systems. We use 100 items in all experiments and select option and response set configurations satisfying: $2 \leq |\mathcal{O}|  \leq 10$,  $2 \leq |\mathcal{Q}| \leq 30$, $|\mathcal{O}| \leq |\mathcal{Q}|$. We let $\sigma_\text{min} = 0.02$ and $\sigma_\text{max}=.4$ when sampling judge systems.

\begin{figure}[!t]
    \centering
    \begin{minipage}[t]{\textwidth}
        \centering
        \includegraphics[width=\textwidth]{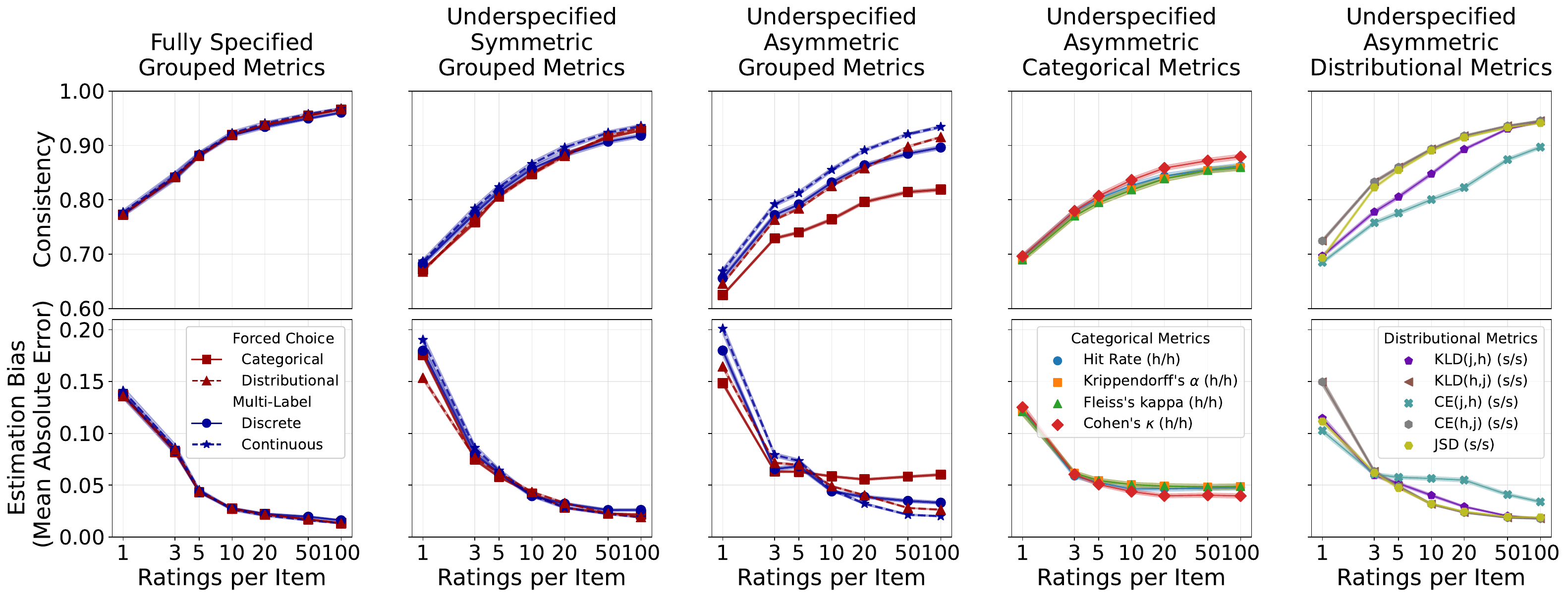}
        \caption{Downstream performance of judge systems selected using different human--judge agreement metrics. The X-axis shows ratings-per-item used to estimate the human rating distribution. Results show that categorical metrics perform poorly under asymmetric FC selection effects (col 3-4).}
        \label{fig:convergence_plot}
    \end{minipage}

    \vspace{1cm} 
     \begin{minipage}[t]{0.48\textwidth}
        \centering
        \includegraphics[width=\textwidth]{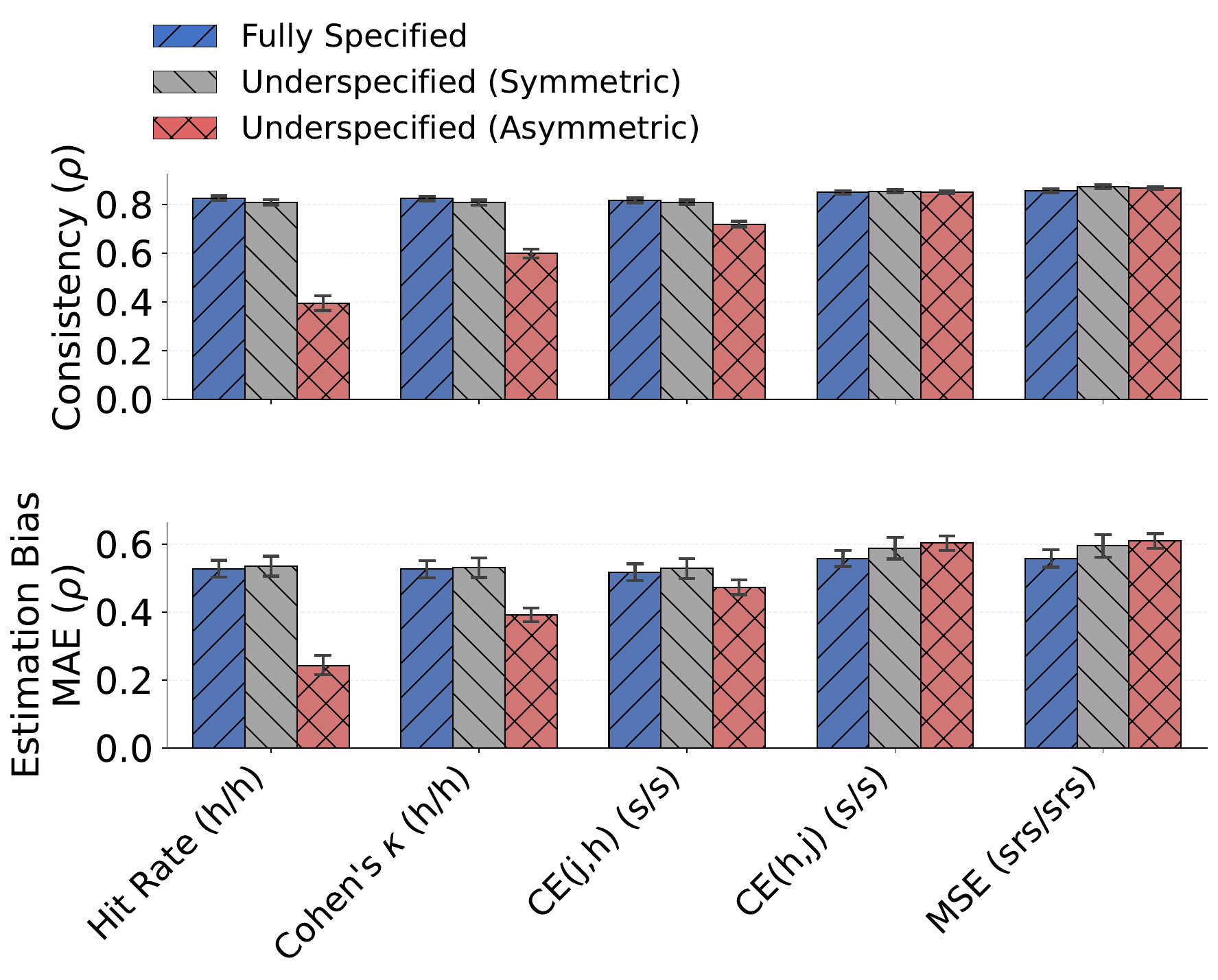}
        \caption{Correlation ($\rho$) between judge rankings from downstream vs. human--judge agreement metrics. Categorical metrics show low correlation under asymmetric FC selection effects.}
        \label{fig:corr_plot}
    \end{minipage}%
    \hfill 
    \begin{minipage}[t]{0.5\textwidth}
        \centering
        \includegraphics[width=\textwidth]{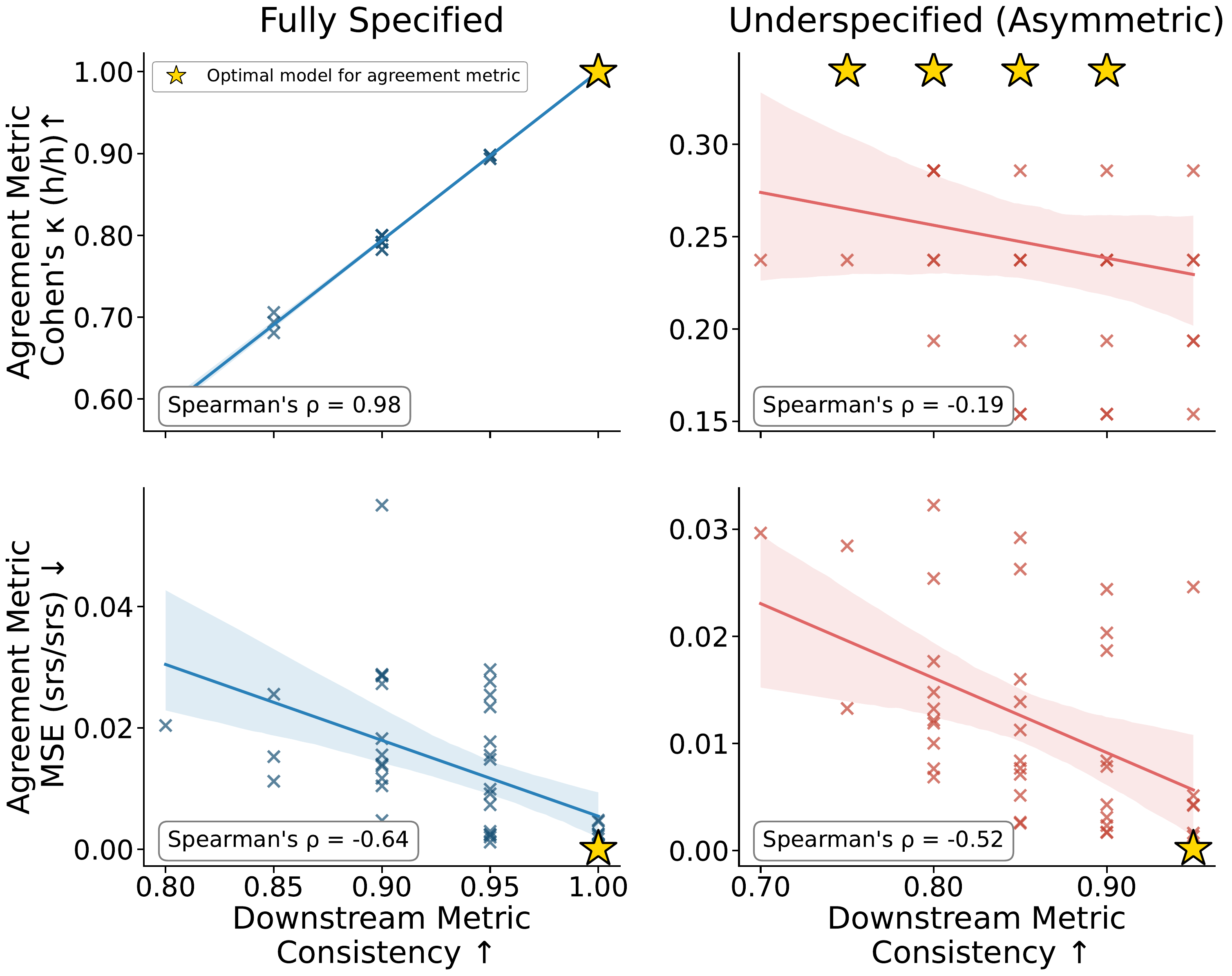}
        \caption{Saturation effects with categorical metrics under asymmetric forced choice selection effects. Fully specified rating tasks (left column) and MSE (srs/srs) (bottom row) recover the optimal judge system (shown via a star). Categorical metrics recover four equally performant judge systems under asymmetric selection effects (top right).}
        \label{fig:rank_sub_analysis}
    \end{minipage}

\end{figure}

\subsection{Results.}

\textbf{Finding 1: When human raters resolve indeterminacy differently from judge systems, agreement metrics measured against forced choice ratings yield sub-optimal selections.} As shown in the right three columns of Figure \ref{fig:convergence_plot}, categorical human--judge agreement metrics select sub-optimal judge systems when (1) rating tasks are underspecified and (2) selection effects are asymmetric. In contrast, we observe that all agreement metrics perform similarly when the rating task is fully specified (Fig. \ref{fig:convergence_plot}, col 1) or underspecified with symmetric selection effects (Fig. \ref{fig:convergence_plot}, col 2). Figure \ref{fig:corr_plot} corroborates these findings by indicating weak Spearman correlation between categorical human--judge agreement metrics and downstream performance metrics under asymmetric selection effects. Figure \ref{fig:rank_sub_analysis} illustrates the mechanism driving instability of categorical metrics. When a rating task is underspecified, categorical agreement metrics do not yield a unique ``optimal'' judge system with respect to the downstream metric (upper right). This ceiling effect enables selecting a judge system that is optimal for the agreement metric but suboptimal for the downstream metric. In contrast, other configurations shown in Figure \ref{fig:rank_sub_analysis} all yield a single optimal judge system.

\textbf{Finding 2: Fully specified rating tasks enable more performant judge system selections and more effective use of limited annotation budgets.} Across all analyses, we find that fully specifying rating tasks improve judge system selections (Fig. \ref{fig:convergence_plot}, \ref{fig:corr_plot}, \ref{fig:rank_sub_analysis}). Figure \ref{fig:convergence_plot} also shows resource benefits associated with fully specifying rating tasks. Judge systems selected with one rating per item on a fully specified task (Fig. \ref{fig:convergence_plot}, left) match the performance of those selected with \textit{three} ratings per item on an underspecified task (Fig. \ref{fig:convergence_plot}, center) -- a 66\% reduction in per-item requirements (See $\S$ \ref{appendix:implementation}).

\textbf{Finding 3: Rater error can further weaken the stability of categorical agreement metrics, but appears less of a concern than forced choice selection effects in our specific setting.} Given the significant impacts of rater error documented in prior work (e.g., \citep{klie2023annotation, plank2022problem, gordon2021disagreement}), we conduct additional experiments characterizing the effect of rater error on judge system rankings. Figure \ref{fig:error_ablation} illustrates that the effects of rater error on the reliability of HR (h/h) are greatest when forced choice selection and rater error are additive (Fig. \ref{fig:error_ablation}, column 1-2, green). However, we also observe that the rank correlation between error-free and error-corrupted judge system rankings remains high for MSE (srs/srs) and JSD (s/s) across error settings (Fig. \ref{fig:error_ranking_ablation}). Lemma \ref{lemma:mse_rater_error} (Appendix \ref{appendix:theory}) pinpoints the mechanism. Because rater error affects human ratings but \textit{not} judge system ratings, there is not an opportunity for asymmetries to arise between humans and the judge system, as there are with forced choice selection. Thus, the impact of rater error on the  \textit{comparative ranking} of judge systems remains limited. However, given the synthetic nature of our experiments, further work is needed to characterize the effect of rater error on judge system selection. 

\begin{figure}
    \centering
    \includegraphics[width=\linewidth]{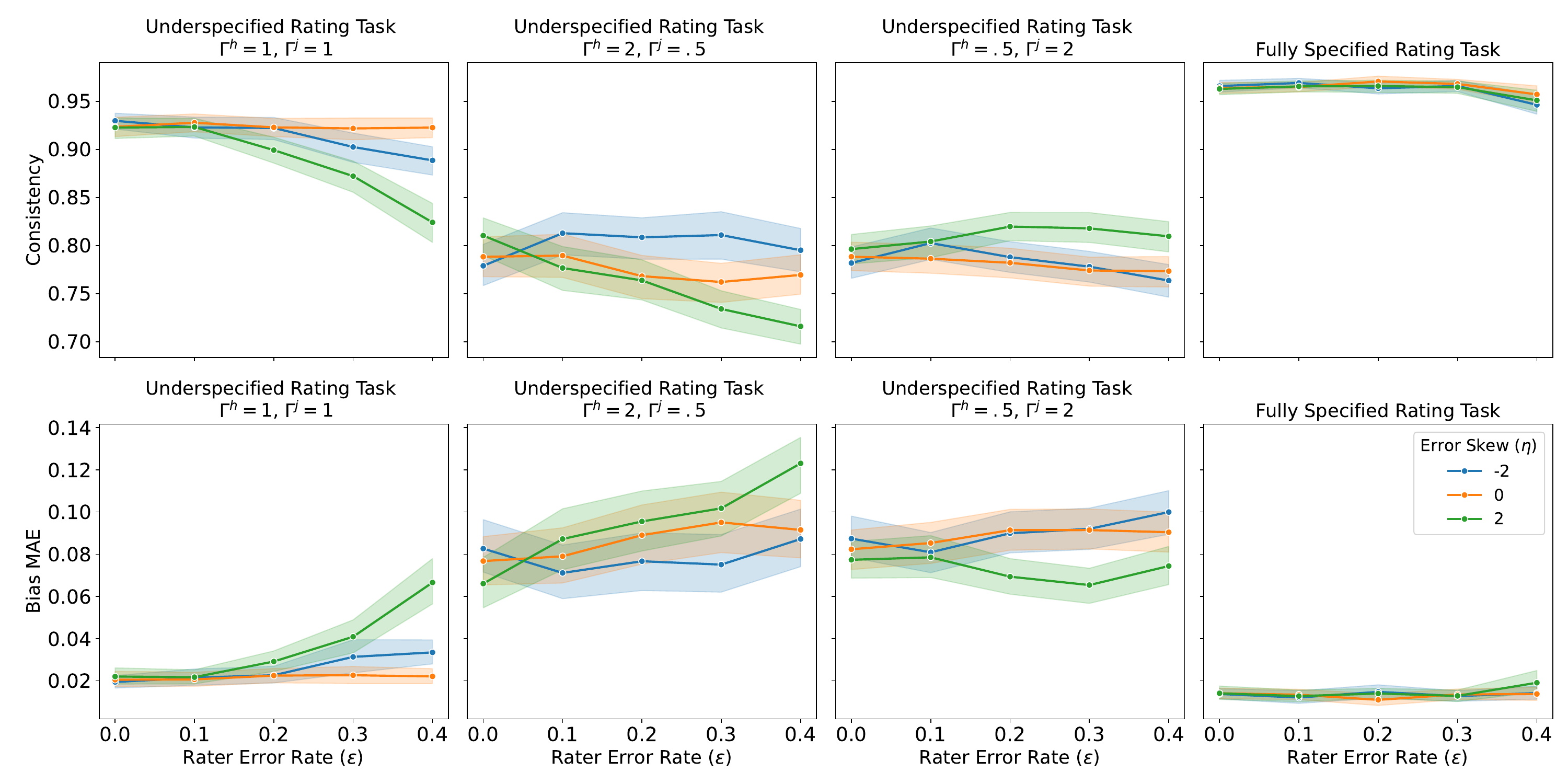}
    \caption{Joint effects of forced choice selection and rater error on the reliability of HR (h/h). Y-axis shows downstream performance of judge systems selected via HR (h/h) with 100 ratings-per-item. We parameterize rater error magnitude via $\epsilon$, which controls the probability that observed and stable response sets differ ($\S$ \ref{appendix:theory}). Positive ($\eta=2$) and negative ($\eta=-2$) skew indicate that errors tend to favor and disfavor option $k=0$, respectively. Additive instability occurs under positive selection effects and positive skew (left two columns, green). Results averaged over $\tau=[.3,.5,.7]$.} 
    \label{fig:error_ablation}
\end{figure}

\begin{figure}
    \centering
    \includegraphics[width=.8\linewidth]{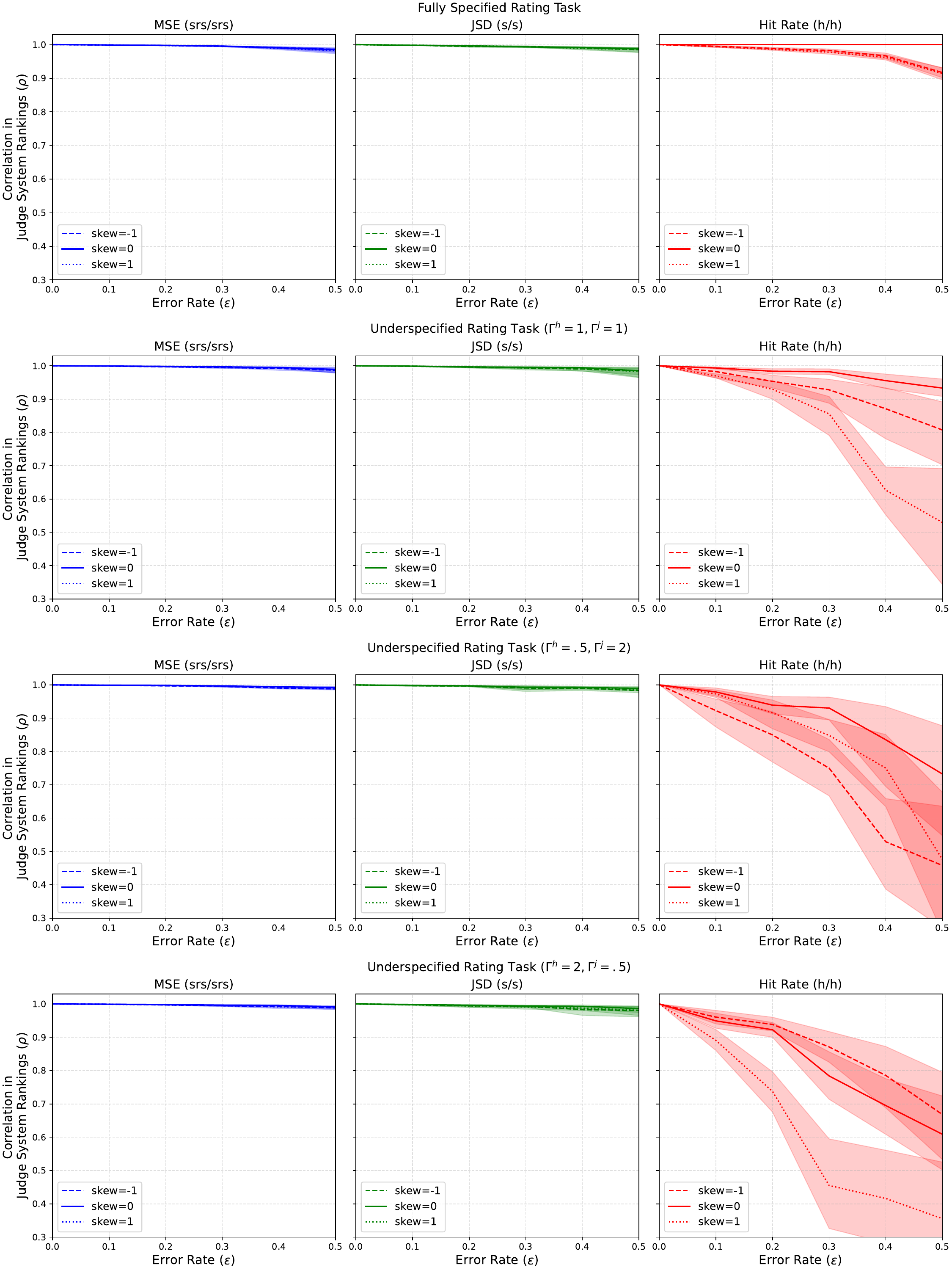}
    \caption{Spearman correlation ($\rho$) between judge system ($N=50$) rankings obtained via error-corrupted versus error-free human rating distributions. We isolate rater error effects by using population rating distributions, eliminating finite sample estimation error as a confounder. We compute error-corrupted forced choice distribution via $\mathbf{O}_i = \mathbf{F}_i (\mathbf{E}_i \boldsymbol{\theta}_i^*)$ and error-corrupted multi-label vector via $\mathbf{\Omega}_i = \mathbf{\Lambda} (\mathbf{E}_i \boldsymbol{\theta}_i^*)$. We compute uncorrupted forced choice distribution via $\mathbf{O}^*_i = \mathbf{F}_i (\boldsymbol{\theta}_i^*)$ and uncorrupted multi-label vector via $\mathbf{\Omega}^*_i = \mathbf{\Lambda} (\boldsymbol{\theta}_i^*)$. We observe that leveraging JSD (s/s) and MSE (srs/srs) as a human-judge agreement yields a consistent ranking of judge systems across error-corrupted and error-free settings, even under a large magnitude of error ($\epsilon$) and non-zero skew ($\eta$). In contrast, using Hit Rate (h/h) to rank judge systems yields rank inconsistencies when (1) the rating task is underspecified, (2) $\epsilon > 0$, and (3) skew $(\eta) \neq 0$. This provides additional evidence for the relative insensitivity of judge selection procedures to rater error (as compared to forced choice selection effects), particularly when adopting non-categorical human-judge agreement metrics.} 
    \label{fig:error_ranking_ablation}
\end{figure}

\begin{figure}[t]
    \centering
    \includegraphics[width=.8\linewidth]{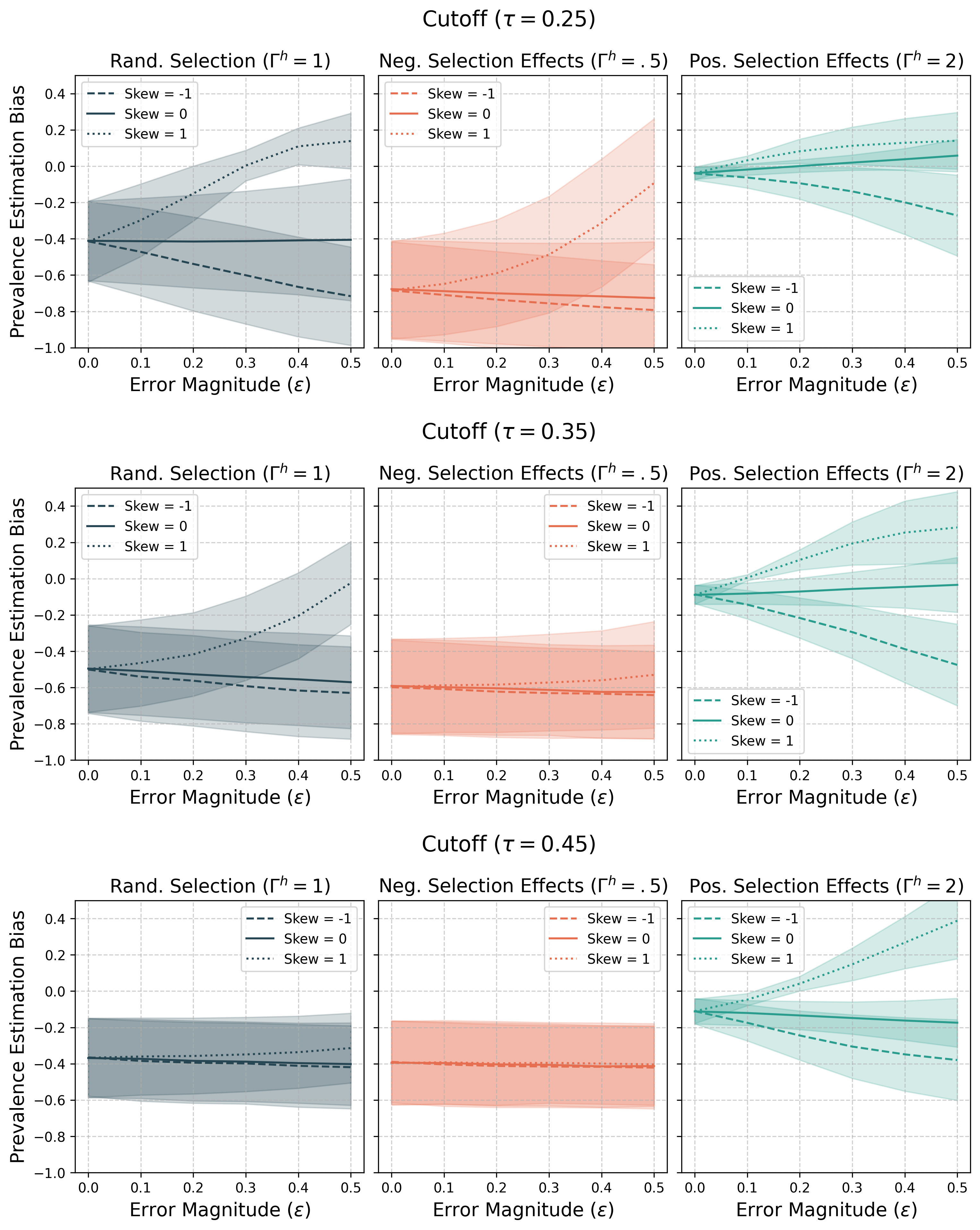}
    \caption{Bias in prevalence estimates obtained from thresholding the forced choice distribution $\mathbf{O}_i = \mathbf{F}_i (\mathbf{E}_i \boldsymbol{\theta}^*_i)$ versus multi-label vector $\mathbf{\Omega}^*_i = \mathbf{\Lambda} \boldsymbol{\theta}^*_i$. This experiment eliminates finite sample error as a confounder by directly using population rating vectors. Rater error has the largest affect on prevalence estimates when it is correlated with the option ($k=0$) used to determine item-level categorizations (i.e., has non-zero skew $\eta \neq 0$). Even large magnitudes of error ($\epsilon \geq .4$) has a limited affect on prevalence estimates when $\eta=0$. Overall, rater error has a significant affect when using human ratings to directly compute $\mathcal{G}_\text{target}$ prevalence estimates (shown in this figure). The affect of rater error on judge system selection depends on the specific relationship between rater error and judge system rating distributions (shown in Figures \ref{fig:convergence_plot}, \ref{fig:error_ablation}, \ref{fig:error_ranking_ablation}, Lemma \ref{lemma:mse_rater_error}).}
    \label{fig:prevalence_estimation_error}
\end{figure}

\clearpage
\newpage

\section{Examples of Monotonicity Violations Among Pairs of Performance Metrics}\label{appendix:metric_ranking_examples}

\vspace{.5cm}
\begin{mdframed}[
    backgroundcolor=gray!10,
    linewidth=1pt,
    linecolor=gray!50,
    innertopmargin=8pt,
    innerbottommargin=15pt,
    innerleftmargin=15pt,
    innerrightmargin=15pt
]
\textbf{Example 1: Hit Rate (Forced Choice) and KL-Divergence (Forced Choice).}

Let $\mathbf{p} = (a^H_{\text{hard}}, a^J_{\text{hard}}, \text{Hit Rate})$ and let $\mathbf{p}_* = (a^H_{\text{soft}}, a^J_{\text{soft}}, \text{KL-Divergence})$. Consider forced choice distributions recovered from human ratings and the judge systems $Z$ and $W$, respectively:  $\mathbf{O}^{H} = a^H_{\text{soft}}(\mathbb{P}^H)$, $\mathbf{O}^{J,Z} = a^J_{\text{soft}}(\mathbb{P}^{J,Z})$, $\mathbf{O}^{J,W} = a^H_{\text{soft}}(\mathbb{P}^{J,W})$, where we omit $i$ from all terms for brevity. 

Suppose these distributions are defined over three options $\mathcal{O} = \{o_1, o_2, o_3\}$ and for an item $i$:
\begin{align*}
\text{Human:} \; \mathbf{O}^{H} = (0.6, 0.3, 0.1), \; \text{Judge $Z$:} \; \mathbf{O}^{J,Z} = (0.8, 0.1, 0.1), \; \text{Judge $W$:} \; \mathbf{O}^{J,W} = (0.5, 0.4, 0.1)
\end{align*}
Under $\mathbf{p}$, we have:
\[
    \text{HR}(\mathbf{O}^{J,Z},\mathbf{O}^{H}) = 1.0 > \text{HR}(\mathbf{O}^{J,W}, \mathbf{O}^{H}) = 0.0 \implies \mathcal{G}_{judge}^Z \succ_{\mathbf{p}} \mathcal{G}_{judge}^W.
\]

But under $\mathbf{p}_*$, we have:
\[
    \text{KL}(\mathbf{O}^{H}\|\mathbf{O}^{J,Z}) \approx 0.15 > \text{KL}(\mathbf{O}^{H}\|\mathbf{O}^{J,W}) \approx 0.02 \implies \mathcal{G}_{judge}^W \succ_{\mathbf{p}} \mathcal{G}_{judge}^Z.
\]

Thus, we have identified a pair of conditional rating distributions $\mathbb{P}^{J,Z}$, $\mathbb{P}^{J,W}$ and a corresponding human rating distribution $\mathbb{P}^{H}$ where $\mathbf{p}_*$ is not a monotone transformation of $\mathbf{p}$, so rank consistency between $\mathbf{p}$ and $\mathbf{p}_*$ cannot hold. 

\end{mdframed}

\begin{mdframed}[
backgroundcolor=gray!10,
linewidth=1pt,
linecolor=gray!50,
innertopmargin=10pt,
innerbottommargin=10pt,
innerleftmargin=10pt,
innerrightmargin=10pt
]
\textbf{Example 2: KL-Divergence (Forced Choice) and MSE (Multi-Label).} Let $\mathbf{p} = (a^H_{\text{soft}}, a^J_{\text{soft}}, \text{KL-divergence})$ and let $\mathbf{p}_* = (a^H_{\text{srs}}, a^J_{\text{srs}}, \text{MSE})$. Let $\mathcal{O} = \{o_1, o_2\}$ and $\mathcal{Q} = \{\{o_1\}, \{o_2\}, \{o_1,o_2\}\}$. Suppose that humans have no rater error (i.e., $\mathbf{E}^H$ and $\mathbf{E}^J$ are both the identity). Let $\mathbb{P}^H_i$ satisfy the decomposition:
$$
\mathbf{O}^{H} = (.4,.6)^\top, \;\; \mathbf{F}^{H} = \begin{bmatrix}
1 & 0 & 0 \\
0 & 1 & 1
\end{bmatrix}, \;\; \boldsymbol{\theta}^{*,h} = (0.4, 0.5, 0.1)^\top, \;\; \mathbf{\Omega}^{H} = (.5,.6)^\top.
$$
Let $\mathbb{P}^{J,Z}_i$ denote the conditional rating distribution of the judge system satisfying: 
$$
\mathbf{O}^{J,Z} = (.4,.6)^\top, \;\; \mathbf{F}^{J,Z} = \begin{bmatrix}
1 & 0 & 1 \\
0 & 1 & 0
\end{bmatrix}, \;\; \boldsymbol{\theta}^{*,J,Z} = (0.0, 0.6, 0.4)^\top, \;\; \mathbf{\Omega}^{J,Z} = (.4,.6)^\top.
$$
Let $\mathbb{P}^{J,W}_i$ denote the conditional rating distribution of the judge system satisfying: 
$$
\mathbf{O}^{J,W} = (.5,.5)^\top, \;\; \mathbf{F}^{*,J,W} = \begin{bmatrix}
1 & 0 & 1 \\
0 & 1 & 0
\end{bmatrix}, \;\; \boldsymbol{\theta}^{*,J,W} = (0.4, 0.5, 0.1)^\top, \;\; \mathbf{\Omega}^{J,W} = (.5,.6)^\top.
$$
Under $\mathbf{p}$, we have:
$$
\text{KL}(\mathbf{O}^{H}|\mathbf{O}^{J,Z}) = 0 < \text{KL}(\mathbf{O}^{H}|\mathbf{O}^{J,W}) \approx 0.02 \implies \mathcal{G}_{judge}^Z \succ_{\mathbf{p}} \mathcal{G}_{judge}^W.
$$
But under $\mathbf{p}_*$
$$
\text{MSE}(\mathbf{\Omega}^{J,Z}, \mathbf{\Omega}^{H}) = 0.01 > \text{MSE}(\mathbf{\Omega}^{J,W}, \mathbf{\Omega}^{H}) = 0.00 \implies \mathcal{G}_{judge}^W \succ_{\mathbf{p}_*} \mathcal{G}_{judge}^Z.
$$
yielding a violation of monotonicity. Thus, we have identified a pair of conditional rating distributions $\mathbb{P}^{J,Z}_i$, $\mathbb{P}^{J,W}_i$ and a corresponding human rating distribution $\mathbb{P}^{H}_i$ where $\mathbf{p}_*$ is not a monotone transformation of $\mathbf{p}$, so rank consistency between $\mathbf{p}$ and $\mathbf{p}_*$ cannot hold.

\end{mdframed}

\clearpage
\newpage
\section{Practical Implementation Considerations}\label{appendix:implementation}

We now describe how our framework relates to several practical dimensions that are often of interest during the design of LLM-as-a-judge meta-evaluations. 

\begin{itemize}
    \item \textbf{Computational Overhead:} On a per-item (i.e., target system output) basis, our framework imposes \textit{no additional computational overhead} over status quo meta-evaluation practices that leverage (1) forced choice elicitation with (2) hard aggregation and and (3) categorical human--judge agreement metrics. In particular, all operationalizations of human--judge agreement (Table \ref{tab:performance_measures}) have a constant computational overhead. Thus, adopting our proposed MSE metric recovered from soft response set aggregation imposes no further overhead. Additionally, prompting a judge system via response set elicitation requires the same number of inference calls and output tokens per-call as the status quo forced choice elicitation approach. In the case in which practitioners prompt with both forced choice and response set elicitation, our framework will result in a $2x$ increase in system calls. However, we note that this is not strictly speaking necessary when directly leveraging our proposed multi-label human--judge agreement metrics, which do not make use of forced choice ratings. 
    
    \item \textbf{Cognitive Overhead:} In some cases, prompting human raters with response set elicitation as opposed to forced choice elicitation may increase the cognitive overhead on a per-item basis \citep{dhar2003effect}. For rating tasks with a simple structure (e.g., Yes/No, Win/Tie/Loose), response set elicitation may \textit{decrease} the overall cognitive burden of the rating process by reducing the deliberation required to select a single option when multiple could be viewed as correct. However, in settings with many options, response set elicitation may require a more lengthy process of per-item deliberation --- e.g., by requiring raters to carefully and independently assess each option. While our experiments make the benefits of response set elicitation clear (e.g., Fig. \ref{fig:main_analysis}), future work should explore tradeoffs between the cognitive cost and information gain associated with alternative elicitation regimes.

    \item \textbf{Rating Resource Requirements:} The two points above speak to the \textit{per-item cost} involved with collecting ratings. However, our results also illustrate that the number of ratings-per-item are an important design consideration of LLM-as-a-judge meta-evaluation (Fig. \ref{fig:convergence_plot}). Evaluations with very few ratings per item (e.g., 1-3) fail to capture sufficient information about indeterminacy, resulting in poor judge system selection regardless of the human--judge agreement metric used. This need for multiple ratings is inherent to any approach that aims to capture the distribution of human interpretations, as opposed to a limitation specific to our framework. Importantly, at a fixed rating budget (i.e., ratings-per-item) our proposed approaches—fully specified rating tasks and multi-label agreement metrics—yields to more performant selections of judge systems than the status quo approach that relies on hard aggregation of forced choice ratings with categorical agreement metrics (Fig. \ref{fig:convergence_plot}).

    \item \textbf{Rating Scale Construction:} As noted in $\S$ \ref{sec:conclusion}, our framework is specifically designed for closed form rating tasks with a discrete set of options. As such, is not designed for open form feedback or continuous rating scales. In general, structuring rating task scales with fewer options enables more efficient per-item estimation of the forced choice distribution, response set distribution, and forced choice translation matrix. This is because tasks with fewer options also have a coarser discretization of the forced choice and response set probability space. This introduces a tradeoff between the discretization of the rating scale and the ratings-per-item required for estimation. Exploring the design of rating scales and elicitation techniques under indeterminacy is an interesting avenue for future research. 
    
\end{itemize}

\subsection{Performing Response Set Reconstruction with Pre-collected Forced-Choice Ratings.}\label{subsec:response_set_recovery}

Practitioners who wish to validate judge systems using a dataset with pre-collected forced-choice ratings have two options.

\textbf{Option 1: Perform a sensitivity analysis.} When \emph{no} response set ratings can be collected, practitioners can compare judge systems while systematically varying the relationship between the \emph{known} forced choice distribution and the \emph{unknown} response set distribution:

\begin{enumerate}
    \item Estimate the $i$'th item's forced choice distribution $\hat{\mathbf{O}}_i$ using forced choice ratings. Each entry in this vector denotes the empirical probability of a rater endorsing the $k$'th forced choice option for the $i$'th item.
    
    \item Obtain a set of reconstructed (or simply hypothesized) forced choice translation matrices $\hat{\mathbf{F}}$. For ``reconstruction'', one can consider a range of plausible values of the sensitivity parameter $\beta^H \in [0, 1]$. Tables 4-6 (Appendix \ref{appendix:real_data_experiments}) provide examples of how to construct $\hat{\mathbf{F}}$ from $\beta^H$, where we assume that $\hat{\mathbf{F}}$ is fixed across items.
    
    \item Use the set of reconstructed or hypothesized $\hat{\mathbf{F}}$ to obtain estimates of the response set distribution: $\hat{\boldsymbol{\theta}}_i = \hat{\mathbf{F}}\hat{\boldsymbol{O}}_i$.
\end{enumerate}

The estimated response set distribution can then be used to compute multi-label agreement metrics (see Appendix \ref{appendix:real_data_experiments}). To perform this sensitivity analysis, practitioners then check whether the optimal judge system remains constant across different $\hat{\mathbf{F}}$.

\textbf{Option 2: Estimate the response set distribution.} When a small additional auxiliary corpus (e.g., 20-100 items) of paired forced choice and response set ratings can be collected, practitioners can directly estimate the response set distribution without requiring a sensitivity parameter. This approach involves the following steps:

\begin{enumerate}
    \item Estimate the forced choice translation matrix $\hat{\mathbf{F}}$ using the auxiliary corpus. Each entry of $\hat{\mathbf{F}}$ denotes the empirical probability of a rater endorsing the $v$'th response set given that they selected the $k$'th forced choice response. As above, we assume $\hat{\mathbf{F}}$ is fixed across items.
    
    \item Estimate the forced choice distribution $\hat{\mathbf{O}}_i$ for each item.
    
    \item Use $\hat{\mathbf{F}}$ to estimate the response set distribution: $\hat{\boldsymbol{\theta}}_i = \hat{\mathbf{F}}\hat{\mathbf{O}}_i$.
\end{enumerate}

As with option 1 above, the estimated response set distribution can then be used to compute multi-label agreement metrics. 

\subsection{Threshold Selection for Downstream Task Metrics.}\label{appendix:threshold-selection} 

The threshold parameter $\tau$ used for downstream evaluation tasks is a \emph{policy determination} on the part of the evaluation designer. A small value of $\tau$ (e.g., $.1$-$.3$) denotes that an item should be categorized as positive (e.g., for ``toxicity'') if a small proportion of human raters identify a ``positive'' interpretation in the text. A large value of $\tau$ (e.g., $.8$-$.9$) denotes that many raters need to identify on a ``positive'' interpretation before an item is categorized as positive. Safety-critical applications (e.g., content moderation, identifying physical safety threats) may benefit from lower thresholds to minimize false negatives; applications requiring high precision may prefer higher thresholds. The interpretation of $\tau$ is analogous to the probability cutoff in binary classifiers: selecting very low or very high values is likely to introduce high agreement rates due to most samples being labeled as negative or positive, respectively. Therefore, when $\tau$ is near 0 or 1, metrics such as the false positive rate or false negative rate may be more informative than accuracy-based measures (e.g., hit-rate). Analogously to computing the AUROC in the binary classification setting, we recommend evaluating over a range of plausible cutoffs $\tau$ to determine the effect of the parameter on judge system selection.\looseness=-1

\textbf{Societal Impacts.} In this paper, we introduce a framework for LLM-as-a-judge validation in under rating task indeterminacy. Although our arguments have potential societal consequences, especially if the practices we advocate for see adoption and thus change current GenAI evaluation practice, there are no consequences we feel the need to highlight that are specific to this work rather than applicable to any work aiming to improve upon current evaluation practices. The validation practices described in this paper are not an endorsement for the adoption of a judge system in any particular setting.

\newpage

\section*{NeurIPS Paper Checklist}


\begin{enumerate}

\item {\bf Claims}
    \item[] Question: Do the main claims made in the abstract and introduction accurately reflect the paper's contributions and scope?
    \item[] Answer: \answerYes{} 
    \item[] Justification: The claims made in the abstract and introduction are corroborated via both theoretical and experimental results. The abstract and/or introduction clearly state the claims made, contributions, and important assumptions.
    \item[] Guidelines:
    \begin{itemize}
        \item The answer NA means that the abstract and introduction do not include the claims made in the paper.
        \item The abstract and/or introduction should clearly state the claims made, including the contributions made in the paper and important assumptions and limitations. A No or NA answer to this question will not be perceived well by the reviewers. 
        \item The claims made should match theoretical and experimental results, and reflect how much the results can be expected to generalize to other settings. 
        \item It is fine to include aspirational goals as motivation as long as it is clear that these goals are not attained by the paper. 
    \end{itemize}

\item {\bf Limitations}
    \item[] Question: Does the paper discuss the limitations of the work performed by the authors?
    \item[] Answer: \answerYes{} 
    \item[] Justification: We provide a discussion of our work's limitations in Section \ref{sec:conclusion}. We provide detailed description of our modeling assumptions in Appendix \ref{appendix:theory} and empirically examine their robustness via synthetic experiments in Appendix \ref{appendix:synthetic_data_experiments}.
    \item[] Guidelines:
    \begin{itemize}
        \item The answer NA means that the paper has no limitation while the answer No means that the paper has limitations, but those are not discussed in the paper. 
        \item The authors are encouraged to create a separate "Limitations" section in their paper.
        \item The paper should point out any strong assumptions and how robust the results are to violations of these assumptions (e.g., independence assumptions, noiseless settings, model well-specification, asymptotic approximations only holding locally). The authors should reflect on how these assumptions might be violated in practice and what the implications would be.
        \item The authors should reflect on the scope of the claims made, e.g., if the approach was only tested on a few datasets or with a few runs. In general, empirical results often depend on implicit assumptions, which should be articulated.
        \item The authors should reflect on the factors that influence the performance of the approach. For example, a facial recognition algorithm may perform poorly when image resolution is low or images are taken in low lighting. Or a speech-to-text system might not be used reliably to provide closed captions for online lectures because it fails to handle technical jargon.
        \item The authors should discuss the computational efficiency of the proposed algorithms and how they scale with dataset size.
        \item If applicable, the authors should discuss possible limitations of their approach to address problems of privacy and fairness.
        \item While the authors might fear that complete honesty about limitations might be used by reviewers as grounds for rejection, a worse outcome might be that reviewers discover limitations that aren't acknowledged in the paper. The authors should use their best judgment and recognize that individual actions in favor of transparency play an important role in developing norms that preserve the integrity of the community. Reviewers will be specifically instructed to not penalize honesty concerning limitations.
    \end{itemize}

\item {\bf Theory assumptions and proofs}
    \item[] Question: For each theoretical result, does the paper provide the full set of assumptions and a complete (and correct) proof?
    \item[] Answer: \answerYes{}{} 
    \item[] Justification:  We provide detailed description of our modeling framework and assumptions in Appendix \ref{appendix:theory}. This section also provides detailed proofs for all theorems. 
    \item[] Guidelines:
    \begin{itemize}
        \item The answer NA means that the paper does not include theoretical results. 
        \item All the theorems, formulas, and proofs in the paper should be numbered and cross-referenced.
        \item All assumptions should be clearly stated or referenced in the statement of any theorems.
        \item The proofs can either appear in the main paper or the supplemental material, but if they appear in the supplemental material, the authors are encouraged to provide a short proof sketch to provide intuition. 
        \item Inversely, any informal proof provided in the core of the paper should be complemented by formal proofs provided in appendix or supplemental material.
        \item Theorems and Lemmas that the proof relies upon should be properly referenced. 
    \end{itemize}

    \item {\bf Experimental result reproducibility}
    \item[] Question: Does the paper fully disclose all the information needed to reproduce the main experimental results of the paper to the extent that it affects the main claims and/or conclusions of the paper (regardless of whether the code and data are provided or not)?
    \item[] Answer: \answerYes{} 
    \item[] Justification: We describe our experimental setup in appendices  \ref{appendix:real_data_experiments} and  \ref{appendix:synthetic_data_experiments}.
    \item[] Guidelines:
    \begin{itemize}
        \item The answer NA means that the paper does not include experiments.
        \item If the paper includes experiments, a No answer to this question will not be perceived well by the reviewers: Making the paper reproducible is important, regardless of whether the code and data are provided or not.
        \item If the contribution is a dataset and/or model, the authors should describe the steps taken to make their results reproducible or verifiable. 
        \item Depending on the contribution, reproducibility can be accomplished in various ways. For example, if the contribution is a novel architecture, describing the architecture fully might suffice, or if the contribution is a specific model and empirical evaluation, it may be necessary to either make it possible for others to replicate the model with the same dataset, or provide access to the model. In general. releasing code and data is often one good way to accomplish this, but reproducibility can also be provided via detailed instructions for how to replicate the results, access to a hosted model (e.g., in the case of a large language model), releasing of a model checkpoint, or other means that are appropriate to the research performed.
        \item While NeurIPS does not require releasing code, the conference does require all submissions to provide some reasonable avenue for reproducibility, which may depend on the nature of the contribution. For example
        \begin{enumerate}
            \item If the contribution is primarily a new algorithm, the paper should make it clear how to reproduce that algorithm.
            \item If the contribution is primarily a new model architecture, the paper should describe the architecture clearly and fully.
            \item If the contribution is a new model (e.g., a large language model), then there should either be a way to access this model for reproducing the results or a way to reproduce the model (e.g., with an open-source dataset or instructions for how to construct the dataset).
            \item We recognize that reproducibility may be tricky in some cases, in which case authors are welcome to describe the particular way they provide for reproducibility. In the case of closed-source models, it may be that access to the model is limited in some way (e.g., to registered users), but it should be possible for other researchers to have some path to reproducing or verifying the results.
        \end{enumerate}
    \end{itemize}

\item {\bf Open access to data and code}
    \item[] Question: Does the paper provide open access to the data and code, with sufficient instructions to faithfully reproduce the main experimental results, as described in supplemental material?
    \item[] Answer: \answerYes{} 
    \item[] Justification: \lgedit{We release code and data needed to reproduce experiments at \href{https://github.com/lguerdan/indeterminacy}{https://github.com/lguerdan/indeterminacy}}. 
    \item[] Guidelines:
    \begin{itemize}
        \item The answer NA means that paper does not include experiments requiring code.
        \item Please see the NeurIPS code and data submission guidelines (\url{https://nips.cc/public/guides/CodeSubmissionPolicy}) for more details.
        \item While we encourage the release of code and data, we understand that this might not be possible, so “No” is an acceptable answer. Papers cannot be rejected simply for not including code, unless this is central to the contribution (e.g., for a new open-source benchmark).
        \item The instructions should contain the exact command and environment needed to run to reproduce the results. See the NeurIPS code and data submission guidelines (\url{https://nips.cc/public/guides/CodeSubmissionPolicy}) for more details.
        \item The authors should provide instructions on data access and preparation, including how to access the raw data, preprocessed data, intermediate data, and generated data, etc.
        \item The authors should provide scripts to reproduce all experimental results for the new proposed method and baselines. If only a subset of experiments are reproducible, they should state which ones are omitted from the script and why.
        \item At submission time, to preserve anonymity, the authors should release anonymized versions (if applicable).
        \item Providing as much information as possible in supplemental material (appended to the paper) is recommended, but including URLs to data and code is permitted.
    \end{itemize}

\item {\bf Experimental setting/details}
    \item[] Question: Does the paper specify all the training and test details (e.g., data splits, hyperparameters, how they were chosen, type of optimizer, etc.) necessary to understand the results?
    \item[] Answer: \answerYes{} 
    \item[] Justification: We provide a description of our real-world data experimental setup in Appendix \ref{appendix:real_data_experiments} and a detailed description of our synthetic data experimental setup in Appendix \ref{appendix:synthetic_data_experiments}.
    \item[] Guidelines:
    \begin{itemize}
        \item The answer NA means that the paper does not include experiments.
        \item The experimental setting should be presented in the core of the paper to a level of detail that is necessary to appreciate the results and make sense of them.
        \item The full details can be provided either with the code, in appendix, or as supplemental material.
    \end{itemize}

\item {\bf Experiment statistical significance}
    \item[] Question: Does the paper report error bars suitably and correctly defined or other appropriate information about the statistical significance of the experiments?
    \item[] Answer: \answerYes{} 
    \item[] Justification: We report SEM with bootstrap sampling for all experiments. Error bars are clearly included in all plots.
    \item[] Guidelines:
    \begin{itemize}
        \item The answer NA means that the paper does not include experiments.
        \item The authors should answer "Yes" if the results are accompanied by error bars, confidence intervals, or statistical significance tests, at least for the experiments that support the main claims of the paper.
        \item The factors of variability that the error bars are capturing should be clearly stated (for example, train/test split, initialization, random drawing of some parameter, or overall run with given experimental conditions).
        \item The method for calculating the error bars should be explained (closed form formula, call to a library function, bootstrap, etc.)
        \item The assumptions made should be given (e.g., Normally distributed errors).
        \item It should be clear whether the error bar is the standard deviation or the standard error of the mean.
        \item It is OK to report 1-sigma error bars, but one should state it. The authors should preferably report a 2-sigma error bar than state that they have a 96\% CI, if the hypothesis of Normality of errors is not verified.
        \item For asymmetric distributions, the authors should be careful not to show in tables or figures symmetric error bars that would yield results that are out of range (e.g. negative error rates).
        \item If error bars are reported in tables or plots, The authors should explain in the text how they were calculated and reference the corresponding figures or tables in the text.
    \end{itemize}

\item {\bf Experiments compute resources}
    \item[] Question: For each experiment, does the paper provide sufficient information on the computer resources (type of compute workers, memory, time of execution) needed to reproduce the experiments?
    \item[] Answer: \answerYes{} 
    \item[] Justification: We provide information about compute resources required for experiments in Appendix \ref{appendix:real_data_experiments}.
    \item[] Guidelines:
    \begin{itemize}
        \item The answer NA means that the paper does not include experiments.
        \item The paper should indicate the type of compute workers CPU or GPU, internal cluster, or cloud provider, including relevant memory and storage.
        \item The paper should provide the amount of compute required for each of the individual experimental runs as well as estimate the total compute. 
        \item The paper should disclose whether the full research project required more compute than the experiments reported in the paper (e.g., preliminary or failed experiments that didn't make it into the paper). 
    \end{itemize}
    
\item {\bf Code of ethics}
    \item[] Question: Does the research conducted in the paper conform, in every respect, with the NeurIPS Code of Ethics \url{https://neurips.cc/public/EthicsGuidelines}?
    \item[] Answer: \answerYes{} 
    \item[] Justification: Authors have reviews the NeurIPS Code of Ethics and confirm it conforms in every respect.
    \item[] Guidelines:
    \begin{itemize}
        \item The answer NA means that the authors have not reviewed the NeurIPS Code of Ethics.
        \item If the authors answer No, they should explain the special circumstances that require a deviation from the Code of Ethics.
        \item The authors should make sure to preserve anonymity (e.g., if there is a special consideration due to laws or regulations in their jurisdiction).
    \end{itemize}

\item {\bf Broader impacts}
    \item[] Question: Does the paper discuss both potential positive societal impacts and negative societal impacts of the work performed?
    \item[] Answer: \answerYes{} 
    \item[] Justification: Appendix \ref{appendix:implementation} provides a discussion of practical implementation considerations involved with our framework. As part of this discussion, we describe potential limitations associated with our framework.
    \item[] Guidelines:
    \begin{itemize}
        \item The answer NA means that there is no societal impact of the work performed.
        \item If the authors answer NA or No, they should explain why their work has no societal impact or why the paper does not address societal impact.
        \item Examples of negative societal impacts include potential malicious or unintended uses (e.g., disinformation, generating fake profiles, surveillance), fairness considerations (e.g., deployment of technologies that could make decisions that unfairly impact specific groups), privacy considerations, and security considerations.
        \item The conference expects that many papers will be foundational research and not tied to particular applications, let alone deployments. However, if there is a direct path to any negative applications, the authors should point it out. For example, it is legitimate to point out that an improvement in the quality of generative models could be used to generate deepfakes for disinformation. On the other hand, it is not needed to point out that a generic algorithm for optimizing neural networks could enable people to train models that generate Deepfakes faster.
        \item The authors should consider possible harms that could arise when the technology is being used as intended and functioning correctly, harms that could arise when the technology is being used as intended but gives incorrect results, and harms following from (intentional or unintentional) misuse of the technology.
        \item If there are negative societal impacts, the authors could also discuss possible mitigation strategies (e.g., gated release of models, providing defenses in addition to attacks, mechanisms for monitoring misuse, mechanisms to monitor how a system learns from feedback over time, improving the efficiency and accessibility of ML).
    \end{itemize}
    
\item {\bf Safeguards}
    \item[] Question: Does the paper describe safeguards that have been put in place for responsible release of data or models that have a high risk for misuse (e.g., pretrained language models, image generators, or scraped datasets)?
    \item[] Answer: \answerNA{} 
    \item[] Justification: To our knowledge, our paper poses no such risks. 
    \item[] Guidelines:
    \begin{itemize}
        \item The answer NA means that the paper poses no such risks.
        \item Released models that have a high risk for misuse or dual-use should be released with necessary safeguards to allow for controlled use of the model, for example by requiring that users adhere to usage guidelines or restrictions to access the model or implementing safety filters. 
        \item Datasets that have been scraped from the Internet could pose safety risks. The authors should describe how they avoided releasing unsafe images.
        \item We recognize that providing effective safeguards is challenging, and many papers do not require this, but we encourage authors to take this into account and make a best faith effort.
    \end{itemize}

\item {\bf Licenses for existing assets}
    \item[] Question: Are the creators or original owners of assets (e.g., code, data, models), used in the paper, properly credited and are the license and terms of use explicitly mentioned and properly respected?
    \item[] Answer: \answerYes{} 
    \item[] Justification: We use open source datasets in our evaluation and credit authors inline with the licensing requirements. See Section \ref{sec:experiments}.
    \item[] Guidelines: 
    \begin{itemize}
        \item The answer NA means that the paper does not use existing assets.
        \item The authors should cite the original paper that produced the code package or dataset.
        \item The authors should state which version of the asset is used and, if possible, include a URL.
        \item The name of the license (e.g., CC-BY 4.0) should be included for each asset.
        \item For scraped data from a particular source (e.g., website), the copyright and terms of service of that source should be provided.
        \item If assets are released, the license, copyright information, and terms of use in the package should be provided. For popular datasets, \url{paperswithcode.com/datasets} has curated licenses for some datasets. Their licensing guide can help determine the license of a dataset.
        \item For existing datasets that are re-packaged, both the original license and the license of the derived asset (if it has changed) should be provided.
        \item If this information is not available online, the authors are encouraged to reach out to the asset's creators.
    \end{itemize}

\item {\bf New assets}
    \item[] Question: Are new assets introduced in the paper well documented and is the documentation provided alongside the assets?
    \item[] Answer: \answerNA{} 
    \item[] Justification:  The paper does not release new assets.
    \item[] Guidelines:
    \begin{itemize}
        \item The answer NA means that the paper does not release new assets.
        \item Researchers should communicate the details of the dataset/code/model as part of their submissions via structured templates. This includes details about training, license, limitations, etc. 
        \item The paper should discuss whether and how consent was obtained from people whose asset is used.
        \item At submission time, remember to anonymize your assets (if applicable). You can either create an anonymized URL or include an anonymized zip file.
    \end{itemize}

\item {\bf Crowdsourcing and research with human subjects}
    \item[] Question: For crowdsourcing experiments and research with human subjects, does the paper include the full text of instructions given to participants and screenshots, if applicable, as well as details about compensation (if any)? 
    \item[] Answer: \answerNA{} 
    \item[] Justification: The paper does not involve crowdsourcing nor research with human subjects.
    \item[] Guidelines:
    \begin{itemize}
        \item The answer NA means that the paper does not involve crowdsourcing nor research with human subjects.
        \item Including this information in the supplemental material is fine, but if the main contribution of the paper involves human subjects, then as much detail as possible should be included in the main paper. 
        \item According to the NeurIPS Code of Ethics, workers involved in data collection, curation, or other labor should be paid at least the minimum wage in the country of the data collector. 
    \end{itemize}

\item {\bf Institutional review board (IRB) approvals or equivalent for research with human subjects}
    \item[] Question: Does the paper describe potential risks incurred by study participants, whether such risks were disclosed to the subjects, and whether Institutional Review Board (IRB) approvals (or an equivalent approval/review based on the requirements of your country or institution) were obtained?
    \item[] Answer:\answerNA{} 
    \item[] Justification: The paper does not involve crowdsourcing nor research with human subjects.
    \item[] Guidelines:
    \begin{itemize}
        \item The answer NA means that the paper does not involve crowdsourcing nor research with human subjects.
        \item Depending on the country in which research is conducted, IRB approval (or equivalent) may be required for any human subjects research. If you obtained IRB approval, you should clearly state this in the paper. 
        \item We recognize that the procedures for this may vary significantly between institutions and locations, and we expect authors to adhere to the NeurIPS Code of Ethics and the guidelines for their institution. 
        \item For initial submissions, do not include any information that would break anonymity (if applicable), such as the institution conducting the review.
    \end{itemize}

\item {\bf Declaration of LLM usage}
    \item[] Question: Does the paper describe the usage of LLMs if it is an important, original, or non-standard component of the core methods in this research? Note that if the LLM is used only for writing, editing, or formatting purposes and does not impact the core methodology, scientific rigorousness, or originality of the research, declaration is not required.
    \item[] Answer: \answerNA{} 
    \item[] Justification: LLMs were used for writing, editing, or formatting purposes and do not impact the core methodology, scientific rigorousness, or originality of the research.
    \item[] Guidelines:
    \begin{itemize}
        \item The answer NA means that the core method development in this research does not involve LLMs as any important, original, or non-standard components.
        \item Please refer to our LLM policy (\url{https://neurips.cc/Conferences/2025/LLM}) for what should or should not be described.
    \end{itemize}

\end{enumerate}



\newpage

\end{document}